\definecolor{tabblue}{HTML}{5555CC}
\newtheorem{prop}{Proposition}
\newtheorem{theorem}{Theorem}
\newtheorem*{p1}{Proposition~\ref{prop:ssm_companion_expressiveness_td_bluf}}
\newtheorem*{p2}{Proposition~\ref{prop:continuous_ssm_no_ar_td_bluf}}
\DeclareMathOperator{\st}{subject~to}
\DeclareRobustCommand{\Arrow}[1][]{%
\check@mathfonts
\if\relax\detokenize{#1}\relax
\settowidth{\dimen@}{$\m@th\rightarrow$}%
\else
\setlength{\dimen@}{#1}%
\fi
\sbox\z@{\usefont{U}{lasy}{m}{n}\symbol{41}}%
\begin{picture}(\dimen@,\ht\z@)
\roundcap
\put(\dimexpr\dimen@-.7\wd\z@,0){\usebox\z@}
\put(0,\fontdimen22\textfont2){\line(1,0){\dimen@}}
\end{picture}%
}
\newcommand{\cK}{\mathcal{K}}
\newcommand{\zA}{\boldsymbol{A}}
\newcommand{\zB}{\boldsymbol{B}}
\newcommand{\zC}{\boldsymbol{C}}
\newcommand{\zD}{\boldsymbol{D}}
\newcommand{\zK}{\boldsymbol{K}}
\newcommand{\zI}{\boldsymbol{I}}
\newcommand{\zS}{\boldsymbol{S}}
\newcommand{\zF}{\boldsymbol{F}}
\newcommand{\zu}{\boldsymbol{u}}
\newcommand{\R}{\mathbb{R}}
\DeclareMathAlphabet{\nummathbb}{U}{BOONDOX-ds}{m}{n}
\DeclareRobustCommand\widecheck[1]{{\mathpalette\@widecheck{#1}}}
\def\@widecheck#1#2{%
    \setbox\z@\hbox{\m@th$#1#2$}%
    \setbox\tw@\hbox{\m@th$#1%
       \widehat{%
          \vrule\@width\z@\@height\ht\z@
          \vrule\@height\z@\@width\wd\z@}$}%
    \dp\tw@-\ht\z@
    \@tempdima\ht\z@ \advance\@tempdima2\ht\tw@ \divide\@tempdima\thr@@
    \setbox\tw@\hbox{%
       \raise\@tempdima\hbox{\scalebox{1}[-1]{\lower\@tempdima\box
\tw@}}}%
    {\ooalign{\box\tw@ \cr \box\z@}}}
\newcommand{\ourmethod}{\textsc{SpaceTime}}
\newcommand{\ourmethodunit}{\textsc{SpaceTime} layer}
\newcommand{\numberMonashTasks}{34}  
\newcommand{\tempcite}[1]{\textcolor{red}{(cite)}}
\newcommand{\header}[1]{\textbf{#1.}}
\DeclareRobustCommand\onedot{\futurelet\@let@token\@onedot}
\def\@onedot{\ifx\@let@token.\else.\null\fi\xspace}
\def\eg{\emph{e.g.},} 
\def\ie{\emph{i.e.},} 
\def\cf{\emph{c.f.},} 
\def\st{\emph{s.t}\onedot}
\pgfplotsset{compat=1.17} 
\pgfplotsset{
        table/search path={figures/drawings},
    }
\useunder{\uline}{\ul}{}
\definecolor{Gray}{gray}{0.90}  
\definecolor{LightCyan}{rgb}{0.7,1,1}
\definecolor{White}{rgb}{1,1,1}
\newcolumntype{a}{>{\columncolor{Gray}}c}
\newcolumntype{b}{>{\columncolor{LightCyan}}c}
\newcolumntype{n}{>{\columncolor{White}}c}
\newcommand{\fcircle}[2][red,fill=red]{\tikz[baseline=-0.5ex]\draw[#1,radius=#2] (0,0.03) circle ;}
\newcommand{\xmark}{\ding{55}}%
\newif\ifarxiv
\title{Effectively Modeling Time Series with \\Simple Discrete State Spaces}
\author{Michael Zhang$^*$, Khaled Saab\thanks{ Equal Contribution. Order determined by forecasting competition.}\;, Michael Poli,  Tri Dao, Karan Goel, and Christopher R\'{e} \\
Stanford University \\
\vspace{0.25cm}
\texttt{mzhang@cs.stanford.edu}, \texttt{\{ksaab,poli\}@stanford.edu}, \texttt{\{tridao,kgoel,chrismre\}@cs.stanford.edu}
}
\begin{document}
\maketitle

\doparttoc
\faketableofcontents
\begin{abstract}
Time series modeling is a well-established problem, which often requires that methods (1) expressively represent complicated dependencies, (2) forecast long horizons, and (3) efficiently train over long sequences. 
State-space models (SSMs) are classical models for time series, and prior works combine SSMs with deep learning layers for efficient \emph{sequence modeling}. However, we find fundamental limitations with these prior approaches, proving their SSM representations cannot express  autoregressive time series processes.
%
We thus introduce \textsc{SpaceTime}, a new state-space time series architecture that improves all three criteria. 
For expressivity, we propose a new SSM parameterization based on the \textit{companion matrix}---a canonical representation for discrete-time processes---which enables \ourmethod{}'s SSM layers to learn desirable autoregressive processes.
For long horizon forecasting, we introduce a ``closed-loop'' variation of the companion SSM, which enables \ourmethod{} to predict many future time-steps by generating its own layer-wise inputs.
For efficient training and inference, we introduce an algorithm  
that reduces the memory and compute of a forward pass with the companion matrix. With sequence length $\ell$ and state-space size $d$, we go from $\tilde{O}(d \ell)$ na\"ively to $\tilde{O}(d + \ell)$. 
In experiments, 
%
our contributions lead to state-of-the-art results on extensive and diverse benchmarks, with best or second-best AUROC on 6 / 7 ECG and speech time series classification, and best MSE on 14 / 16 Informer forecasting tasks. 
Furthermore, we find \textsc{SpaceTime} (1) fits AR($p$) processes that prior deep SSMs fail on, (2) forecasts notably more accurately on longer horizons than prior state-of-the-art, and (3) speeds up training on real-world ETTh1 data by 73\% and 80\% relative wall-clock time over Transformers and LSTMs.

\end{abstract}
\section{Introduction}
%
Time series modeling is a well-established problem, with tasks such as forecasting and classification motivated by many domains such as healthcare, finance, and engineering~\citep{shumway2000time}. 
However, effective time series modeling presents several challenges: 
\begin{itemize}[leftmargin=*]
\item First, 
methods should \textbf{expressively} capture complex, long-range, and \emph{autoregressive} dependencies. 
Time series data often reflects higher order dependencies, seasonality, and trends, governing how past samples determine future terms~\citep{chatfield2000time}. 
This motivates many classical approaches 
that model these properties~\citep{box1970time, winters1960forecasting}, alongside expressive deep learning mechanisms such as attention~\citep{vaswani2017attention} and fully connected layers that model interactions between \emph{every} sample in an input sequence~\citep{zeng2022transformers}.
\item Second, 
methods
should be able to forecast a wide range of \textbf{long horizons} over various data domains. 
Reflecting real world demands, popular forecasting benchmarks evaluate methods on
\numberMonashTasks{} different tasks~\citep{godahewa2021monash} and 24$-$960 time-step horizons~\cite{zhou2021informer}. 
Furthermore, as testament to accurately learning time series processes, 
forecasting methods should ideally
also be able to predict future time-steps on horizons they were not explicitly trained on.
%
\item Finally, methods should be \textbf{efficient} with training and inference. Many time series applications require processing very long sequences, \eg{} classifying audio data with sampling rates up to $16{,}000$ Hz \citep{warden2018speech}. 
To handle such settings---where we still need large enough models that can expressively model this data---training and inference should ideally scale \emph{subquadratically} with sequence length and model size in time and space complexity.
\end{itemize}

Unfortunately, existing time series methods struggle to achieve all three criteria.  
Classical methods (\cf{} ARIMA~\citep{box1970time}, exponential smoothing (ETS)~\citep{winters1960forecasting}) often require manual data preprocessing and model selection to identify expressive-enough models. 
%
Deep learning methods commonly train to predict specific horizon lengths, \ie{} as \emph{direct multi-step forecasting}~\citep{https://doi.org/10.1111/j.1467-6419.2007.00518.x}, and we find this hurts their ability to forecast longer horizons (Sec. \ref{sec:empirical_horizons}).  
%
They also face limitations achieving high expressivity \emph{and} efficiency. Fully connected networks (FCNs) such as NLinear \cite{zeng2022transformers} scale quadratically in $\mathcal{O}(\ell h)$ space complexity (with input length $\ell$ and forecast length $h$). 
Recent Transformer-based models reduce this complexity to $\mathcal{O}(\ell + h)$, but do not always outperform the aforementioned fully connected networks on forecasting benchmarks~\citep{liu2022pyraformer, zhou2021informer}. 
%



\begin{figure}[!t]
  \centering
  \includegraphics[width=1\textwidth]{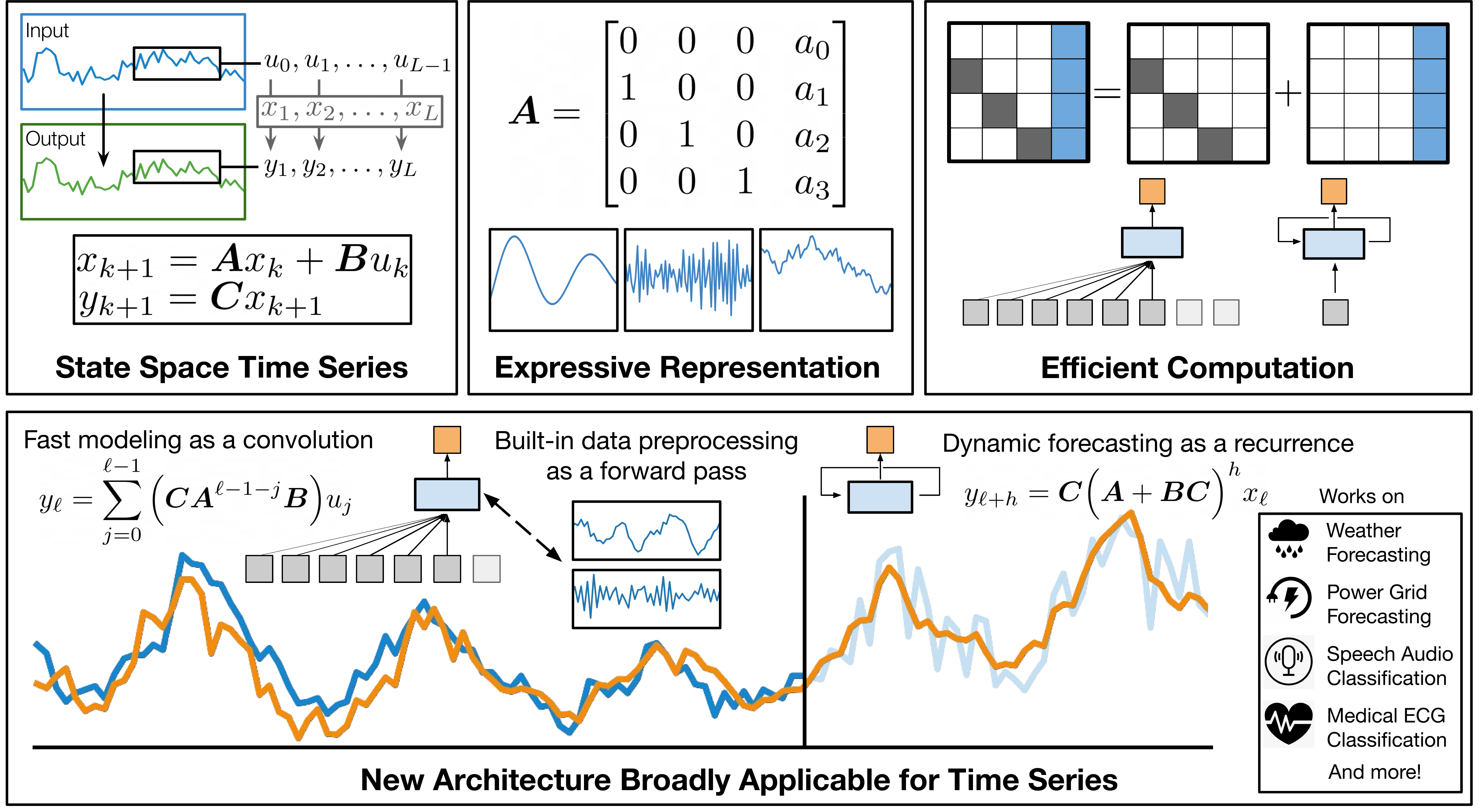}
 \caption{We learn time series processes as state-space models (SSMs) (\textbf{top left}). We represent SSMs with the \textit{companion matrix}, which is a highly expressive representation for discrete time series  (\textbf{top middle}), and compute such SSMs efficiently as convolutions or recurrences via a shift + low-rank decomposition (\textbf{top right}). We use these SSMs to build \ourmethod{}, a new time series architecture broadly effective across tasks and domains (\textbf{bottom}).}
  \label{fig:overvew_fig1}
\end{figure}

%
%

We thus propose \textbf{\textsc{SpaceTime}}, a deep state-\textbf{space} architecture for effective \textbf{time} series modeling. 
To achieve this,
we focus on improving each criteria via three core contributions:

\begin{enumerate}[topsep=0pt,leftmargin=*]
    \item For expressivity, our key idea and building block is a linear layer that models time series processes as \emph{state-space models} (SSMs) via the \emph{companion matrix} (Fig.~\ref{fig:overvew_fig1}). 
    We start with SSMs due to their connections to both classical time series analysis~\citep{kalman1960new, hamilton1994state} and recent deep learning advances~\citep{gu2021efficiently}. Classically, many time series models such as ARIMA and exponential smoothing (ETS) can be expressed as SSMs~\citep{box1970time, winters1960forecasting}. 
    Meanwhile, recent state-of-the-art deep sequence models~\citep{gu2021efficiently} have used SSMs to outperform Transformers and LSTMs on challenging long-range benchmarks~\citep{tay2020long}.
    Their primary innovations show how to formulate SSMs as neural network parameters that are practical to train. However, we find limitations with these deep SSMs for time series data. While we build on their advances, we prove that these prior SSM representations~\citep{ gu2021combining, gu2021efficiently, gupta2022diagonal}
    cannot capture autoregressive processes fundamental for time series. We thus specifically propose the companion matrix representation for its expressive and memory-efficient properties. 
    We prove that the companion matrix SSM recovers fundamental autoregressive (AR) and smoothing processes modeled in classical techniques such as ARIMA and ETS, while only requiring $\mathcal{O}(d)$ memory to represent an $\mathcal{O}(d^2)$ matrix. 
    Thus, \ourmethod{} inherits the benefits of prior SSM-based sequence models, while introducing improved expressivity that 
    recovers fundamental time series processes
    simply through its layer weights. 
    
    \item 
    For forecasting long horizons, we introduce a new ``closed-loop'' view of SSMs. Prior deep SSM architectures either apply the SSM as an ``open-loop'' \citep{gu2021efficiently}, where fixed-length inputs necessarily generate same-length outputs, or use closed-loop autoregression where final layer outputs are fed through the \emph{entire} network as next-time-step inputs~\citep{goel2022s}. 
    We describe issues with both approaches in Sec.~\ref{sec:forecasting_ssm}, and instead achieve autogressive forecasting in a deep network with only a single SSM layer. We do so by explicitly training the SSM layer to predict its next time-step \emph{inputs}, alongside its usual outputs. This allows the SSM to recurrently generate its own future inputs that lead to desired outputs---\ie{} those that match an observed time series---so we can forecast over many future time-steps without explicit data inputs. 

    \item For efficiency, we introduce an algorithm for efficient training and inference with the companion matrix SSM. We 
    exploit the companion matrix's structure as a ``shift plus low-rank'' matrix, which allows us to reduce the time and space complexity for computing SSM hidden states and outputs from $\tilde{\mathcal{O}}(d \ell )$ to $\tilde{\mathcal{O}}(d + \ell)$ in SSM state size $d$ and input sequence length $\ell$. 
\end{enumerate}


In experiments, 
%
we find \ourmethod{} consistently obtains state-of-the-art or near-state-of-the-art results, achieving best or second-best AUROC on 6 out of 7 ECG and audio speech time series classification tasks, and best mean-squared error (MSE) on 14 out of 16 Informer benchmark forecasting tasks~\citep{zhou2021informer}. \ourmethod{} also sets a new best average ranking across \numberMonashTasks{} tasks on the Monash benchmark~\citep{godahewa2021monash}.  
We connect these gains with improvements on our three effective time series modeling criteria.  %
For expressivity, on synthetic ARIMA processes \ourmethod{} learns AR processes that prior deep SSMs cannot. 
%
%
%
For long horizon forecasting, \ourmethod{} consistently outperforms prior state-of-the-art on the longest horizons by large margins. \ourmethod{} also generalizes better to \emph{new} horizons not used for training.
%
%
For efficiency, on speed benchmarks \ourmethod{} obtains 73\% and 80\% relative wall-clock speedups over parameter-matched Transformers and LSTMs respectively, when training on real-world ETTh1 data.

\section{Preliminaries}
%

\header{Problem setting} We evaluate effective time series modeling with
classification and forecasting tasks. For both tasks, we are given input sequences of $\ell$ ``look-back'' or ``lag'' time series samples $\boldsymbol{u}_{t - \ell: t - 1} = (u_{t - \ell}, \ldots, u_{t - 1}) \in \mathbb{R}^{\ell \times m}$ for sample feature size $m$. 
For classification, we aim to classify the sequence as the true class $y$ out of  possible classes $\mathcal{Y}$. 
For forecasting, we aim to correctly predict $H$ future time-steps over a ``horizon'' $\boldsymbol{y}_{t, t + h - 1} = (u_{t}, \ldots, u_{t + h - 1}) \in \mathbb{R}^{h \times m}$.

\header{State-space models for time series} 
We build on the discrete-time state-space model (SSM), which maps observed inputs $u_k$ to hidden states $x_k$, before projecting back to observed outputs $y_k$
\begin{align}
    x_{k+1} &= \zA x_k + \zB u_k  \label{eq:discrete_ssm_state} \\
    y_k &= \zC x_k + \zD u_k \label{eq:discrete_ssm_output}
\end{align}
where $\zA\in\R^{d\times d}$, $\zB\in\R^{d \times m}$, $\zC\in\R^{m' \times d}$, and $\zD\in\R^{m' \times m}$. 
%
For now,  
we stick to \emph{single-input single-output} conventions where $m, m' = 1$, and let $\zD = 0$. 
To model time series in the single SSM setting, we treat $\boldsymbol{u}$ and $\boldsymbol{y}$ as copies of the same process, such that  
\begin{equation}
    y_{k + 1} = u_{k + 1} = \zC(\zA x_k + \zB u_k)
\label{eq:input_output_ts_equal}
\end{equation}
We can thus learn a time series SSM by treating $\zA, \zB, \zC$ as black-box parameters in a neural net layer, \ie{} by updating $\zA, \zB, \zC$ via gradient descent \st{} with input $u_k$ and state $x_k$ at time-step $k$, following (\ref{eq:input_output_ts_equal}) predicts $\hat{y}_{k + 1}$ that matches the next time-step sample $y_{k + 1} = u_{k + 1}$.
This SSM framework and modeling setup is similar to prior works~\citep{gu2021combining, gu2021efficiently}, which adopt a similar interpretation of inputs and outputs being derived from the ``same'' process, \eg{} for language modeling. Here we study and improve this framework for time series modeling.
As extensions, in Sec.~\ref{sec:expressive_ssm_with_companion} we show how (\ref{eq:discrete_ssm_state}) and (\ref{eq:discrete_ssm_output}) express univariate time series with the right $\zA$ representation.
%
In Sec.~\ref{sec:method_spacetime_layer} we discuss the multi-layer setting, where layer-specific $\boldsymbol{u}$ and $\boldsymbol{y}$ now differ, and we only model first layer inputs and last layer outputs as copies of the same time series process.

\section{Method: \ourmethod{}}
\label{sec:method_spacetime}

We now present \ourmethod{}, a deep architecture that uses structured state-spaces for more effective time-series modeling.
\ourmethod{} is a standard multi-layer encoder-decoder sequence model, built as a stack of repeated layers that each parametrize multiple SSMs. We designate the last layer as the ``decoder'', and prior layers as ``encoder'' layers. 
Each encoder layer processes an input time series sample as a sequence-to-sequence map. The decoder layer then takes the encoded sequence representation as input and outputs a prediction (for classification) or sequence (for forecasting). 

Below we expand on our contributions that allow \ourmethod{} to improve expressivity, long-horizon forecasting, and efficiency of time series modeling.
In Sec.~\ref{sec:expressive_ssm_layer}, we present our key building block, a layer that parametrizes the \emph{companion matrix} SSM (companion SSM) for expressive autoregressive modeling. 
%
In Sec.~\ref{sec:forecasting_ssm}, we introduce a specific instantiation of the companion SSM to flexibly forecast long horizons. 
%
In Sec.~\ref{sec:efficient_algorithm}, we provide an efficient inference algorithm that allows \ourmethod{} to train and predict over long sequences in sub-quadratic time and space complexity. 
%

\begin{figure}[!t]
  \centering
    \includegraphics[width=1\textwidth]{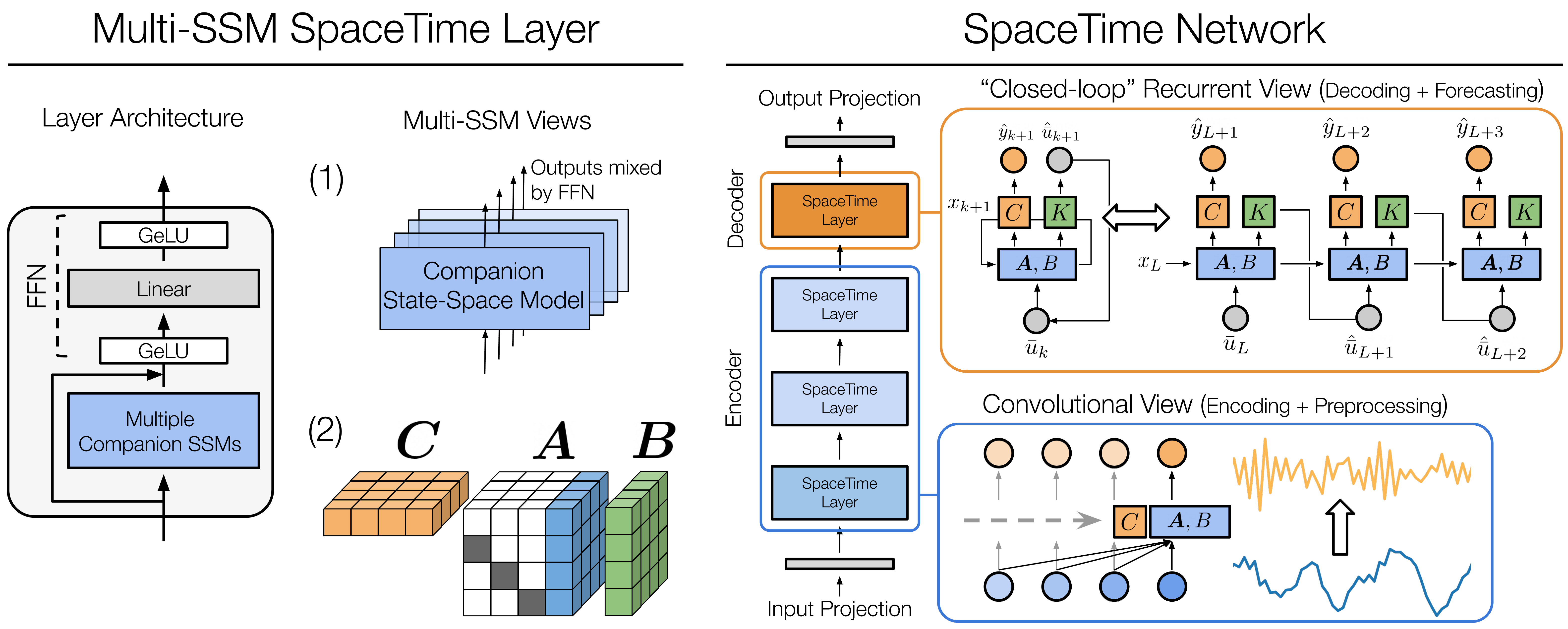}
  \caption{ \textbf{\ourmethod{} architecture and components}. \textbf{(Left):} Each \ourmethodunit{} carries weights that model multiple companion SSMs, followed optionally by a nonlinear FFN. The SSMs are learned in parallel (1) and computed as a single matrix multiplication (2). 
  \textbf{(Right):} We stack these layers into a \ourmethod{} network, where
  earlier layers compute SSMs as convolutions for fast sequence-to-sequence modeling and data preprocessing, while a decoder layer computes SSMs as recurrences for dynamic forecasting.}
  \label{fig:arch_overview} 
\end{figure}

\subsection{The Multi-SSM \ourmethodunit{}}

\label{sec:expressive_ssm_layer}
We discuss our first core contribution and key building block of our model, the \ourmethodunit{}, which captures the \emph{companion SSM}'s expressive properties, and prove that the SSM represents multiple fundamental processes. 
To scale up this expressiveness in a neural architecture, we then go over how we represent and compute multiple SSMs in each \ourmethodunit{}. We finally 
show how the companion SSM's expressiveness allows us to build in various time series data preprocessing operations in a \ourmethodunit{} via different weight initializations of the same layer architecture. 

\subsubsection{Expressive State-Space Models with the Companion Matrix}
\label{sec:expressive_ssm_with_companion}
For expressive time series modeling, our SSM parametrization represents the state matrix $\zA$ as a companion matrix.
Our key motivation is that $\zA$ should allow us to capture autoregressive relationships between a sample $u_k$ and various past samples $u_{k - 1}, u_{k - 2}, \ldots, u_{k - n}$.
Such dependencies are a basic yet essential premise for time series modeling; they underlie many fundamental time series processes, \eg{} those captured by standard ARIMA models.
For example, consider the simplest version of this, where $u_k$ is a linear combination of $p$ prior samples
(with coefficients $\phi_1, \ldots, \phi_p$)
\begin{equation}
    u_k = \phi_1 u_{k - 1} + \phi_2 u_{k - 2} + \ldots \phi_p u_{k - p}
\label{eq:ar_p_model}
\end{equation}
\ie{} a noiseless, unbiased $\text{AR}(p)$ process in standard ARIMA time series analysis~\citep{box1970time}.

To allow (\ref{eq:input_output_ts_equal}) to express (\ref{eq:ar_p_model}), we need the hidden state $x_{k}$ to carry information about past samples.
However, while setting the state-space matrices as trainable neural net weights may suggest we can learn arbitrary task-desirable $\zA$ and $\zB$ via supervised learning, prior work showed this could not be done without restricting $\zA$ to specific classes of matrices~\citep{gu2021combining, gupta2022diagonal}.  

Fortunately, we find that a class of relatively simple $\zA$ matrices suffices. We propose to set $\zA \in \mathbb{R}^{d \times d}$ as the $d \times d$ \emph{companion matrix}, a square matrix of the form:
\begin{equation}
    (\textbf{Companion Matrix})
    \;\;\;
    \zA = 
    \begin{bmatrix}
    0  & 0  & \ldots & 0 & a_{0} \\
    1 & 0  &\ldots & 0 & a_{1} \\ 
    0 & 1 & \ldots & 0 & a_{2} \\
    \vdots &    & \ddots  & \vdots  & \vdots \\
    0 & 0  & \ldots & 1 & a_{d - 1} \\
    \end{bmatrix}
    \;\;\;
    \text{\ie{}}
    \;\;\;
    \zA_{i, j} = 
    \begin{cases}
    1 & \text{for } i - 1 = j\\
    a_i & \text{for } j = d - 1\\
    0 & \text{otherwise}
    \end{cases}
\label{eq:companion_matrix}
\end{equation}
Then simply letting state dimension $d = p$, assuming initial hidden state $x_0 = 0$, and setting 
\[
a := 
\begin{bmatrix}
a_0 & a_1 & \ldots & a_{d-1}
\end{bmatrix}^T
= \boldsymbol{0}, 
\;\; 
\zB = 
\begin{bmatrix}
1 & 0 & \ldots & 0
\end{bmatrix}^T, 
\;\; 
\zC = 
\begin{bmatrix}
\phi_1 & \ldots & \phi_p
\end{bmatrix}
\]
allows the discrete SSM in (\ref{eq:discrete_ssm_state}, \ref{eq:discrete_ssm_output}) to recover the $\text{AR}(p)$ process in (\ref{eq:ar_p_model}). 
We next extend this result in Proposition~\ref{prop:ssm_companion_expressiveness_td_bluf}, proving in App.~\ref{appendix:theory} that setting $\zA$ as the companion matrix allows the SSM to recover a wide range of fundamental time series and dynamical system processes beyond the $\text{AR}(p)$ process.

\begin{prop}
    A companion state matrix SSM can represent ARIMA \citep{box1970time}, exponential smoothing \citep{winters1960forecasting,holt2004forecasting}, and controllable linear time--invariant systems \citep{chen1984linear}.
\label{prop:ssm_companion_expressiveness_td_bluf}
\end{prop}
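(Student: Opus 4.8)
The plan is to unify all three claims under one classical fact: a linear system in \emph{controllable canonical form} --- whose state matrix is exactly a companion matrix of the type in~(\ref{eq:companion_matrix}) --- realizes an arbitrary proper rational transfer function, and conversely every finite-dimensional controllable system is similar to such a form. Concretely, I would first record the realization lemma: for any monic degree-$d$ polynomial $\lambda^d - a_{d-1}\lambda^{d-1} - \cdots - a_1\lambda - a_0$ and any numerator coefficients $(c_0,\dots,c_{d-1})$, the triple $(\zA,\zB,\zC)$ with $\zA$ the companion matrix~(\ref{eq:companion_matrix}), $\zB = e_1$, and $\zC = (c_0,\dots,c_{d-1})$ has transfer function $\zC(z\zI-\zA)^{-1}\zB$ equal to the prescribed proper rational function. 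This is the direct extension of the AR($p$) computation already carried out below~(\ref{eq:ar_p_model}), where the numerator is constant and $\zC = (\phi_1,\dots,\phi_p)$ up to index ordering; the verification is the usual cofactor expansion along the last column of $z\zI-\zA$, which I would not write out in full.

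Given the lemma, \textbf{ARIMA} follows in two steps. An ARMA($p,q$) process $\phi(B)y_k = \theta(B)\epsilon_k$ has (innovations-to-output) transfer function $\theta(z^{-1})/\phi(z^{-1})$, which is proper rational of order $\max(p,q)$ --- strictly proper when $q<p$, so $\zD=0$ suffices, and otherwise one allows the feedthrough $\zD$ --- hence the lemma yields a companion-form SSM. For the integrated case, $d$ orders of differencing simply multiply the AR polynomial by $(1-B)^d$, i.e.\ ARIMA($p,d,q$) is ARMA($p+d,q$), still covered by the lemma; equivalently one can fold the differencing into fixed $\zB,\zC$ as a linear preprocessing map, consistent with the fixed-weight ``preprocessing'' SSMs described in Sec.~\ref{sec:expressive_ssm_layer}. \textbf{Exponential smoothing} I would handle by reduction to the ARIMA case, invoking the known equivalences between linear innovations-form ETS models and ARIMA models (simple exponential smoothing is ARIMA($0,1,1$), Holt's linear trend is ARIMA($0,2,2$), additive Holt--Winters with period $s$ is ARIMA($0,1,s+1$)); alternatively I would exhibit the companion state directly by stacking level/trend/seasonal components and checking the recursion matches~(\ref{eq:companion_matrix}). \textbf{Controllable LTI systems} are then immediate: by definition the controllability matrix $[\zB\ \zA\zB\ \cdots\ \zA^{d-1}\zB]$ is invertible, and the classical change of basis by this matrix (up to a coordinate reversal) conjugates $(\zA,\zB)$ to controllable canonical form with a companion state matrix and $\zB=e_1$; since the input--output map is invariant under similarity, the companion SSM realizes the same system.

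The main obstacle is bookkeeping rather than conceptual. First, the matrix in~(\ref{eq:companion_matrix}) is the ``lower'' companion form (ones on the subdiagonal, coefficients in the last column), which is the transpose, up to reversing coordinates, of the textbook controllable canonical form; I must pin down which of the two standard companion conventions it is and carry the corresponding index reversal through the realization lemma so that the $a_i$ in~(\ref{eq:companion_matrix}) line up with the AR coefficients and the $c_i$ with the MA coefficients. Second, I should state explicitly the sense in which a \emph{deterministic} SSM~(\ref{eq:discrete_ssm_state})--(\ref{eq:discrete_ssm_output}) ``represents'' a \emph{stochastic} process: the claim is that it matches the process's one-step recursion / rational transfer function (driving $u_k$ by the innovations, or in the self-feeding setup of~(\ref{eq:input_output_ts_equal}) matching the deterministic part of the dynamics), and I would fix this convention up front. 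Third, the ETS reduction requires the mild genericity caveats under which the ETS-to-ARIMA correspondence holds (parameter ranges keeping the implied MA polynomial well defined); the direct-construction route sidesteps this at the cost of a short per-model calculation for SES, Holt, and Holt--Winters.
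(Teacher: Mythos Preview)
Your proposal is correct and covers the same ground, but the route differs from the paper's in two of the three parts. For controllable LTI systems you and the paper do exactly the same thing: invert the controllability (Krylov) matrix $\cK(\zA,\zB)=[\zB,\zA\zB,\dots,\zA^{d-1}\zB]$ and read off $\cK^{-1}\zA\cK$ as a companion matrix. For ARMA, you argue once via the realization lemma (companion form realizes any proper rational transfer function), whereas the paper instead writes out two explicit companion-form constructions: one where $u$ is the lag-$1$ shift of $y$ so the AR and MA recursions merge into a single difference equation, and one where the AR and MA parts are realized by two separate companion SSMs and summed. Your transfer-function route is cleaner and handles the $q\geq p$ case via $\zD$ in one line; the paper's explicit matrices make the connection to the architecture's actual parameters more visible. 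For exponential smoothing, you invoke the standard ETS-to-ARIMA equivalences (SES $=$ ARIMA$(0,1,1)$, Holt $=$ ARIMA$(0,2,2)$, additive Holt--Winters $=$ ARIMA$(0,1,s{+}1)$), which is more general; the paper only treats SES, and does so by writing it as a \emph{finite} AR$(p)$ with geometrically decaying coefficients $\alpha(1-\alpha)^{i-1}$, i.e.\ a truncation of the infinite AR expansion rather than the exact ARIMA$(0,1,1)$ form. Your bookkeeping caveats about the lower-companion convention and the deterministic-versus-stochastic reading of ``represent'' are well taken; the paper is silent on both and simply matches recursions.
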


%
As a result, by training neural network layers that parameterize the companion SSM, we provably enable these layers to learn the ground-truth parameters for multiple time series processes. In addition, as we only update $a\in \mathbb{R}^{d}$ (\ref{eq:companion_matrix}), we can efficiently scale the hidden-state size to capture more expressive processes with only $O(d)$ parameters. 
Finally, by learning multiple such SSMs in a single layer, and stacking multiple such layers, we can further scale up expressivity in a deep architecture. 

\header{Prior SSMs are insufficient} We further support the companion SSM by proving that existing related SSM representations used in 
\cite{gu2021efficiently, gupta2022diagonal, smith2022simplified, alcaraz2022diffusion}
\emph{cannot} capture the simple yet fundamental $\text{AR}(p)$ process. Such works, including S4 and S4D, build on the \emph{Linear State-Space Layer} (LSSL)~\citep{gu2021combining}, and cannot represent AR processes due to their continuous-time or diagonal parametrizations of $\zA$.
\begin{prop}
    No class of continuous-time LSSL SSMs can represent the noiseless $\text{AR}(p)$ process.  
\label{prop:continuous_ssm_no_ar_td_bluf}
\end{prop}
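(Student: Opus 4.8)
The plan is to recast ``representing the noiseless $\text{AR}(p)$ process'' as a condition on the \emph{discrete convolution kernel} induced by the SSM, and then show that the discretization rules and state-matrix parametrizations used by LSSL-type models make that kernel unattainable. Following the input--output coupling in~(\ref{eq:input_output_ts_equal}), any single-input single-output, $\zD=0$ SSM with discretized matrices $\bar{\zA},\bar{\zB},\zC$ acts as the linear time-invariant filter $\hat u_k=\sum_{j\ge1} g_j\, u_{k-j}$ with Markov parameters $g_j=\zC\bar{\zA}^{\,j-1}\bar{\zB}$. Matching~(\ref{eq:ar_p_model}) for a genuine order-$p$ process ($\phi_p\neq0$) means the kernel is the length-$p$ finite impulse response $(g_1,\dots,g_p,0,0,\dots)=(\phi_1,\dots,\phi_p,0,0,\dots)$; its $z$-transform $\phi_1 z^{-1}+\cdots+\phi_p z^{-p}$ has numerator with nonzero constant term $\phi_p$ and denominator $z^p$, so it is coprime with all $p$ poles at the origin. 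Hence every state-space realization of it --- in particular any LSSL one --- must have the restriction of $\bar{\zA}$ to its reachable-and-observable subspace be a $p\times p$ nilpotent matrix; equivalently, every ``active'' eigenvalue of $\bar{\zA}$ is $0$ (and that subspace is nontrivial since some $\phi_i\neq0$). So it suffices to prove no continuous-time LSSL SSM admits such a $\bar{\zA}$.

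Next I would rule this out discretization rule by discretization rule. Under zero-order hold, $\bar{\zA}=\exp(\Delta\zA)$, whose restriction to any $\bar{\zA}$-invariant subspace is again a matrix exponential and hence invertible; an invertible matrix of dimension $\ge1$ is never nilpotent, so the required $p\times p$ nilpotent block ($p\ge1$) cannot occur. Under the (generalized) bilinear transform $\bar{\zA}=(\zI-\alpha\Delta\zA)^{-1}\bigl(\zI+(1-\alpha)\Delta\zA\bigr)$ with $\Delta>0,\ \alpha\in[0,1]$, an eigenvalue of $\bar{\zA}$ equals $0$ only if the corresponding eigenvalue of the continuous $\zA$ equals the single real value $-1/((1-\alpha)\Delta)$, and $\bar{\zA}$ nilpotent of index $p$ would require $\zA$ to carry a size-$p$ Jordan block at exactly that point. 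The state matrices used by LSSL, S4 and S4D --- HiPPO and normal-plus-low-rank matrices, or diagonal matrices with strictly negative real part enforced through an $\exp$ parametrization --- are chosen precisely so that the discretized system is stable with long memory, which keeps $\bar{\zA}$ diagonalizable with nonzero spectrum; in particular it is never a nonzero nilpotent. Combining the two cases, no LSSL-class SSM realizes the length-$p$ FIR kernel, which is the claim.

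For the diagonal parametrizations (S4D, DSS, S5) there is a cleaner self-contained version worth recording: if $\bar{\zA}$ is diagonalizable with (possibly complex) eigenvalues $\mu_1,\dots,\mu_d$ then $g_j=\sum_i c_i\,\mu_i^{\,j-1}$ for scalars $c_i$, and a nontrivial combination of geometric sequences with distinct ratios vanishes for all large $j$ only if every coefficient attached to a nonzero $\mu_i$ is zero --- leaving a kernel supported on $j=1$ alone. So for any $p\ge2$ the kernel $(\phi_1,\dots,\phi_p)$ with $\phi_p\neq0$ is out of reach, independently of the discretization. The step I expect to be the real obstacle is the reduction in the first paragraph: over a finite look-back window of length $\ell>p$, $\text{AR}(p)$ trajectories span only a $p$-dimensional subspace of $\R^{\ell}$, so strictly speaking the kernel need only agree with $(\phi_1,\dots,\phi_p)$ on that subspace rather than identically. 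I would discharge this by adopting the same notion of ``representing a process'' used for the companion result in Proposition~\ref{prop:ssm_companion_expressiveness_td_bluf} --- agreement for every window length and every index $k$ --- under which the kernel is pinned to the exact $\text{AR}(p)$ coefficients, so that the spectral arguments above apply verbatim.
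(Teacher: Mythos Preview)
Your argument is correct and takes a genuinely different route from the paper's. The paper's proof is a five-line contradiction: it writes the AR($p$) realization in autonomous form as $x_{t+1}=\zS x_t$ with $\zS$ the shift matrix, asserts that a continuous-time LSSL discretizes to $x_{t+1}=e^{\zA}x_t$, and then observes that $e^{\zA}=\zS$ would force $\zA=\log\zS$, which cannot exist because $\zS$ is singular. That is it --- the paper compares state matrices directly in one specific realization and one specific discretization (ZOH), and does not address whether a \emph{different} realization or a bilinear transform could reproduce the same input--output behaviour.

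You instead work at the level of the input--output map: you identify the AR($p$) kernel as an FIR with all poles at the origin, invoke minimal-realization theory to force nilpotency of $\bar{\zA}$ on the controllable-and-observable subspace, and then rule out nilpotency case-by-case for ZOH, the generalized bilinear transform, and the diagonal parametrizations. This is strictly more general and closes the gap the paper leaves open (why must $\bar{\zA}$ equal $\zS$ rather than merely be similar to it, or merely agree on the reachable subspace?). Your explicit treatment of the diagonal case via geometric-sequence independence is also a nice addition with no analogue in the paper. The cost is length and the need to appeal to realization theory; the paper's version is short precisely because it silently fixes the realization and the discretization rule. Your acknowledged caveat about finite look-back windows is the right thing to flag, and your resolution --- requiring agreement for all $k$ and all window lengths, consistent with how Proposition~\ref{prop:ssm_companion_expressiveness_td_bluf} is interpreted --- matches the paper's implicit convention.
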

We defer the proof to App.~\ref{appendix:companion_expressivity}.
In Sec.~\ref{sec:empirical_expressivity}, we empirically support this analysis, showing that these prior SSMs fit synthetic AR processes less accurately than the companion SSM. This suggests the companion matrix resolves a fundamental limitation in related work for time series.
%


\subsubsection{Layer Architecture and Multi-SSM Computation}
\label{sec:method_spacetime_layer}

\header{Architecture} 
To capture and scale up the companion SSM's expressive and autoregressive modeling capabilities, we model multiple companion SSMs in each \ourmethodunit{}'s weights.
\ourmethodunit{}s are similar to prior work such as LSSLs, with $\zA$, $\zB$, $\zC$ as trainable weights, and $\zD$ added back as a skip connection. To model multiple SSMs, we add a dimension to each matrix. For $s$ SSMs per \ourmethodunit{}, we specify weights $\zA \in \mathbb{R}^{s \times d \times d}$, $\zB \in \mathbb{R}^{d \times s}$, and $\zC \in \mathbb{R}^{s \times d}$. Each slice in the $s$ dimension represents an individual SSM. We thus compute $s$ outputs and hidden states in parallel by following (\ref{eq:discrete_ssm_state}) and (\ref{eq:discrete_ssm_output}) via simple matrix multiplications on standard GPUs.

To model dependencies across individual SSM outputs, 
we optionally follow each \ourmethodunit{} with a one-layer nonlinear feedforward network (FFN). The FFN thus mixes the $m$ outputs across a \ourmethodunit{}'s SSMs, allowing subsequent layers to model dependencies across SSMs. 


\header{Computation} 
To compute the companion SSM, we could use the recurrence in (\ref{eq:discrete_ssm_state}). However, this sequential operation is slow on modern GPUs, which parallelize matrix multiplications.
Luckily, as described in \cite{gu2021efficiently} we can also compute the SSM as a \textbf{1-D convolution}. This enables parallelizable inference and training. 
To see how, note that given a sequence with at least $k$ inputs and hidden state $x_0 = 0$, the hidden state and output at time-step $k$ by induction are:

\begin{equation}
x_{k} = \sum_{j = 0}^{k - 1}\zA^{k - 1 - j} \zB u_j \;\;\;\text{and}\;\;\;
    y_k = \sum_{j = 0}^{k - 1} \zC \zA^{k - 1 - j} \zB u_j 
\label{eq:convolution_view}
\end{equation}

We can thus compute hidden state $x_k$ and output $y_k$ as 1-D convolutions with ``filters'' as
\begin{align}
    \zF^x &= (\zB, \zA \zB, \zA^2 \zB, \ldots, \zA^{\ell - 1} \zB) & (\text{Hidden State Filter}) \label{eq:state_filter}\\
    \zF^y &= (\zC\zB, \zC\zA \zB, \zC\zA^2 \zB, \ldots, \zC\zA^{\ell - 1} \zB) & (\text{Output Filter}) \label{eq:output_filter} \\
\centering
    x_k &= (\zF^x * \zu)[k] \;\;\;\text{and}\;\;\; y_k = (\zF^y * \zu)[k] \label{eq:convolution_output}
\end{align}
So when we have inputs available for each output (\ie{} equal-sized input and output sequences) 
we can obtain outputs by first computing output filters $\zF^y$ (\ref{eq:output_filter}), and then computing outputs efficiently with the Fast Fourier Transform (FFT). We thus compute each encoder SSM as a convolution.

For now we note two caveats. Having inputs for each output is not always true, \eg{} with long horizon forecasting. Efficient inference also importantly requires that $\zF^y$ can be computed efficiently, but this is not necessarily trivial for time series: we may have long input sequences with large $k$. 

Fortunately we later provide solutions for both. In Sec.~\ref{sec:forecasting_ssm}, we show how to predict output samples many time-steps ahead of our last input sample via a ``closed-loop'' forecasting SSM. In Sec.~\ref{sec:efficient_algorithm} we show how to compute both hidden state and output filters efficiently over long sequences via an efficient inference algorithm that handles the repeated powering of $\zA^k$.

\subsubsection{Built-in Data Preprocessing with Companion SSMs}
\label{sec:preprocessing_ssms}

We now show how beyond autoregressive modeling, the companion SSM also enables \ourmethodunit{}s to do
standard data preprocessing techniques used to handle nonstationarities. 
%
%
Consider differencing and smoothing, two classical techniques to handle nonstationarity and noise:
\[
u_k' = u_{k} - u_{k - 1}\; \text{(1st-order differencing)} \;\Big{\lvert{}}\;
u_k' = \frac{1}{n} \sum_{i = 0}^{n - 1} u_{k - i} \; \text{($n$-order moving average smoothing)}
\]
%
%
%
We explicitly build these preprocessing operations into a \ourmethodunit{} by simply initializing companion SSM weights. Furthermore, by specifying weights for multiple SSMs, we simultaneously perform preprocessing with various orders in one forward pass. 
We do so by setting $\boldsymbol{a} = \boldsymbol{0}$ and $\zB = [1, 0, \ldots, 0]^T$, such that SSM outputs via the convolution view (\ref{eq:convolution_view}) are simple sliding windows / $1$-D convolutions with filter determined by $\zC$. We can then recover
arbitrary $n$-order differencing or average smoothing via $\zC$ weight initializations, \eg{} (see App.~\ref{appendix:specific_ssm_parameterizations} for more examples),
\begin{align}
    \zC = 
    \begin{bmatrix}
    1 & -2 & 1 & \phantom{-}0 & 0 & \ldots & 0 \\
    1/n & \ldots & 1/n & \phantom{-}0 & 0 & \ldots & 0 \\
    \end{bmatrix}
    \;\;\;\;\;\;
    \begin{matrix}
    \hfill\text{ (2nd-order differencing)} \\
    \hfill\text{ ($n$-order moving average smoothing)}
    \end{matrix}
\end{align}
\subsection{Long Horizon Forecasting with Closed-loop SSMs}
\label{sec:forecasting_ssm}

We now discuss our second core contribution, which enables long horizon forcasting. Using a slight variation of the companion SSM, we allow the same constant size \ourmethod{} model to forecast over many horizons. This \emph{forecasting SSM} recovers the flexible and stateful inference of RNNs, while retaining the faster parallelizable training of computing SSMs as convolutions. 


\header{Challenges and limitations} For forecasting, a model must process an input lag sequence of length $\ell$ and output a forecast sequence of length $h$, where $h \neq \ell$ necessarily. 
Many state-of-the-art neural nets thus train by specifically predicting $h$-long targets given $\ell$-long inputs. 
However, in Sec.~\ref{sec:empirical_horizons} we find this hurts transfer to new horizons in other models, as they only train to predict specific horizons. 
Alternatively, we could output horizons autoregressively through the network similar to stacked RNNs as in \textsc{SaShiMi}~\citep{goel2022s} or DeepAR \citep{salinas2020deepar}. However, we find this can still be relatively inefficient, as it requires passing states to each layer of a deep network.

\header{Closed-loop SSM solution} Our approach is similar to autoregression, but \emph{only} applied at a single \ourmethodunit{}. We treat the inputs and outputs as \emph{distinct} processes in a multi-layer network, and add another matrix $\zK$  to each decoder SSM to model future \emph{input} time-steps explicitly.
Letting $\bar{\boldsymbol{u}} = (\bar{u}_0, \ldots, \bar{u}_{\ell - 1})$ be the input sequence to a decoder SSM and $\boldsymbol{u} = (u_0, \ldots, u_{\ell - 1})$ be the original input sequence,
we jointly train
$\zA, \zB, \zC, \zK$ such that $x_{k + 1} = \zA x_{k} + \zB \bar{u}_{k}$, and
\begin{align}
    \hat{y}_{k + 1} &= \zC x_{k + 1} 
    \>\>\>\>\>\>\>\>\>\>\>\>\>\>\>\>\>\>\>\>\>\>\>\>\>\>\>
    \text{(where $\hat{y}_{k + 1} = y_{k + 1} = u_{k + 1}$)} \label{eq:control_ssm_state}  \\
    \hat{\bar{u}}_{k + 1} &= \zK x_{k + 1} 
    \>\>\>\>\>\>\>\>\>\>\>\>\>\>\>\>\>\>\>\>\>\>\>\>\>\>
    \text{(where $\hat{\bar{u}}_{k + 1} = \bar{u}_{k + 1}$)} \label{eq:control_ssm_next_input}
\end{align}
We thus train the decoder \ourmethodunit{} to explicitly model its own next time-step inputs with $\zA, \zB, \zK$, and model its next time-step outputs (\ie{} future time series samples) with $\zA, \zB, \zC$. 
For forecasting, we first process the lag terms via (\ref{eq:control_ssm_state}) and (\ref{eq:control_ssm_next_input}) as convolutions
\begin{equation}
    x_{k} =  \textstyle \sum_{j = 0}^{k - 1}\zA^{k - 1 - j} \zB u_j \;\;\;\text{and}\;\;\;
    \hat{\bar{u}}_{k} = \zK \textstyle  \sum_{j = 0}^{k - 1}\zA^{k - 1 - j} \zB \bar{u}_j
\label{eq:convolution_output_multilayer}
\end{equation}
for $k \in [0,\ell - 1]$.
To forecast $h$ future time-steps, with last hidden state $x_{\ell}$ we first predict future input $\hat{\bar{u}}_{\ell}$ via (\ref{eq:control_ssm_next_input}). Plugging this back into the SSM and iterating for $h- 1$ future time-steps leads to
\begin{align}
x_{\ell +i} &= (\zA+\zB \zK)^i x_\ell\;\; \text{ for }\;\; i = 1, \ldots, h-1 \\
\Rightarrow (y_{\ell}, \ldots, y_{\ell + h- 1}) &= \big(\zC(\zA + \zB \zK)^ix_\ell \big)_{i \in [h - 1]} \label{eq:krylov_recurrent_output_multilayer} 
\end{align}
We can thus use Eq. \ref{eq:krylov_recurrent_output_multilayer} to get future outputs without sequential recurrence, using the same FFT operation as for Eq.~\ref{eq:output_filter},~\ref{eq:convolution_output}.
This flexibly recovers $\mathcal{O}(\ell + h)$ time complexity for forecasting $h$ future time-steps,
assuming that powers $(\zA + \zB \zK)^h$ are taken care of. 
Next, we derive an efficient matrix powering algorithm to take care of this powering and enable fast training and inference in practice. 


%
\subsection{Efficient Inference with the Companion SSM}
\label{sec:efficient_algorithm}

We finally discuss our third contribution, where we derive an algorithm for efficient training and inference with the companion SSM.
To motivate this section, we note that prior efficient algorithms to compute powers of the state matrix $\zA$ were only proposed to handle specific classes of $\zA$, and do not apply to the companion matrix~\citep{gu2021efficiently, goel2022s, gu2022parameterization}. 


Recall from Sec.~\ref{sec:method_spacetime_layer} that 
for a sequence of length $\ell$, we want to construct the output filter
$\zF^y = (\zC \zB, \dots, \zC\zA^{\ell-1}\zB)$, where $\zA$ is a $d \times d$
companion matrix and $\zB, \zC$ are $d \times 1$ and $1 \times d$ matrices.
Na\"ively, we could use sparse matrix multiplications to compute powers $\zC \zA^j \zB$ for $j=0, \dots, \ell-1$ sequentially. As $\zA$ has $O(d)$
nonzeros, this would take $O(\ell d)$ time. 
We instead derive an algorithm that constructs this filter in $O(\ell \log \ell + d \log d)$
time.
The main idea is that rather than computing the filter directly, we can compute its spectrum (its discrete Fourier transform) more easily, \ie{} 
\begin{equation*}
  \tilde{\zF}^y[m] := {\cal F}(\zF^y) = \textstyle \sum_{j=0}^{\ell-1} \zC \zA^j \omega^{mj} \zB = \zC(\zI - \zA^\ell) (\zI - \zA \omega^m)^{-1} \zB,
  \quad m = 0, 1, \dots, \ell-1.
\end{equation*}
where $\omega = \exp(-2\pi i / \ell)$ is the $\ell$-th root of unity.
This reduces to computing the quadratic form of the resolvent $(\zI - \zA \omega^m)^{-1}$
on the roots of unity (the powers of $\omega$).
Since $\zA$ is a companion matrix, we can write $\zA$ as a shift matrix plus a
rank-1 matrix, $\zA = \zS + a e_d^T$.
Thus Woodbury's formula reduces this computation to the resolvent of a shift
matrix $(\zI - \zS \omega^m)^{-1}$, with a rank-1 correction.
This resolvent can be shown analytically to be a
lower-triangular matrix consisting of roots of unity, and its quadratic form can
be computed by the Fourier transform of a linear convolution of size $d$.
Thus one can construct $\zF_k^y$ by linear convolution and the FFT, resulting in $O(\ell \log \ell + d\log d)$ time.

We validate in 
Sec.~\ref{sec:empirical_efficiency} 
that Algorithm~\ref{alg:output_filter_computation} leads to a wall-clock time speedup of 2$\times$ compared to computing the output filter na\"ively by powering $\zA$.
In App.\ \ref{appendix:efficiency_proofs}, we prove the time complexity $O(\ell \log \ell + d\log d)$ and correctness of Algorithm~\ref{alg:output_filter_computation}. We also provide an extension to the closed-loop SSM, which can also be computed in subquadratic time as $\zA+\zB \zK$ is a shift plus rank-2 matrix. 

\begin{algorithm}[!t]
\caption{Efficient Output Filter $\zF^y$ Computation}\label{alg:output_filter_computation}

\begin{algorithmic}[1]
\Require $\zA$ is a companion matrix parameterized by the last column $a \in \mathbb{R}^d$, $\zB \in \mathbb{R}^d$, $\tilde{\zC} = \zC (\zI - \zA^\ell) \in \mathbb{R}^d$, sequence length $\ell$.
\State Define $\mathrm{quad}(u, v) \in \mathbb{R}^\ell$ for vectors $u, v \in \mathbb{R}^d$: compute $q = u*v$ (linear convolution), zero-pad to length $\ell \lceil d / \ell \rceil$, split into $\lceil d/\ell \rceil$ chunks of size $\ell$ of the form $[q^{(1)}, \dots, q^{(\lceil d/\ell \rceil)}]$ and return the length-$\ell$ Fourier transform of the sum $\mathcal{F}_\ell(q^{(1)} + \dots + q^{(\lceil d/\ell \rceil)})$.
\State Compute the roots of unity $z = [\bar{\omega}^0, \dots, \bar{\omega}^{\ell-1}]$ where $\omega = \exp(-2\pi i / \ell)$.
\State Compute $\tilde{\zF}^y = \mathrm{quad}(\tilde{\zC}, \zB) + \mathrm{quad}(\tilde{\zC}, a) * \mathrm{quad}(e_d, \zB) / (z - \mathrm{quad}(e_d, a)) \in \mathbb{R}^\ell$, where $e_d = [0, \dots, 0, 1]$ is the $d$-th basis vector. 
\State Return the inverse Fourier transform $\zF^y = {\cal F}_\ell^{-1}(\tilde{\zF}^y)$.
\end{algorithmic}
\end{algorithm}
\section{Experiments}
\label{sec:experiments}
%

We test \ourmethod{} on a broad range of time series forecasting and classification tasks. In Sec.~\ref{sec:empirical_realdata}, we evaluate whether \ourmethod{}'s contributions lead to state-of-the-art results on standard benchmarks. To help explain \ourmethod{}'s performance and validate our contributions, in Sec.~\ref{sec:empirical_claims} we then evaluate whether these gains coincide with empirical improvements in expressiveness (Sec.~\ref{sec:empirical_expressivity}), forecasting flexibility (Sec.~\ref{sec:empirical_horizons}), and training efficiency (Sec.~\ref{sec:empirical_efficiency}). 

\subsection{Main Results: Time Series Forecasting and Classification}
\label{sec:empirical_realdata}
For forecasting, we evaluate \ourmethod{} on 40 forecasting tasks from the popular Informer~\citep{zhou2021informer} and Monash~\citep{godahewa2021monash} benchmarks, testing on horizons 8 to 960 time-steps long. For classification, we evaluate \ourmethod{} on seven medical ECG 
or speech audio classification tasks, which test on sequences up to 16,000 time-steps long. For all results, we report mean evaluation metrics over three seeds. \xmark~denotes the method was computationally infeasible on allocated GPUs, \eg{} due to memory constraints (same resources for all methods; see App.~\ref{app:exp_details} for details). App.~\ref{app:exp_details} also contains additional dataset, implementation, and hyperparameter details. 

\header{Informer (forecasting)} 
We report univariate time series forecasting results in Table~\ref{tab:informer-s-long}, 
comparing against recent state-of-the-art methods~\citep{zeng2022transformers, zhou2022film}, related state-space models~\citep{gu2021efficiently}, and other competitive deep architectures.
We include extended results on additional horizons and multivariate forecasting in App.~\ref{appendix:informer_extended}.
%
%
We find \ourmethod{} obtains lowest MSE and MAE on 14 and 11 forecasting settings respectively, 3$\times$ more than prior state-of-the-art. \ourmethod{} also outperforms S4 on 15 / 16 settings, supporting the companion SSM representation.

\header{Monash (forecasting)} We also evaluate on $32$ datasets in the Monash forecasting benchmark
\citep{godahewa2021monash}, spanning domains including finance, weather, and traffic. For space, we report results in Table~\ref{tab:monash} (App.~\ref{app:monash_exps}). We compare against 13 classical and deep learning baselines.  
\ourmethod{} achieves best RMSE on $7$ tasks and sets new state-of-the-art average performance across all $32$ datasets.
\ourmethod{}'s relative improvements also notably grow on long horizon tasks (Fig.~\ref{fig:monash_rankings}).

\begin{table}[]
\caption{ \textbf{Univariate forecasting} results on Informer Electricity Transformer Temperature (ETT) datasets~\cite{zhou2021informer}. \textbf{Best} results in \textbf{bold}. \ourmethod{} results reported as means over three seeds. We include additional datasets, horizons, and method comparisons in App.~\ref{appendix:informer_extended}}
\resizebox{\linewidth}{!}{
\begin{tabular}{@{}c|cbcbcbcbcbcbcbcbc@{}}
\toprule
\multicolumn{2}{c}{Methods}                       & \multicolumn{2}{c}{SpaceTime}   & \multicolumn{2}{c}{NLinear}     & \multicolumn{2}{c}{FILM}        & \multicolumn{2}{c}{S4} & \multicolumn{2}{c}{FedFormer} & \multicolumn{2}{c}{Autoformer} & \multicolumn{2}{c}{Informer} & \multicolumn{2}{c}{ARIMA} \\ \midrule
\multicolumn{2}{c}{Metric}                        & MSE            & MAE            & MSE            & MAE            & MSE            & MAE            & MSE        & MAE       & MSE           & MAE           & MSE            & MAE           & MSE           & MAE          & MSE         & MAE         \\ \midrule
\parbox[t]{2mm}{\multirow{4}{*}{\rotatebox[origin=c]{90}{ETTh1}}} & 96  & 0.054          & 0.181          & \textbf{0.053} & \textbf{0.177} & 0.055          & 0.178          & 0.316      & 0.490     & 0.079         & 0.215         & 0.071          & 0.206         & 0.193         & 0.377        & 0.058       & 0.184       \\
\multicolumn{1}{c|}{}                       & 192 & \textbf{0.066} & 0.207          & 0.069          & \textbf{0.204} & 0.072          & 0.207          & 0.345      & 0.516     & 0.104         & 0.245         & 0.114          & 0.262         & 0.217         & 0.395        & 0.073       & 0.209       \\
\multicolumn{1}{c|}{}                       & 336 & \textbf{0.069} & \textbf{0.212} & 0.081          & 0.226          & 0.083          & 0.229          & 0.825      & 0.846     & 0.119         & 0.270         & 0.107          & 0.258         & 0.202         & 0.381        & 0.086       & 0.231       \\
\multicolumn{1}{c|}{}                       & 720 & \textbf{0.076} & \textbf{0.222} & 0.080          & 0.226          & 0.090          & 0.240          & 0.190      & 0.355     & 0.142         & 0.299         & 0.126          & 0.283         & 0.183         & 0.355        & 0.103       & 0.253       \\ \midrule
\parbox[t]{2mm}{\multirow{4}{*}{\rotatebox[origin=c]{90}{ETTh2}}} & 96  & \textbf{0.119} & \textbf{0.268} & 0.129          & 0.278          & 0.127          & 0.272          & 0.381      & 0.501     & 0.128         & 0.271         & 0.153          & 0.306         & 0.213         & 0.373        & 0.273       & 0.407       \\
\multicolumn{1}{c|}{}                       & 192 & \textbf{0.151} & \textbf{0.306} & 0.169          & 0.324          & 0.182          & 0.335          & 0.332      & 0.458     & 0.185         & 0.330         & 0.204          & 0.351         & 0.227         & 0.387        & 0.315       & 0.446       \\
\multicolumn{1}{c|}{}                       & 336 & \textbf{0.169} & \textbf{0.332} & 0.194          & 0.355          & 0.204          & 0.367          & 0.655      & 0.670     & 0.231         & 0.378         & 0.246          & 0.389         & 0.242         & 0.401        & 0.367       & 0.488       \\
\multicolumn{1}{c|}{}                       & 720 & \textbf{0.188} & \textbf{0.352} & 0.225          & 0.381          & 0.241          & 0.396          & 0.630      & 0.662     & 0.278         & 0.420         & 0.268          & 0.409         & 0.291         & 0.439        & 0.413       & 0.519       \\ \midrule
\parbox[t]{2mm}{\multirow{4}{*}{\rotatebox[origin=c]{90}{ETTm1}}} & 96  & \textbf{0.026} & \textbf{0.121} & \textbf{0.026} & 0.122          & 0.029          & 0.127          & 0.651      & 0.733     & 0.033         & 0.140         & 0.056          & 0.183         & 0.109         & 0.277        & 0.033       & 0.136       \\
\multicolumn{1}{c|}{}                       & 192 & \textbf{0.039} & 0.152          & \textbf{0.039} & \textbf{0.149} & 0.041          & 0.153          & 0.190      & 0.372     & 0.058         & 0.186         & 0.081          & 0.216         & 0.151         & 0.310        & 0.049       & 0.169       \\
\multicolumn{1}{c|}{}                       & 336 & \textbf{0.051} & 0.173          & 0.052          & \textbf{0.172} & 0.053          & 0.175          & 0.428      & 0.581     & 0.084         & 0.231         & 0.076          & 0.218         & 0.427         & 0.591        & 0.065       & 0.196       \\
\multicolumn{1}{c|}{}                       & 720 & 0.074          & 0.213          & 0.073          & 0.207          & \textbf{0.071} & \textbf{0.205} & 0.254      & 0.433     & 0.102         & 0.250         & 0.110          & 0.267         & 0.438         & 0.586        & 0.089       & 0.231       \\ \midrule
\parbox[t]{2mm}{\multirow{4}{*}{\rotatebox[origin=c]{90}{ETTm2}}} & 96  & \textbf{0.060} & \textbf{0.179} & 0.063          & 0.182          & 0.065          & 0.189          & 0.153      & 0.318     & 0.067         & 0.198         & 0.065          & 0.189         & 0.088         & 0.225        & 0.211       & 0.340       \\
\multicolumn{1}{c|}{}                       & 192 & \textbf{0.090} & \textbf{0.222} & \textbf{0.090} & 0.223          & 0.094          & 0.233          & 0.183      & 0.350     & 0.102         & 0.245         & 0.118          & 0.256         & 0.132         & 0.283        & 0.237       & 0.371       \\
\multicolumn{1}{c|}{}                       & 336 & \textbf{0.113} & \textbf{0.255} & 0.117          & 0.259          & 0.124          & 0.274          & 0.204      & 0.367     & 0.130         & 0.279         & 0.154          & 0.305         & 0.180         & 0.336        & 0.264       & 0.396       \\
\multicolumn{1}{c|}{}                       & 720 & \textbf{0.166} & \textbf{0.318} & 0.170          & 0.318          & 0.173          & 0.323          & 0.482      & 0.567     & 0.178         & 0.325         & 0.182          & 0.335         & 0.300         & 0.435        & 0.310       & 0.441       \\ \midrule
\multicolumn{2}{c}{Count}                         & \textbf{14}    & \textbf{11}    & 4              & 4              & 1              & 1              & 0          & 0         & 0             & 0             & 0              & 0             & 0             & 0            & 0           & 0           \\ \bottomrule
\end{tabular}
}
\label{tab:informer-s-long}
\end{table}

\begin{table}[!t]
\begin{minipage}[c]{0.7\textwidth}
\centering
\captionof{table}{ \textbf{ECG statement classification} on PTB-XL (100 Hz version). Baseline AUROC from \cite{strodthoff2021benchmarking} (error bars in App. \ref{appendix:ecg_results}).
}
\resizebox{1\linewidth}{!}{
\begin{tabular}{@{}lcccccc@{}}
\toprule
Task AUROC & All            & Diag           & Sub-diag       & Super-diag     & Form           & Rhythm         \\ \midrule
\textbf{\ourmethod{}}            & {\ul 0.936}    & \textbf{0.941} & \textbf{0.933} & {\ul 0.929} & 0.883          & {\ul 0.967}    \\
{\ul S4  }                 & \textbf{0.938} & {\ul 0.939}    & 0.929          & \textbf{0.931}    & 0.895          & \textbf{0.977} \\
Inception-1D         & 0.925          & 0.931          & {\ul 0.930}    & 0.921          & \textbf{0.899} & 0.953          \\
xRN-101              & 0.925          & 0.937          & 0.929          & 0.928   & {\ul 0.896}    & 0.957          \\
LSTM                 & 0.907          & 0.927          & 0.928          & 0.927          & 0.851          & 0.953          \\
Transformer         & 0.857          & 0.876          & 0.882    & 0.887          & 0.771 & 0.831          \\
Wavelet + NN         & 0.849          & 0.855          & 0.859          & 0.874          & 0.757          & 0.890          \\ \bottomrule
\end{tabular}
}
\label{tab:ptbxl_100}
\end{minipage}
\quad
\begin{minipage}[c]{0.23\textwidth}
\centering
\captionof{table}{\textbf{Speech Audio classification}~\citep{warden2018speech}
}

\resizebox{1\linewidth}{!}{
\begin{tabular}{@{}lc@{}}
\toprule
Method      & Acc. (\%)         \\ \midrule
 \ourmethod{}   & {\ul 97.29}           \\
S4         & \textbf{98.32}        \\
\textsc{LSSL}        & \xmark \\
WaveGan-D   & 96.25                 \\
Transformer & \xmark \\
Performer   & 30.77                 \\
CKConv      & 71.66                 \\ \bottomrule
\end{tabular}
}
\label{tab:speech_commands}
\end{minipage}
\end{table}

\header{ECG (multi-label classification)} Beyond forecasting, we show that \ourmethod{} can also perform state-of-the-art time series classification. To classify sequences, we use the same sequence model architecture in Sec.~\ref{sec:expressive_ssm_layer}. Like prior work~\citep{gu2021efficiently}, we simply use the last-layer FFN to project from number of SSMs to number of classes, and mean pooling over length before a softmax to output class logits. 
In Table~\ref{tab:ptbxl_100}, we find that \ourmethod{} obtains best or second-best AUROC on five out of six tasks, outperforming both general sequence models and specialized architectures. 

\header{Speech Audio (single-label classification)} We further test \ourmethod{} on long-range audio classification on the Speech Commands dataset \citep{warden2018speech}. The task is classifying raw audio sequences of length 16,000 into 10 word classes.
We use the same pooling operation for classification as in ECG. \ourmethod{} outperforms domain-specific architectures, \eg{} WaveGan-D~\citep{donahue2018adversarial}  and efficient Transformers, \eg{} Performer~\citep{choromanski2020rethinking} (Table~\ref{tab:speech_commands}).

\subsection{Improvement on Criteria for Effective Time Series Modeling} 
\label{sec:empirical_claims}

For further insight into \ourmethod{}'s performance, we now validate that our contributions improve expressivity (\ref{sec:empirical_expressivity}), forecasting ability (\ref{sec:empirical_horizons}), and efficiency (\ref{sec:empirical_efficiency}) over existing approaches. 


\subsubsection{Expressivity} \label{sec:empirical_expressivity}

To first study \ourmethod{}'s expressivity, we test how well \ourmethod{} can fit controlled autoregressive processes. 
%
To validate our theory on \ourmethod{}'s expressivity gains in Sec.~\ref{sec:expressive_ssm_layer}, we compare against recent related SSM architectures such as S4~\citep{gu2021efficiently} and S4D~\citep{gu2022parameterization}. 

For evaluation, we generate noiseless synthetic AR($p$) sequences. 
We test if models learn the true process by inspecting whether the trained model weights recover \emph{transfer functions} specified by the AR coefficients~\citep{oppenheim1999discrete}. 
We use simple 1-layer 1-SSM models, with state-space size equal to AR $p$, and predict one time-step given $p$ lagged inputs (the smallest sufficient setting).

In Fig.~\ref{fig:arima_synthetics} we compare the trained forecasts and transfer functions (as frequency response plots) of \ourmethod{}, S4, and S4D models on a relatively smooth AR(4) process and sharp AR(6) process. Our results support the relative expressivity of \ourmethod{}'s companion matrix SSM. While all models accurately forecast the AR(4) time series, only \ourmethod{} recovers the ground-truth transfer functions for both, and notably forecasts the AR(6) process more accurately (Fig.~\ref{fig:arima_synthetics}c, d). 

\begin{figure}[!t]
\centering
\subfigure[ AR(4) Forecast]{
    \includegraphics[width=0.22\textwidth]{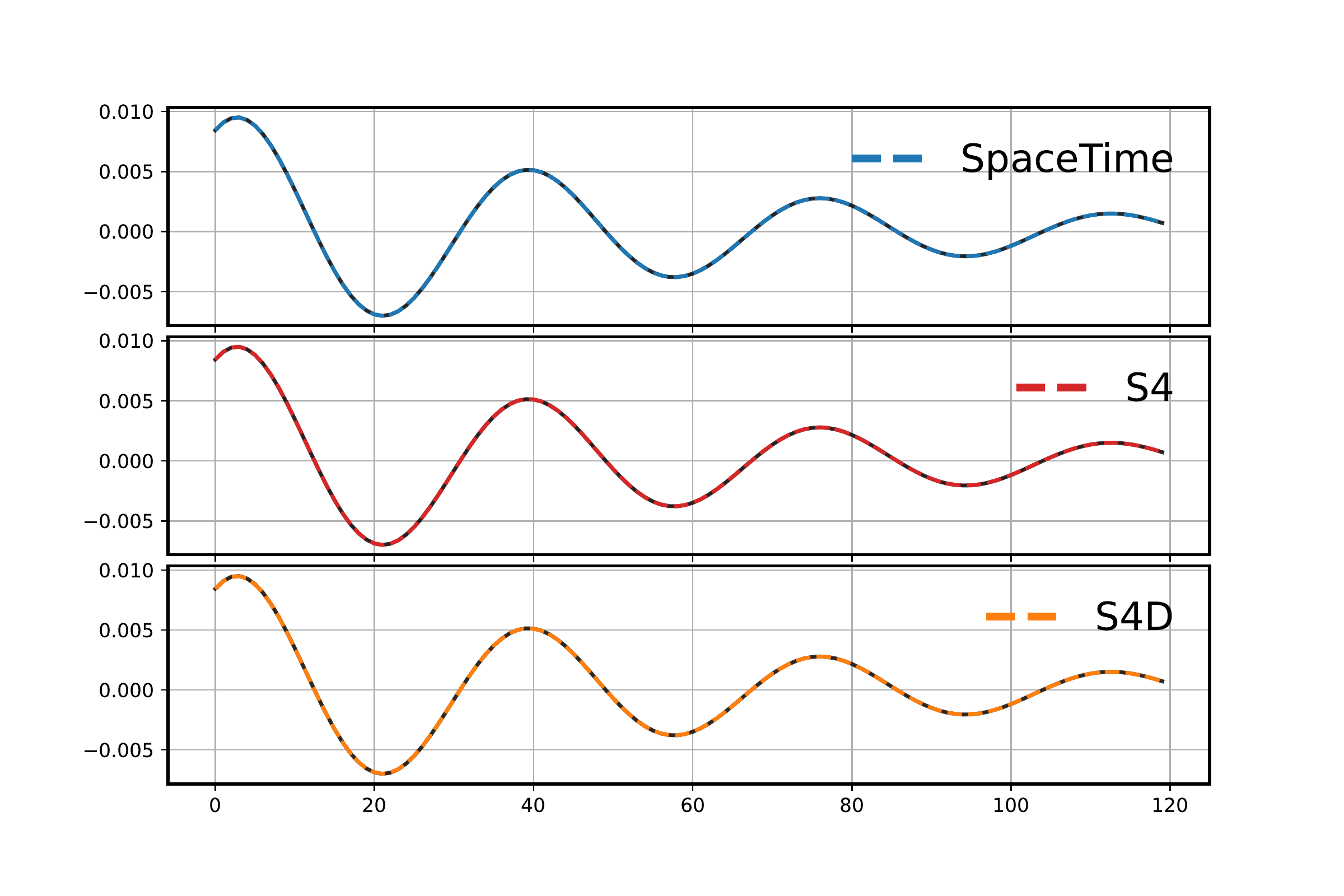}
    \label{fig:arima_synthetic_ar_4_forecast}
}
\subfigure[ AR(4) Transfer Func.]{
    \includegraphics[width=0.22\textwidth]{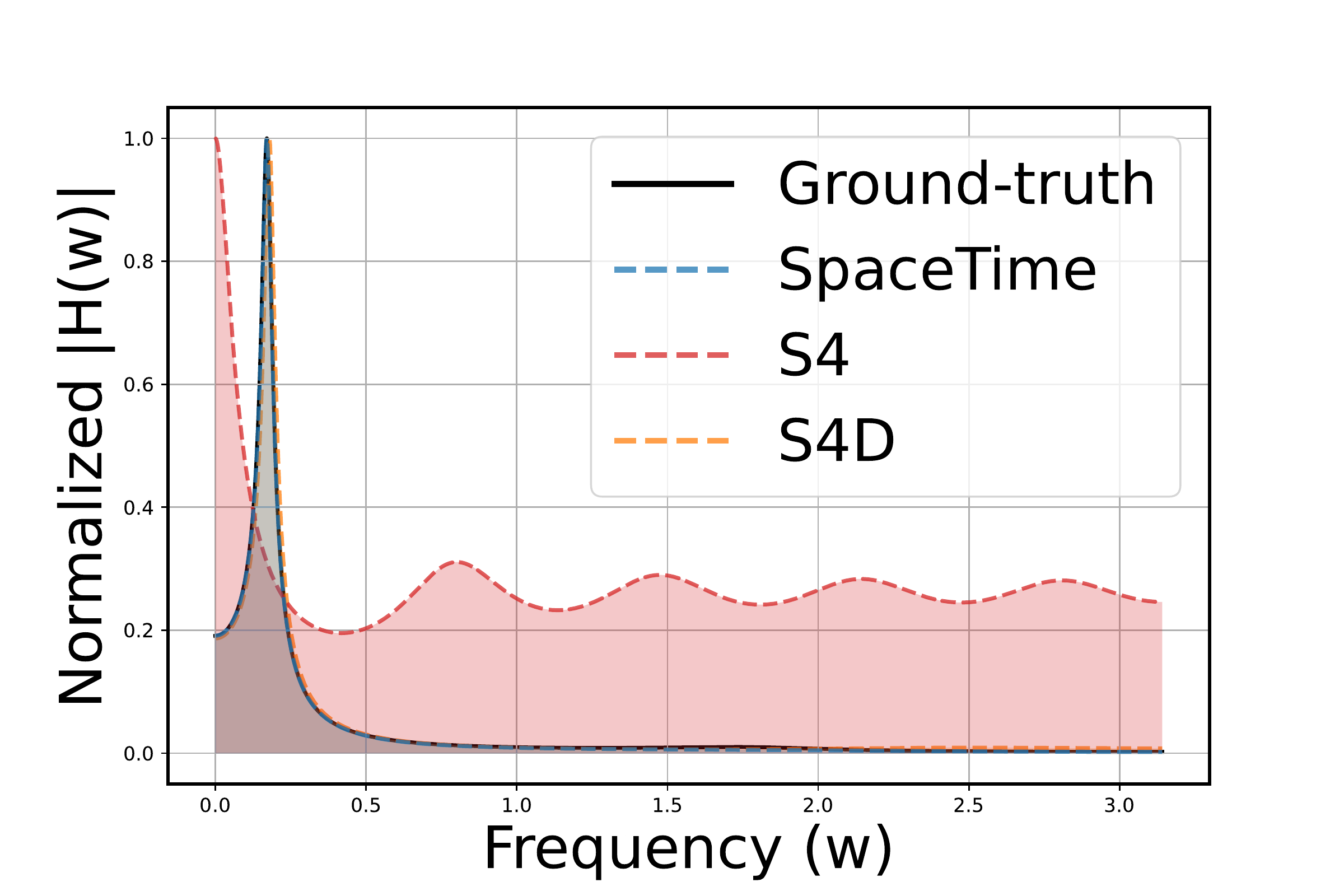}
    \label{fig:arima_synthetic_ar_4_transfer}
}
\subfigure[ AR(6) Forecast]{
    \includegraphics[width=0.22\textwidth]{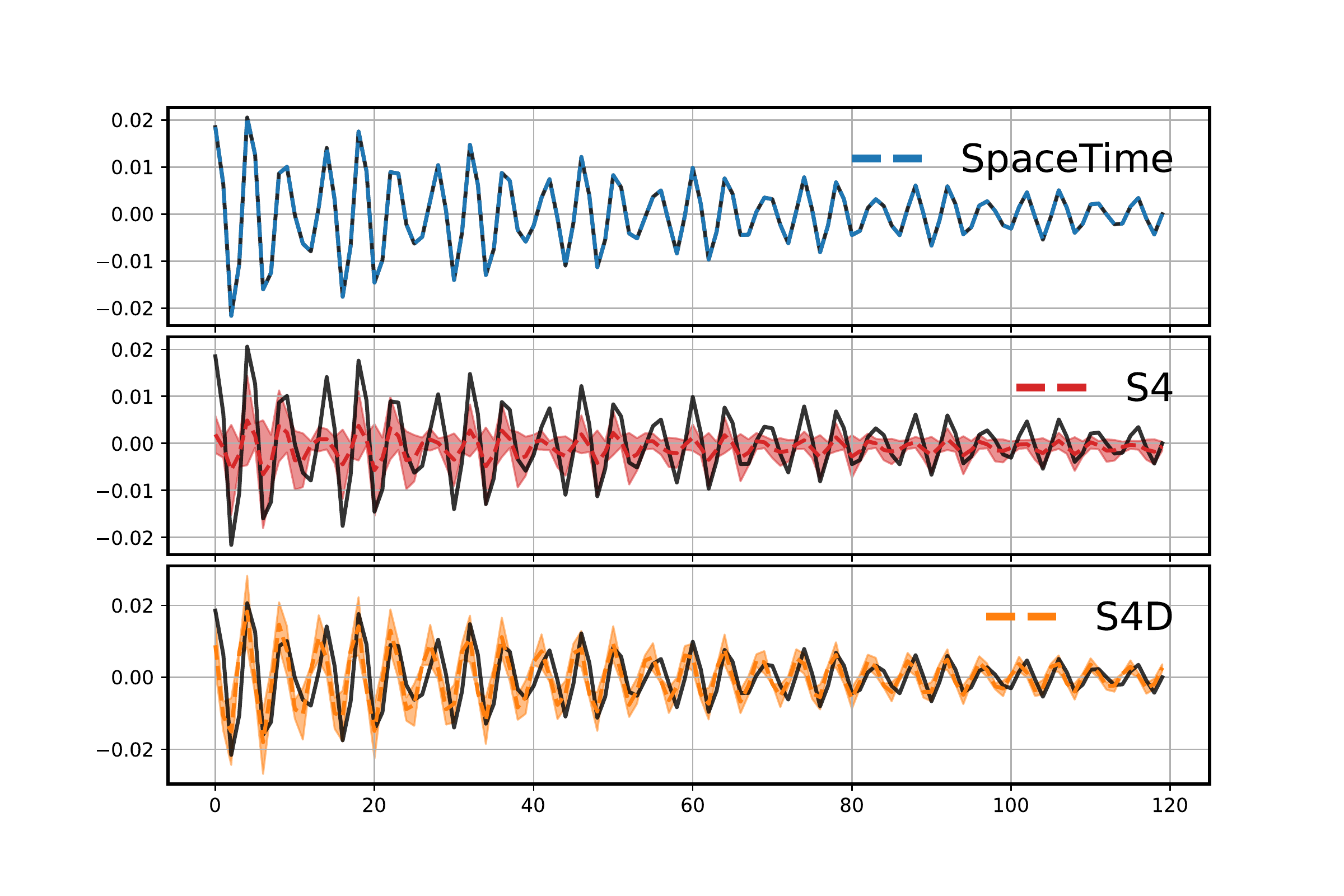}
    \label{fig:arima_synthetic_ar_6_forecast}
}
\subfigure[ AR(6) Transfer Func.]{
    \includegraphics[width=0.22\textwidth]{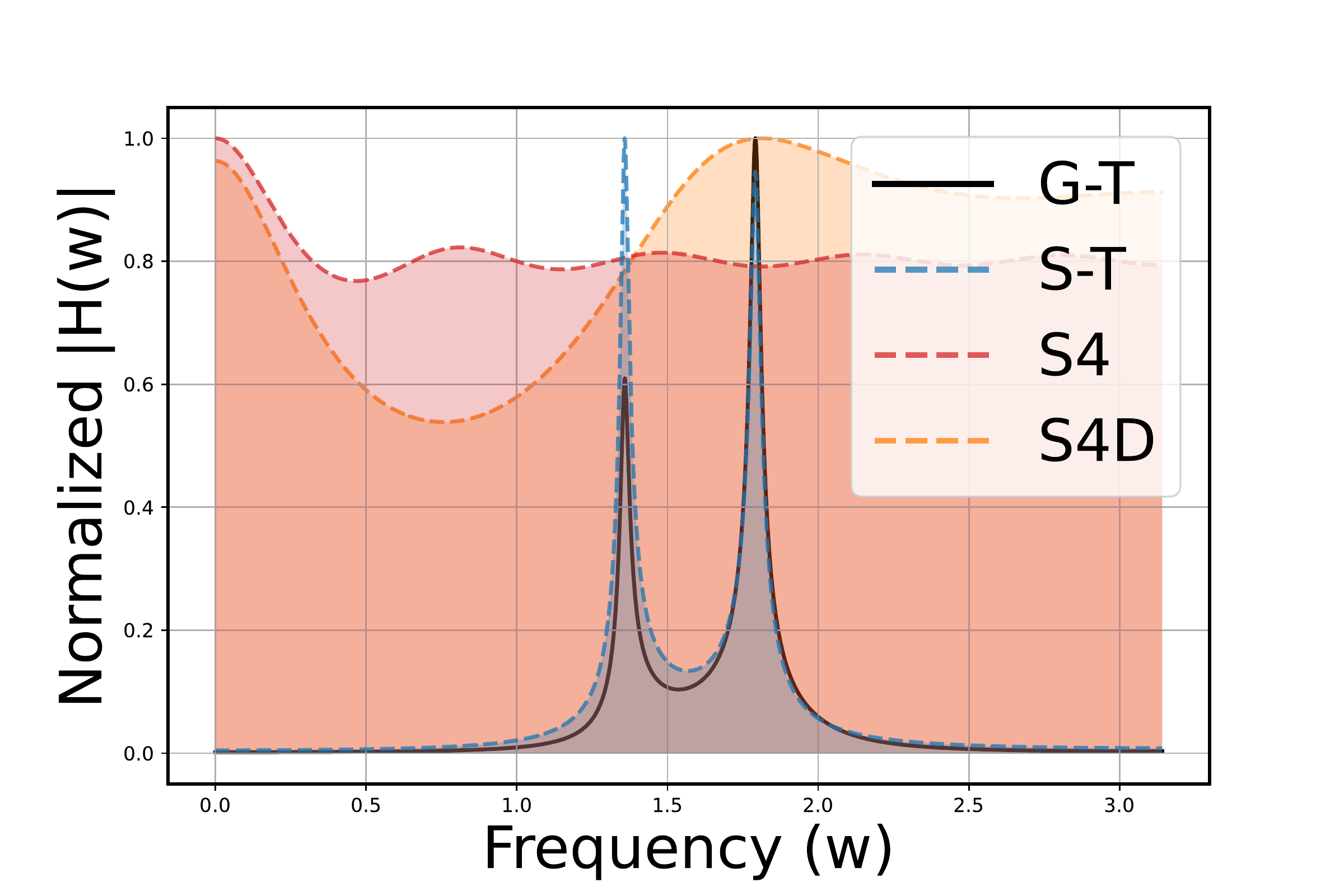}
    \label{fig:arima_synthetic_ar_6_transfer}
}
\caption{ \textbf{AR($p$) expressiveness benchmarks}. \ourmethod{} captures AR($p$) processes more precisely than similar deep SSM models such as S4~\cite{gu2021efficiently} and S4D~\cite{gu2022parameterization}, forecasting future samples and learning ground-truth transfer functions more accurately.}
\label{fig:arima_synthetics}
\end{figure}


\begin{figure}[!t]
\begin{minipage}[c]{0.55\textwidth}
\centering
\captionof{table}{ \textbf{Longer horizon forecasting} on Informer ETTh data. Standardized MSE reported. \ourmethod{} obtains lower MSE when forecasting longer horizons.}\label{tab:long_horizon_etth}
\resizebox{1\linewidth}{!}{
\tabcolsep=0.1cm
  \begin{tabular}{@{}llcccccc@{}}
\toprule
Dataset & Horizon & 720            & 960            & 1080                  & 1440           & 1800                  & 1920                  \\ \midrule
\multirow{2}{*}{ETTh1}      & NLinear                    & 0.080          & 0.089         & 0.085                & 0.094          & 0.102 & 0.104 \\
                            & \ourmethod{}                  & \textbf{0.075} & \textbf{0.074} & \textbf{0.072}        & \textbf{0.080} & \textbf{0.081}        & \textbf{0.088}        \\
                            \midrule
\multirow{2}{*}{ETTh2}      & NLinear                    & 0.224          & 0.273          & 0.290 & 0.329          & 0.450               & 0.493 \\
                            & \ourmethod{}                  & \textbf{0.188} & \textbf{0.225} & \textbf{0.265}        & \textbf{0.299} & 
                            \textbf{0.438}
                            & \textbf{0.459}        \\
                            \bottomrule
\end{tabular}
}
\end{minipage}
\quad
\begin{minipage}[c]{0.45\textwidth}
\centering
\hspace*{-0.5cm}
 \includegraphics[width=1.125\linewidth]{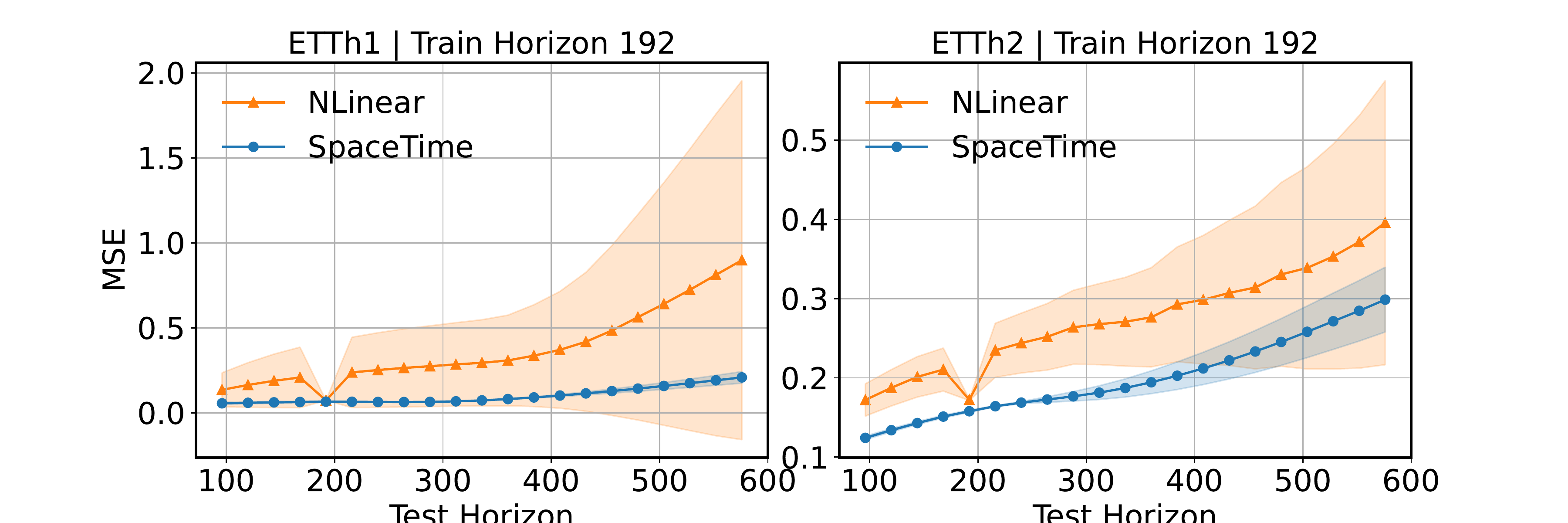}
  \caption{ \textbf{Forecasting transfer}. Mean MSE ($\pm 1$ standard deviation). \ourmethod{} 
  transfers more accurately and consistently to horizons not used for training versus NLinear~\cite{zeng2022transformers}.}
  \label{fig:transfer_forecast_etth} 
\end{minipage}
\end{figure}

\subsubsection{Long Horizon Forecasting} 
\label{sec:empirical_horizons}
To next study \ourmethod{}'s improved long horizon forecasting capabilities, we consider two additional long horizon tasks. 
First, we test on much longer horizons than prior settings (\cf{} Table~\ref{tab:informer-s-long}). Second, we test a new forecasting ability: how well methods trained to forecast one horizon transfer to longer horizons at test-time.  
For both, we use the popular Informer ETTh datasets. We compare \ourmethod{} with NLinear---the prior state-of-the-art on longer-horizon ETTh datasets---an FCN that learns a dense linear mapping between every lag input and horizon output~\citep{zeng2022transformers}.

We find \ourmethod{} outperforms NLinear on both long horizon tasks. On training to predict long horizons, \ourmethod{} consistently obtains lower MSE than NLinear on all settings (Table~\ref{tab:long_horizon_etth}). 
On transferring to new horizons, \ourmethod{} models trained to forecast 192 time-step horizons transfer more accurately and consistently to forecasting longer horizons up to 576 time-steps (Fig.~\ref{fig:transfer_forecast_etth}).
This suggests \ourmethod{} more convincingly learns the time series process; rather than only fitting to the specified horizon, the same model can generalize to new horizons.

%
%
%

\begin{figure}[H]
\begin{minipage}[c]{0.45\textwidth}
\centering
\captionof{table}{ \textbf{Train wall-clock time}. Seconds per epoch when training on ETTh1 data.}\label{tab:wallclock_etth}
\resizebox{1\linewidth}{!}{
\tabcolsep=0.3cm
\begin{tabular}{@{}lcc@{}}
    \toprule
    Method & \# params & seconds/epoch   \\
    \midrule
    \ourmethod{}& 148k & 66  \\
    $\rightarrow$ No Algorithm~\ref{alg:output_filter_computation} & 148k & 132  \\
    \midrule
    S4 & 151k & 49  \\
    Transformer & 155k & 240 \\
    LSTM & 145k & 336 \\
    \bottomrule
    \end{tabular}%
}
\end{minipage}
\quad
\begin{minipage}[c]{0.45\textwidth}
\centering
 \includegraphics[width=1\textwidth]{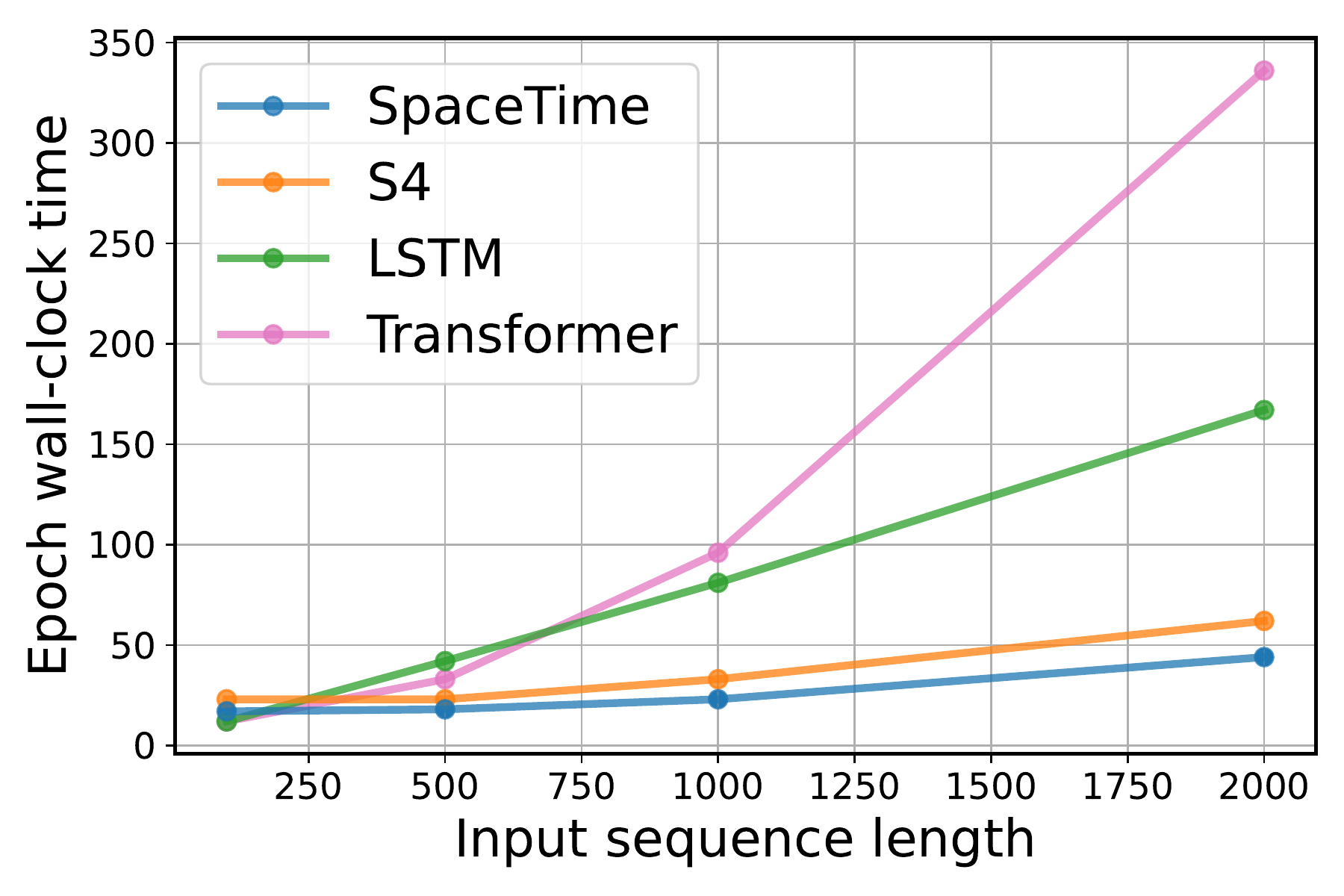}
  \caption{ \textbf{Wall-clock time scaling}. Empirically, \ourmethod{} scales near-linearly with input sequence length.}
  \label{fig:wallclock_synth} 
\end{minipage}
\end{figure}

\subsubsection{Efficiency}
 \label{sec:empirical_efficiency}
To finally study if our companion matrix algorithm enables efficient training on long sequences, 
we conduct two speed benchmarks. We (1) compare the wall-clock time per training epoch of \ourmethod{} to standard sequence models, \eg{} LSTMs and Transformers, with similar pararemeter counts, and (2) empirically test our theory in Sec.~\ref{sec:efficient_algorithm}, which suggests \ourmethod{} trains near-linearly with sequence length and state dimension. For (1), we use ETTh1 data with lag and horizon 720 time-steps long. For (2), we use synthetic data, scaling sequences from 100$-$2000 time-steps long. 




On (1) we find \ourmethod{} reduces clock time on ETTh1 by 73\% and 80\% compared to Transformers and LSTMs (Table \ref{tab:wallclock_etth}). Our efficient algorithm (Sec.~\ref{sec:efficient_algorithm}) is also important; it speeds up training by 2$\times$, and makes \ourmethod{}'s training time competitive with efficient models such as S4. On (2), we find \ourmethod{} also scales near-linearly with input sequence length, 
achieving 91\% faster training time versus similarly recurrent LSTMs (Fig.~\ref{fig:wallclock_synth}).



\section{Conclusion}

We introduce \ourmethod{}, a state-space time series model. We achieve high expressivity by modeling SSMs with the companion matrix, long-horizon forecasting with a closed-loop SSM variant, and efficiency with a new algorithm to compute the companion SSM. We validate \ourmethod{}'s proposed components on extensive time series forecasting and classification tasks.

\newpage

\section{Ethics Statement}
A main objective of our work is to improve the ability to classify and forecast time series, which has real-world applications in many fields. These applications may have high stakes, such as classifying abnormalities in medical time series. In these situations, incorrect predictions may lead to harmful patient outcomes. It is thus critical to understand that while we aim to improve time series modeling towards these applications, we do not solve these problems. Further analysis and development into where models fail in time series modeling is necessary, including potentials intersections with research directions such as robustness and model biases when aiming to deploy machine learning models in real world applications.  
%
\section{Reproducibility}
We include code for the main results in Table~\ref{tab:informer-s-long} at \href{https://github.com/HazyResearch/spacetime}{https://github.com/HazyResearch/spacetime}. We provide training hyperparameters and dataset details for each benchmark in Appendix~\ref{app:exp_details}, discussing the Informer forecasting benchmark in Appendix~\ref{app:informer_details}, the Monash forecasting benchmark in Appendix~\ref{app:monash_details}, and the ECG and speech audio classification benchmarks in Appendix~\ref{app:classification_exps}.
We provide proofs for all propositions and algorithm complexities in  Appendix~\ref{appendix:theory}. 

\section{Acknowledgements}
We thank Albert Gu, Yining Chen, Dan Fu, Ke Alexander Wang, and Rose Wang for helpful discussions and feedback. We also gratefully acknowledge the support of NIH under No. U54EB020405 (Mobilize), NSF under Nos. CCF1763315 (Beyond Sparsity), CCF1563078 (Volume to Velocity), and 1937301 (RTML); US DEVCOM ARL under No. W911NF-21-2-0251 (Interactive Human-AI Teaming); ONR under No. N000141712266 (Unifying Weak Supervision); ONR N00014-20-1-2480: Understanding and Applying Non-Euclidean Geometry in Machine Learning; N000142012275 (NEPTUNE); NXP, Xilinx, LETI-CEA, Intel, IBM, Microsoft, NEC, Toshiba, TSMC, ARM, Hitachi, BASF, Accenture, Ericsson, Qualcomm, Analog Devices, Google Cloud, Salesforce, Total, the HAI-GCP Cloud Credits for Research program,  the Stanford Data Science Initiative (SDSI), and members of the Stanford DAWN project: Facebook, Google, and VMWare. The U.S. Government is authorized to reproduce and distribute reprints for Governmental purposes notwithstanding any copyright notation thereon. Any opinions, findings, and conclusions or recommendations expressed in this material are those of the authors and do not necessarily reflect the views, policies, or endorsements, either expressed or implied, of NIH, ONR, or the U.S. Government.

\bibliographystyle{abbrvnat}
\bibliography{main.bib}
\newpage
\begin{center}
    \huge\bf{Appendix:\\ Effectively Modeling Time Series with \\Simple Discrete State Spaces}
\end{center}

\appendix
\addcontentsline{toc}{section}{}
\part{}
\parttoc
\section{Related Work}

\subsection{Classical Approaches} \label{appendix:related_work_classical}

Classical approaches in time series modeling include the Box-Jenkins method \citep{box1968some}, exponential smoothing  \citep{hyndman2008forecasting, winters1960forecasting}, autoregressive integrated moving average (ARIMA) \citep{box1970time}, and state-space models \citep{hamilton1994state}. In such approaches, the model is usually manually selected based analyzing time series features (e.g., seasonality and order of non-stationarity), where the selected model is then fitted for each individual time series. While classical approaches may be more interpretable than recent deep learning techniques, the domain expertise and manual labor needed to succesfully apply them renders them infeasible to the common setting of modeling thousands, or millions, of time series.

\subsection{Deep Learning Approaches} \label{appendix:related_work_deep}

\textbf{Recurrent models.}  Common deep learning architectures for modeling sequence data are the family of recurrent neural networks, which include GRUs~\citep{chung2014empirical}, LSTMs~\citep{hochreiter1997long}, and DeepAR \citep{salinas2020deepar}. However, due to the recurrent nature of RNNs, they are slow to train and may suffer from vanishing/exploding gradients, making them difficult to train \citep{pascanu2013difficulty}. \\

\textbf{Deep State Space models.} Recent work has investigated combining the expressive strengths of SSMs with the scalable strengths of deep neural networks \citep{rangapuram2018, gu2021efficiently}. \cite{rangapuram2018} propose to train a global RNN that transforms input covariates to sequence-spcific SSM parameters; however, one downside of this approach is that they inherit the drawbacks of RNNs. More recent approaches, such as LSSL \citep{gu2021combining}, S4 \citep{gu2021efficiently}, S4D \citep{gu2022parameterization}, and S5 \citep{smith2022simplified}, directly parameterize the layers of a neural network with multiple linear SSMs, and overcome common recurrent training drawbacks by leveraging the convolutional view of SSMs. While deep SSM models have been shown great promise in time series modeling, we show in our work -- which builds off deep SSMs -- that current deep SSM approaches are not able to capture autoregressive processes due to their continuous nature.  \\

\textbf{Neural differential equations as nonlinear state spaces.}
\citep{chen2018neural} parametrizes the vector field of continuous--time autonomous systems. These models, termed \textit{Neural Differential Equations} (NDEs) have seen extensive application to time series and sequences, first by \cite{rubanova2019latent} and then by \cite{kidger2020neural,morrill2021neural,massaroli2021differentiable} with the notable extension to \textit{Neural Controlled Differential Equations} (Neural CDEs). Neural CDEs can be considered the continuous--time, nonlinear version of state space models and RNNs \citep{kidger2022neural}. Rather than introducing nonlinearity between linear state space layers, Neural CDEs model nonlinear systems driven by a control input. 

The NDE framework has been further applied by \cite{poli2019graph} to model graph time series via \textit{Neural Graph Differential Equations}. In \cite{queiruga2020continuous}, a continuous-depth ResNet generalization based on ODEs is proposed, and in \cite{kim2021stiff} numerical techniques to enable learning of stiff dynamical systems with Neural ODEs are investigated. The idea of parameterizing the vector field of a differential equation with a neural network, popularized by NDEs, can be traced back to earlier works \citep{funahashi1993approximation, zhang2014comprehensive, weinan2017proposal}. \\

\textbf{Transformers.} 
While RNNs and its variants have shown some success at time series modeling, a major limitation is their applicability to long input sequences. Since RNNs are recurrent by nature, they require long traversal paths to access past inputs, which leads to vanishing/exploding gradients and as a result struggle with capturing long-range dependencies. 

To counteract the long-range dependency problem with RNNs, a recent line of work considers Transformers for time series modeling. The motivation is that due to the attention mechanism, a Transformer can directly model dependencies between any two points in the input sequence, independently of how far apart the points are. However, the high expressivity of the attention mechanism comes at the cost of the time and space complexity being quadratic in sequence length, making Transformers infeasible for very long sequences. As a result, many works consider specialized Transformer architectures with sparse attention mechanisms to bring down the quadratic complexity. For example, \cite{beltagy2020longformer} propose LogSparse self-attention, where a cell attends to a subset of past cells (as opposed to all cells), where closer cells are attended to more frequently, proportional to the log of their distance, which brings down complexity from $\mathcal{O}(\ell^2)$ to $\mathcal{O}(\ell(\log \ell)^2)$. \cite{zhou2021informer} propose ProbSparse self-attention, which achieves $\mathcal{O}(\ell \log \ell)$ time and memory complexity, where they propose a generative style decoder to speed inference. \cite{liu2022pyraformer} propose a pyramidal attention mechanism which shows linear time and space complexity with sequence length. Autoformer \citep{wu2021autoformer} suggests more specialization is needed in time series with a decomposition forecasting architecture, which extracts long-term stationary trend from the seasonal series and utilizes an auto-correlation mechanism, which discovers the period-based dependencies. \cite{zhou2022fedformer} believes previous attempts of Transformer-based architectures do not capture global statistical properties, and to do so requires an attention mechanism in the frequency domain. Confromer \citep{gulati2020conformer} stacks convolutional and self-attention modules into a shared layer to combine the strengths of local interactions from convolutional modules and global interactions from self-attention modules. Perceiver AR \citep{hawthorne2022general} builds on the Perceiver architecture, which reduces the computational complexity of transformers by performing self-attention in a latent space, and extends Perceiver's applicability to causal autoregressive generation.

While these works have shown exciting progress on time series forecasting, their proposed architectures are specialized to handle specific time series settings (e.g., long input sequences, or seasonal sequences), and are commonly trained to output a fixed target horizon length \citep{zhou2021informer}, \ie{} as \emph{direct multi-step forecasting} (DMS) \cite{https://doi.org/10.1111/j.1467-6419.2007.00518.x}. Thus, while effective at specific forecasting tasks, their setups are not obviously applicable to a broad range of time series settings (such as forecasting arbitrary horizon lengths, or generalizing to classification or regression tasks).

Moreover, \cite{zeng2022transformers} showed that simpler alternatives to Transformers, such as data normalization plus a single linear layer (NLinear), can outperform these specialized Transformer architectures when similarly trained to predict the entire fixed forecasting horizons. Their results suggest that neither the attention mechanism nor the proposed modifications of these time series Transformers may be best suited for time series modeling. Instead, the success of these prior works  may just be from learning to forecast the entire horizon with fully connected dependencies between prior time-step inputs and future time-step outputs, where a fully connected linear layer is sufficient. \\

\textbf{Other deep learning methods.} Other works also investigate pure deep learning architectures with no explicit temporal components, and show these models can also perform well on time series forecasting. \cite{oreshkin2019n} propose N-BEATS, a deep architecture based on backward and forward residual links. Even simpler, \cite{zeng2022transformers} investigate single linear layer models for time series forecasting. Both works show that simple architectures are capable of achieving high performance for time series forecasting. In particular, with just data normalization, the NLinear model in \cite{zeng2022transformers} obtained state-of-the-art performance on the popular Informer benchmark~\cite{zhou2021informer}. Given an input sequence of past lag terms and a target output sequence of future horizon terms, for every horizon output their model simply learns the fully connected dependencies between that output and every input lag sample. However, FCNs such as NLinear also carry inefficient downsides. Unlike Transformers and SSM-based models, the number of parameters for FCNs scales directly with input and output sequence length, \ie{} $\mathcal{O}(\ell h)$ for $\ell$ inputs and $h$ outputs. Meanwhile, \ourmethod{} shows that the SSM can improve the modeling quality of deep architectures, while maintaining constant parameter count regardless of input or output length. Especially when forecasting long horizons, we achieve higher forecasting accuracy with smaller models.

\section{Proofs and Theoretical Discussion}\label{appendix:theory}

\subsection{Expressivity Results}\label{appendix:companion_expressivity}
%
%
\begin{p1}
An SSM with a companion state matrix can represent
    \begin{itemize}[leftmargin=0.4in]
        \item[i.] \fcircle[fill=green!30]{3pt}~~ARIMA \citep{box1970time}
        \item[ii.] \fcircle[fill=red!30]{3pt}~~ Exponential smoothing
        \item[iii.] \fcircle[fill=blue!30]{3pt}~~Controllable LTI systems \citep{chen1984linear}
    \end{itemize}
\end{p1}
\begin{proof}[Proof of Proposition~\ref{prop:ssm_companion_expressiveness_td_bluf}]
We show each case separately. We either provide a set of algebraic manipulations to obtain the desired model from a companion SSM, or alternatively invoke standard results from signal processing and system theory.

    

 
\textit{i.} \fcircle[fill=green!30]{3pt}~~ We start with a standard $\text{ARMA}(p,q)$ model
\[
y_k = u_k + \sum_{i=1}^q \theta_i u_{k-i} + \sum_{i=1}^p \phi_i y_{k-i} p_i
\]
We consider two cases: 

\paragraph{Case (1): Outputs $y$ are a shifted (lag--1) version of the inputs $u$}
\begin{equation}\label{eq:arma1}
    \begin{aligned}
        y_{k+1} &= y_k + \sum_{i=1}^q \theta_i y_{k-i} + \sum_{i=1}^p \phi_i y_{k-i+1} p_i \\
        &= (1 + \phi_1 y_k) + \sum_{i=1}^{q} (\theta_i + \phi_{i+1}) y_{k-i} + \sum_{i=q+1}^{p} \theta_i y_{k-i}
    \end{aligned}
\end{equation}
where, without loss of generality, we have assumed that $p > q$ for notational convenience. The autoregressive system \eqref{eq:arma1} is equivalent to

\[
\begin{bmatrix}\zA & \zB \\ \zC & \zD\end{bmatrix} = \begin{bmatrix}0 & 0 & \dots & 0 & 0  & 1 \\ 1 & 0 & \dots & 0 & 0 & 0 \\ \vdots & \vdots & \dots & \vdots & \vdots & \vdots \\ 0 & 0 & \dots & 0 & 0 & 0 \\ 0 & 0 & \dots & 1 & 0 & 0 \\ (1 + \phi_1) & (\theta_1 + \phi_{2}) & \dots & \theta_{d-1} & \theta_d & 0 \end{bmatrix}.
\]

in state-space form, with $x\in\R^{d}$ and $d = \max(p, q)$. Note that the state-space formulation is not unique.

\paragraph{Case (2): Outputs $y$ are ``shaped noise''.}
The ARMA(p,q) formulation (classically) defines inputs $u$ as white noise samples\footnote{Other formulations with forecast residuals are also common.}, $\forall k: p(u_k)$ is a normal distribution with mean zero and some variance. In this case, we can decompose the output as follows:
\[
\begin{aligned}
    y^{\text{ar}}_k &= \sum_{i=1}^p \phi_i y_{k-i} p_i &&&& y^{\text{ma}}_k = u_k + \sum_{i=1}^q \theta_i u_{k-i}
\end{aligned}
\]

such that $y_k =y^{\text{ar}}_k + y^{\text{ma}}_k$. The resulting state-space models are: 
\[
\begin{bmatrix}\zA^{\text{ar}} & \zB^{\text{ar}} \\ \zC^{\text{ar}} & \zD^{\text{ar}}\end{bmatrix} = \begin{bmatrix}0 & 0 & \dots & 0 & 0  & 1 \\ 1 & 0 & \dots & 0 & 0 & 0 \\ \vdots & \vdots & \dots & \vdots & \vdots & \vdots \\ 0 & 0 & \dots & 0 & 0 & 0 \\ 0 & 0 & \dots & 1 & 0 & 0 \\ \phi_1 & \phi_2 & \dots & \phi_{p-1} & \phi_p & 0 \end{bmatrix}.
\]
and 
\[
\begin{bmatrix}\zA^{\text{ma}} & \zB^{\text{ma}} \\ \zC^{\text{ma}} & \zD^{\text{ma}}\end{bmatrix} = \begin{bmatrix}0 & 0 & \dots & 0 & 0  & 1 \\ 1 & 0 & \dots & 0 & 0 & 0 \\ \vdots & \vdots & \dots & \vdots & \vdots & \vdots \\ 0 & 0 & \dots & 0 & 0 & 0 \\ 0 & 0 & \dots & 1 & 0 & 0 \\ \theta_1 & \theta_2 & \dots & \theta_{q-1} & \theta_q & 1 \end{bmatrix}.
\]
Note that $\zA^{\text{ar}} \in \R^{p \times p}, \zA^{\text{ma}} \in \R^{q \times q}$. More generally, our method can represent any ARMA process as the sum of two \ourmethod~\textit{heads}: one taking as input the time series itself, and one the driving signal $u$.

\paragraph{ARIMA} $\text{ARIMA}$ processes are $\text{ARMA}(p,q)$ applied to differenced time series. For example, first-order differencing $y_k = u_k - u_{k-1}$. Differencing corresponds to high--pass filtering of the signal $y$, and can be thus be realized via a convolution \citep{strang1996wavelets}. 

Any digital filter that can be expressed as a difference equation admits a state--space representation in companion form \citep{oppenheim1999discrete}, and hence can be learned by \ourmethod{}.

\textit{ii.} \fcircle[fill=red!30]{3pt}~~ Simple exponential smoothing (SES) \citep{brown1959statistical}
\begin{equation}\label{eq:ses}
    y_k = \alpha y_{k-1} + \alpha (1 - \alpha) y_{k-2} + \dots + \alpha(1 - \alpha)^{p-1} y_{k-p}
\end{equation}
is an AR process with a parametrization involving a single scalar $0 < \alpha < 1$ and can thus be represented in companion form as shown above. 



\textit{iii.} \fcircle[fill=blue!30]{3pt}~~ Let $(\zA, \zB, \zC)$ be any controllable linear system. Controllability corresponds to invertibility of the Krylov matrix \cite[Thm 6.1, p145]{chen1984linear}
\[
\cK(\zA, \zB) = [\zB, \zA \zB, \dots, \zA^{d-1}\zB],~~\quad \cK(\zA, \zB)\in\R^{d \times d}.
\]
From ${\tt rank}(\cK) = d$, it follows that there exists a $\boldsymbol{a}\in\R^d$
\[
a_0 \zB + a_1 \zA \zB + \dots + a_{d-1}\zA^{d-1} \zB + \zA^d \zB = 0.
\]

Thus

\[
    \begin{aligned}
    \zA \cK &= [\zA \zB, \zA^2 \zB, \dots, \zA^d \zB]\\
    &= [\underbrace{\zA\zB, \zA^2 \zB, \dots, \zA^{d-1}\zB}_{\text{column left shift of}~\cK},~~ \underbrace{ - (a_0 \zB + a_1 \zA b + \dots + a_{d-1}\zA^{d-1} \zB)}_{\text{linear combination, columns of $\cK$}}] \\
    &= \cK (\zS^f - \boldsymbol{a} \boldsymbol{e}_{d-1}^\top )
    \end{aligned}
\]
where $\boldsymbol{G} = (\zS^f - \boldsymbol{a} \boldsymbol{e}_{d-1}^\top)$ is a companion matrix. 

\[
\zA \cK = \cK \boldsymbol{G} \iff \boldsymbol{G}= \cK ^{-1}\zA \cK. 
\]

Therefore $\boldsymbol{G}$ is similar to $\zA$. We can then construct a companion form state space $(\boldsymbol{G}, \zB, \zC, \zD)$ from $\zA$ using the relation above.

\end{proof}

\begin{p2}
    No class of continuous-time LSSL SSMs can represent the noiseless $\text{AR}(p)$ process.  
\end{p2}
\begin{proof}[Proof of Proposition~\ref{prop:continuous_ssm_no_ar_td_bluf}]
Recall from Sec.~\ref{sec:expressive_ssm_with_companion} that a noiseless $\text{AR}(p)$ process is defined by
\begin{align}
    y_{t} 
    &= \sum_{i = 1}^p \phi_i y_{t - i} = \phi_1y_{t - 1} + \ldots + \phi_p y_{t - p} \label{eq:appendix_ar_p}
\end{align}
with coefficients $\phi_1, \ldots, \phi_p$. This is represented by the SSM
\begin{align}
    x_{t + 1} &= \zS x_{t} + \zB u_{t} \\
    y_{t } &= \zC x_{t} + \zD u_{t} 
\end{align}
when $\zS \in \mathbb{R}^{p \times p}$ is the shift matrix, $\zB \in \mathbb{R}^{p \times 1}$ is the first basis vector $e_1$, $\zC \in \mathbb{R}^{1 \times p}$ is a vector of coefficients $\phi_1, \ldots, \phi_p$, and $\zD = 0$, \ie{}
\begin{align}
    \zS = 
    \begin{bmatrix}
    0  & 0  & \ldots & 0 & 0 \\
    1 & 0  &\ldots & 0 & 0 \\ 
    0 & 1 & \ldots & 0 & 0 \\
    \vdots &    & \ddots  & \vdots  & \vdots \\
    0 & 0  & \ldots & 1 & 0 \\
    \end{bmatrix},
    \;\;
    \zB = 
    \begin{bmatrix}
    1 & 0 & \ldots & 0
    \end{bmatrix}^T, 
    \;\; 
    \zC = 
    \begin{bmatrix}
    \phi_1 & \ldots & \phi_p
    \end{bmatrix}
\label{eq:shift_matrix_example}
\end{align}
We prove by contradiction that a continuous-time LSSL SSM cannot represent such a process. Consider the following solutions to a continuous-time system and a system \eqref{eq:appendix_ar_p}, both in autonomous form
\[
\begin{aligned}
    x^{\tt cont}_{t + 1} = e^{\zA} x_t \quad\quad x^{\tt disc}_{t + 1} = \zS x_t.
\end{aligned}
\]

It follows 
\[
\begin{aligned}
x^{\tt cont}_{t + 1} = x^{\tt disc}_{t + 1} &\iff e^{\zA} = \zS \\ 
&\iff \zA = \log{(\zS)}.
\end{aligned}
\]
we have reached a contradiction by \citep[Theorem 1]{culver1966existence}, as $\zS$ is singular by definition and thus its matrix logarithm does not exist.

\end{proof}

\subsection{Efficiency Results}~\label{appendix:efficiency_proofs}

We first prove that~\cref{alg:output_filter_computation} yields the correct
output filter $\zF^y$.
We then analyze its time complexity, showing that it takes time
$O(\ell \log \ell + d \log d)$ for sequence length $\ell$ and state dimension $d$.

\begin{theorem}
  \cref{alg:output_filter_computation} returns the filter $\zF^y = (\zC \zB, \dots, \zC\zA^{\ell-1}\zB)$.
\end{theorem}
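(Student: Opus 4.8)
The plan is to show that the length-$\ell$ vector $\tilde{\zF}^y$ assembled in Line~3 of Algorithm~\ref{alg:output_filter_computation} is exactly the discrete Fourier transform of the desired filter, i.e. $\tilde{\zF}^y[m] = \sum_{j=0}^{\ell-1}\zC\zA^{j}\zB\,\omega^{mj}$ for $m=0,\dots,\ell-1$; Fourier inversion in Line~4 then immediately returns $\zF^y=(\zC\zB,\dots,\zC\zA^{\ell-1}\zB)$. So the whole proof reduces to verifying this one spectral identity, which I would do in three moves.

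First I would collapse the matrix geometric series. Writing $z=\omega^m$ and using $\omega^\ell=1$ so that $(z\zA)^\ell=\zA^\ell$, one has $\sum_{j=0}^{\ell-1}(z\zA)^j=(\zI-\zA^\ell)(\zI-z\zA)^{-1}$ whenever $\zI-z\zA$ is invertible; hence $\tilde{\zF}^y[m]=\tilde{\zC}(\zI-z\zA)^{-1}\zB$ with $\tilde{\zC}=\zC(\zI-\zA^\ell)$ as in the \texttt{Require} line. This reduces the task to evaluating one bilinear form of the resolvent $(\zI-z\zA)^{-1}$ at each root of unity. Next I would use the companion structure $\zA=\zS+a\,e_d^\top$, where $\zS$ is the nilpotent down-shift ($\zS^d=0$). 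Then $(\zI-z\zS)^{-1}=\sum_{k=0}^{d-1}z^k\zS^k$ is the explicit lower-triangular Toeplitz matrix with $(i,j)$ entry $z^{i-j}$ for $i\ge j$ and $0$ otherwise, so for any $u,v\in\mathbb{R}^d$ the quantity $u^\top(\zI-z\zS)^{-1}v$ is a degree-$(d-1)$ polynomial in $z$ whose $d$ coefficients are (up to a fixed reversal) the entries of the linear convolution $u*v$. Evaluating such a polynomial at $z=\omega^m$ for all $m$ at once is precisely: form $u*v$, fold it modulo $\ell$ (valid since $\omega^{m\ell}=1$, which is the ``split into $\lceil d/\ell\rceil$ chunks and sum'' step), and take a length-$\ell$ DFT — i.e. exactly the subroutine $\mathrm{quad}(u,v)$.

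Finally I would apply the Sherman--Morrison formula to $(\zI-z\zA)^{-1}=(\zI-z\zS-z\,a\,e_d^\top)^{-1}$:
\[
\tilde{\zC}(\zI-z\zA)^{-1}\zB
=\tilde{\zC}(\zI-z\zS)^{-1}\zB
+ z\,\frac{\big(\tilde{\zC}(\zI-z\zS)^{-1}a\big)\big(e_d^\top(\zI-z\zS)^{-1}\zB\big)}{1-z\,e_d^\top(\zI-z\zS)^{-1}a}.
\]
Reading off the four bilinear forms as $\mathrm{quad}(\tilde{\zC},\zB)$, $\mathrm{quad}(\tilde{\zC},a)$, $\mathrm{quad}(e_d,\zB)$, $\mathrm{quad}(e_d,a)$ evaluated pointwise in $m$, and pulling the scalar $z=\omega^m$ through the fraction, I would match this exactly to the expression in Line~3 (the conjugated roots $z=[\bar\omega^0,\dots,\bar\omega^{\ell-1}]$ in Line~2 and the $\mathcal{F}_\ell/\mathcal{F}_\ell^{-1}$ pairing absorb the remaining power of $\omega$ and the index reversal). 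This establishes $\tilde{\zF}^y[m]=\sum_{j}\zC\zA^{j}\zB\,\omega^{mj}$ and hence correctness.

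The main obstacle will not be any inequality but the bookkeeping in the middle step: pinning down the reversal that turns $u^\top(\zI-z\zS)^{-1}v$ into a genuine convolution coefficient sequence rather than a cross-correlation, handling the mod-$\ell$ fold when $d>\ell$, and tracking conjugations so that the internal ``$q=u*v$, chunk, sum, transform'' of $\mathrm{quad}$ together with the top-level pointwise product and division reproduce the Sherman--Morrison expression verbatim. I would also dispatch the degenerate case where $\zI-\omega^m\zA$ is singular (equivalently $\omega^{-m}$ is a root of $z^d-\sum_{i}a_i z^i$, so the Line~3 denominator vanishes): both sides of the claimed spectral identity are continuous in $(a,\zB,\zC)$ and agree on the dense set where all denominators are nonzero, so the identity — and the algorithm's output — extend by continuity, making the formal division well defined.
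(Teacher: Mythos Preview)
Your proposal is correct and follows essentially the same route as the paper: DFT reduces to evaluating $\tilde{\zC}(\zI-\omega^m\zA)^{-1}\zB$, the companion matrix splits as shift plus rank-one, Sherman--Morrison reduces everything to four bilinear forms against the shift resolvent, and the explicit lower-triangular structure of $(\zI-\omega^m\zS)^{-1}$ identifies each such form with $\mathrm{quad}(\cdot,\cdot)$. Your treatment is in fact slightly more careful than the paper's --- you flag the convolution/correlation reversal, the mod-$\ell$ fold when $d>\ell$, and (which the paper omits entirely) the degenerate case where $\zI-\omega^m\zA$ is singular, handled cleanly by density/continuity.
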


\begin{proof}
  We follow the outline of the proof in \cref{sec:efficient_algorithm}.
  Instead of computing $\zF^y$ directly, we compute its spectrum (its discrete Fourier transform):
\begin{equation*}
  \tilde{\zF}^y[m] := {\cal F}(\zF^y) = \sum_{j=0}^{\ell-1} \zC \zA^j \omega^{mj} \zB = \zC(\zI - \zA^\ell) (\zI - \zA \omega^m)^{-1} \zB = \tilde{\zC} (\zI - \zA \omega^m)^{-1} \zB,
  \quad m = 0, 1, \dots, \ell-1.
\end{equation*}
where $\omega = \exp(-2\pi i / \ell)$ is the $\ell$-th root of unity.

This reduces to computing the quadratic form of the resolvent $(\zI - \zA \omega^m)^{-1}$
on the roots of unity (the powers of $\omega$).
Since $\zA$ is a companion matrix, we can write $\zA$ as a shift matrix plus a
rank-1 matrix, $\zA = \zS + a e_d^T$, where $e_d$ is the $d$-th basis vector
$[0, \dots, 0, 1]$ and the shift matrix $\zS$ is:
\begin{equation*}
  \zS = \begin{bmatrix}
    0 & 0 & \dots & 0 & 0 \\
    1 & 0 & \dots & 0 & 0 \\
    0 & 1 & \dots & 0 & 0 \\
    \vdots & \vdots & \ddots & \vdots & \vdots \\
    0 & 0 & \dots & 1 & 0 \\
  \end{bmatrix}.
\end{equation*}

Thus Woodbury's matrix identity (i.e., Sherman--Morrison formula) yields:
\begin{align*}
  (\zI - \zA \omega^m)^{-1}
  &= (\zI - \omega^m\zS - \omega^m a e_d^\top)^{-1} \\
  &= (\zI - \omega^m\zS)^{-1} + \frac{(\zI - \omega^m\zS)^{-1} \omega^m a e_d^\top (\zI - \omega^m\zS)^{-1}}{1 - \omega^m e_d^\top (\zI - \omega^m\zS)^{-1} a}.
\end{align*}
This is the resolvent of the shift matrix $(\zI - \omega^m \zS)^{-1}$, with a rank-1
correction.
Hence
\begin{equation}
  \label{eq:woodbury}
  \tilde{\zF}^y = \tilde{\zC} (\zI - \omega^m\zS)^{-1} \zB + \frac{\tilde{\zC}(\zI - \omega^m\zS)^{-1} a e_d^\top (\zI - \omega^m\zS)^{-1} \zB}{\omega^{-m} - e_d^\top (\zI - \omega^m\zS)^{-1} a}.
\end{equation}

We now need to derive how to compute the quadratic form of a resolvent of the
shift matrix efficiently.
Fortunately the resolvent of the shift matrix has a very special structure that
closely relates to the Fourier transform.
We show analytically that:
\begin{equation*}
  (\zI - \omega^m \zS)^{-1} = \begin{bmatrix}
    1 & 0 & \dots & 0 & 0 \\
    \omega^m & 1 & \dots & 0 & 0 \\
    \omega^{2m} & \omega^{m} & \dots & 0 & 0 \\
    \vdots & \vdots & \ddots & \vdots & \vdots \\
    \omega^{(d-1)m} & \omega^{(d-2)m} & \dots & \omega^m & 1 \\
  \end{bmatrix}.
\end{equation*}
It is easy to verify by multiplying this matrix with $\zI - \omega^m \zS$ to see if
we obtain the identity matrix. Recall that multiplying with $\zS$ on the left
just shifts all the columns down by one index.
Therefore:
\begin{align*}
  & \begin{bmatrix}
    1 & 0 & \dots & 0 & 0 \\
    \omega^m & 1 & \dots & 0 & 0 \\
    \omega^{2m} & \omega^{m} & \dots & 0 & 0 \\
    \vdots & \vdots & \ddots & \vdots & \vdots \\
    \omega^{(d-1)m} & \omega^{(d-2)m} & \dots & \omega^m & 1 \\
  \end{bmatrix}
  (\zI - \omega^m \zS) \\
    =&   \begin{bmatrix}
         1 & 0 & \dots & 0 & 0 \\
         \omega^m & 1 & \dots & 0 & 0 \\
         \omega^{2m} & \omega^{m} & \dots & 0 & 0 \\
         \vdots & \vdots & \ddots & \vdots & \vdots \\
         \omega^{(d-1)m} & \omega^{(d-2)m} & \dots & \omega^m & 1 \\
       \end{bmatrix}
    - \omega^m
  \zS \begin{bmatrix}
         1 & 0 & \dots & 0 & 0 \\
         \omega^m & 1 & \dots & 0 & 0 \\
         \omega^{2m} & \omega^{m} & \dots & 0 & 0 \\
         \vdots & \vdots & \ddots & \vdots & \vdots \\
         \omega^{(d-1)m} & \omega^{(d-2)m} & \dots & \omega^m & 1 \\
       \end{bmatrix} \\
             =&   \begin{bmatrix}
             1 & 0 & \dots & 0 & 0 \\
             \omega^m & 1 & \dots & 0 & 0 \\
             \omega^{2m} & \omega^{m} & \dots & 0 & 0 \\
             \vdots & \vdots & \ddots & \vdots & \vdots \\
             \omega^{(d-1)m} & \omega^{(d-2)m} & \dots & \omega^m & 1 \\
           \end{bmatrix}
        - \omega^m
        \begin{bmatrix}
              0 & 0 & \dots & 0 & 0 \\
              1 & 0 & \dots & 0 & 0 \\
              \omega^{m} & 1 & \dots & 0 & 0 \\
              \vdots & \vdots & \ddots & \vdots & \vdots \\
              \omega^{(d-2)m} & \omega^{(d-3)m} & \dots & 1 & 0 \\
            \end{bmatrix} \\
                  =&   \begin{bmatrix}
             1 & 0 & \dots & 0 & 0 \\
             \omega^m & 1 & \dots & 0 & 0 \\
             \omega^{2m} & \omega^{m} & \dots & 0 & 0 \\
             \vdots & \vdots & \ddots & \vdots & \vdots \\
             \omega^{(d-1)m} & \omega^{(d-2)m} & \dots & \omega^m & 1 \\
           \end{bmatrix}
        -
        \begin{bmatrix}
          0 & 0 & \dots & 0 & 0 \\
          \omega^m & 0 & \dots & 0 & 0 \\
          \omega^{2m} & \omega^m & \dots & 0 & 0 \\
          \vdots & \vdots & \ddots & \vdots & \vdots \\
          \omega^{(d-1)m} & \omega^{(d-2)m} & \dots & \omega & 0 \\
        \end{bmatrix} \\
          =& \zI.
\end{align*}
Thus the resolvent of the shift matrix indeed has the form of a lower-triangular
matrix containing the roots of unity.

Now that we have the analytic formula of the resolvent, we can derive its
quadratic form, given some vectors $u, v \in \mathbb{R}^d$.
Substituting in, we have
\begin{align*}
  u^T (\zI - \omega^m \zS)^{-1} v
  &= u_1 v_1 + u_2 v_1 \omega^m + u_2 v_2 + u_3 v_1 \omega^{2m} + u_3 v_2 \omega^m + u_3 v_1 + \dots.
\end{align*}
Grouping terms by powers of $\omega$, we see that we want to compute
$u_1 v_1 + u_2 v_2 + \dots + u_d v_d$, then
$u_2v_1 + u_3v_2 + \dots + u_d v_{d-1}$, and so on.
The term corresponding to $\omega^{km}$ is exactly the $k$-th element of the linear
convolution $u \ast v$.
Define $q = u \ast v$, then $u^T(\zI - \omega^m \zS)^{-1} v$ is just the Fourier
transform of $u \ast v$.
To deal with the case where $d > \ell$, we note that the powers of roots of unity
will repeat, so we just need to extend the output of $u \ast v$ to be multiples of
$\ell$, then split them into chunk of size $\ell$, then sum them up and take the
length-$\ell$ Fourier transform.
This is exactly the procedure $\mathrm{quad}(u, v)$ defined in~\cref{alg:output_filter_computation}.

Once we have derived the quadratic form of the resolvent $(\zI - \omega^m \zS)^{-1}$,
simply plugging it into the Woodbury's matrix identity (\cref{eq:woodbury})
yields~\cref{alg:output_filter_computation}.

\end{proof}

We analyze the algorithm's complexity.
\begin{theorem}
  \cref{alg:output_filter_computation} has time complexity
  $O(\ell \log \ell + d \log d)$ for sequence length $\ell$ and state dimension $d$.
\end{theorem}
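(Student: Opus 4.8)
The plan is to bound the running time of each line of \cref{alg:output_filter_computation} separately, the crucial sub-routine being $\mathrm{quad}(u,v)$, and then to sum. First I would establish that a single call to $\mathrm{quad}(u,v)$ for $u,v\in\mathbb{R}^d$ runs in $O(d\log d + \ell\log\ell)$ time. The linear convolution $q = u*v$ has length $2d-1$ and is computed via FFTs of size $O(d)$, hence in $O(d\log d)$; zero-padding $q$ to length $\ell\lceil d/\ell\rceil$, splitting it into $\lceil d/\ell\rceil$ chunks of length $\ell$, and adding the chunks together costs $O(\ell\lceil d/\ell\rceil) = O(d+\ell)$ scalar additions; and the concluding length-$\ell$ Fourier transform costs $O(\ell\log\ell)$. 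Adding these gives $O(d\log d + \ell\log\ell)$ per $\mathrm{quad}$ call.

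Next I would account for the remaining work. Line~2 (the vector $z$ of roots of unity) is $O(\ell)$. Line~3 invokes $\mathrm{quad}$ a constant number of times---four---each producing a length-$\ell$ vector, and then combines those vectors and $z$ by elementwise multiplication, division, and addition, which is $O(\ell)$. Line~4 is one inverse length-$\ell$ FFT, i.e.\ $O(\ell\log\ell)$. Summing the $O(d\log d + \ell\log\ell)$ from the four $\mathrm{quad}$ calls with the $O(\ell\log\ell)$ contributions of Lines~2--4 yields the claimed $O(\ell\log\ell + d\log d)$ bound.

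I expect no genuine difficulty here; the points that merely require care are (a) checking that the fold-and-sum step is truly linear, since $\lceil d/\ell\rceil\cdot\ell \le d+\ell$, so it is always dominated by the FFT terms; and (b) observing that the bound is uniform in how $d$ and $\ell$ compare---when $d\le\ell$ there is a single chunk and the length-$O(d)$ convolution FFT is absorbed into the length-$\ell$ FFT, whereas when $d>\ell$ there are $\Theta(d/\ell)$ chunks, each processed in $O(\ell)$, and the dominant cost is the $O(d\log d)$ convolution. I would also note in passing that the symbol ``$*$'' appears with two meanings in the algorithm---linear convolution inside the definition of $\mathrm{quad}$, and elementwise product of length-$\ell$ spectra in Line~3---but under either reading the estimate above is unaffected, since an elementwise product of length-$\ell$ vectors is $O(\ell)$ and a linear convolution of them is $O(\ell\log\ell)$.
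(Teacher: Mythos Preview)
Your proposal is correct and follows essentially the same approach as the paper: the paper's own proof also bounds a single $\mathrm{quad}$ call as an $O(d\log d)$ linear convolution plus an $O(\ell\log\ell)$ length-$\ell$ FFT, and then multiplies by four. Your write-up is in fact more careful than the paper's, which omits the fold-and-sum accounting and the costs of Lines~2 and~4 that you explicitly bound.
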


\begin{proof}
  We see that computing the quadratic form of the resolvent
  $(\zI - \omega^m \zS)^{-1}$ involves a linear convolution of size $d$ and a Fourier
  transform of size $\ell$.
  The linear convolution can be done by performing an FFT of size $2d$ on both
  inputs, multiply them pointwise, then take the inverse FFT of size $2d$.
  This has time complexity $O(d \log d)$.
  The Fourier transform of size $\ell$ has time complexity $O(\ell \log \ell)$.
  The whole algorithm needs to compute four such quadratic forms, hence it takes
  time $O(\ell \log \ell + d \log d)$.
\end{proof}

\textbf{Remark.} We see that the algorithm easily extends to the case where the matrix $\zA$ is a companion matrix plus low-rank matrix (of some rank $k$).
We can write $\zA$ as the sum of the shift matrix and a rank-$(k+1)$ matrix (since $\zA$ itself is the sum of a shift matrix and a rank-1 matrix).
Using the same strategy, we can use the Woodbury's matrix identity for the rank-$(k+1)$ case.
The running time will then scale as $O(k (\ell \log \ell + d \log d))$.

\subsection{Companion Matrix Stability}

\paragraph{Normalizing companion parameters for bounded gradients}
\begin{prop}[Bounded \ourmethod{} Gradients]
Given $s$, the norm of the gradient of a \ourmethod{} layer is bounded for all $k < s$ if 
\[\sum_{i=0}^{d-1} |\boldsymbol{a}_i| = 1
\]
\end{prop}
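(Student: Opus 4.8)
The plan is to reduce the whole statement to one elementary observation: under the normalization $\sum_{i=0}^{d-1}|\boldsymbol{a}_i| = 1$, the companion state matrix $\zA$ of~\eqref{eq:companion_matrix} is non-expansive in the $\ell^1$-induced operator norm, i.e.\ $\|\zA\|_1 = 1$. This is immediate from structure: $\|\cdot\|_1$ is the maximum absolute column sum, the first $d-1$ columns of $\zA$ are standard basis vectors (column sum $1$), and the last column is $\boldsymbol{a}$, whose absolute column sum is exactly $\sum_i|\boldsymbol{a}_i| = 1$. Submultiplicativity then gives $\|\zA^k\|_1 \le \|\zA\|_1^{\,k} = 1$ for every $k \ge 0$, so every power of the state matrix that can appear in a forward or backward pass over a window of $s$ time-steps is uniformly bounded. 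This is the finite-horizon replacement for the spectral-radius condition $\rho(\zA)\le 1$ (which also follows, but is weaker than what the gradient argument needs, since it alone would permit polynomial transient growth of $\|\zA^k\|$).

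Next I would write the layer's map and its derivatives explicitly via the convolution view~\eqref{eq:convolution_view}--\eqref{eq:output_filter}: $y_k = \sum_{j=0}^{k-1}\zC\zA^{k-1-j}\zB\,u_j$. The Jacobian in the inputs is $\partial y_k/\partial u_j = \zC\zA^{k-1-j}\zB$, bounded by $\|\zC\|_{\infty}\|\zB\|_1$ using $\|\zA^{k-1-j}\|_1\le 1$; the gradients in the trainable weights $\zB$ and $\zC$ are affine in those weights with coefficients of the form $\zC\zA^{n}$ or $\zA^{n}\zB$, again bounded by $\|\zC\|_\infty$ or $\|\zB\|_1$ factors and summing to $O(s)$ terms. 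The only nonroutine piece is the gradient in $\boldsymbol{a}$: differentiating a matrix power gives $\partial_{\boldsymbol{a}_i}(\zC\zA^{n}\zB) = \sum_{p=0}^{n-1}\zC\zA^{p}\,(\partial_{\boldsymbol{a}_i}\zA)\,\zA^{\,n-1-p}\zB$, where $\partial_{\boldsymbol{a}_i}\zA$ is the matrix with a single $1$ in the slot occupied by $\boldsymbol{a}_i$ and hence has unit $\ell^1$ norm. Bounding factor by factor in the $\ell^1$ operator norm (paired with the $\ell^1$ vector norm on $\zB$ and the $\ell^\infty$ covector norm on $\zC$) and applying $\|\zA^k\|_1\le 1$ termwise, each such derivative is a sum of $n$ terms of magnitude $\le \|\zC\|_\infty\|\zB\|_1$; summing over the at most $s$ time-steps and over bounded inputs yields an explicit bound of order $s^2\|\zC\|_\infty\|\zB\|_1\max_j|u_j|$. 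Collecting everything, the gradient norm at any $k < s$ is finite and, crucially, carries no factor growing exponentially in $k$ — which is exactly the blow-up that dropping the normalization would allow, since a companion matrix with $\|\boldsymbol{a}\|_1 > 1$ can have $\rho(\zA) > 1$ and hence $\|\zA^k\|\to\infty$ geometrically.

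The main obstacle is the matrix-power derivative term $\partial_{\boldsymbol{a}_i}\zA^n$: this is the one place where a single application of submultiplicativity does not suffice, since the derivative unrolls into a sum of $n$ triple products $\zA^{p}(\cdot)\zA^{\,n-1-p}$, and one must keep all intermediate quantities in a single compatible family of norms so that $\|\zA^k\|_1\le 1$ can be applied to each factor separately; committing to the $\ell^1$ operator norm throughout (with the matching $\ell^1/\ell^\infty$ norms for $\zB$ and $\zC$) is what makes this clean. Everything else — the input Jacobian and the $\zB$, $\zC$ gradients — is routine once $\|\zA^k\|_1 \le 1$ is in hand, and composing with any bounded upstream loss gradient leaves the bound intact, so the statement holds for the gradient of the \ourmethodunit{} within a deep network as well.
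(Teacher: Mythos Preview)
Your proposal is correct and in fact more rigorous than the paper's own argument, while taking a genuinely different route. The paper computes only the input-Jacobian $\partial y_s/\partial u_k = \zC\zA^{s-k-1}\zB$ and bounds it by invoking a Gershgorin-type corollary to conclude that the spectral radius of $\zA$ is at most $\max\{1,\sum_i|\boldsymbol{a}_i|\}=1$, then asserts that this makes $\zC\zA^{s-k-1}\zB$ ``converge.'' As you explicitly point out, a spectral-radius bound alone does not preclude polynomial transient growth of $\|\zA^k\|$ (companion matrices are nonderogatory, so a repeated unit-modulus root would produce a nontrivial Jordan block); your choice of the $\ell^1$-induced operator norm sidesteps this entirely, since the column-sum structure of the companion matrix gives $\|\zA\|_1=1$ exactly and hence $\|\zA^k\|_1\le 1$ for every $k$ by submultiplicativity. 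You also go further than the paper by treating the gradients in the trainable parameters $\zB$, $\zC$, and $\boldsymbol{a}$, including the matrix-power derivative $\partial_{\boldsymbol{a}_i}\zA^n$, whereas the paper stops at the input sensitivity. What the paper's approach buys is brevity and a recognizable eigenvalue statement; what yours buys is a uniform, non-asymptotic bound that actually controls every finite power appearing in the backward pass, and a complete treatment of all learnable weights.
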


\begin{proof}
Without loss of generality, we assume $x_0 = 0$. Since the solution at time $s$ is 
\[
y_s = \zC \sum_{i-1}^{s-1} \zA^{s-i-1} \zB u_i 
\]
we compute the gradient w.r.t $u_k$ as
\begin{equation}
    \begin{aligned}
    \frac{\dd y_s}{\dd u_k} = \zC \zA^{s-k-1} \zB. 
    \end{aligned}
\end{equation}

The largest eigenvalue of $\zA$
\[
\begin{aligned}
    \max\{{\tt eig}(\zA)\} 
    &\leq \max\big\{1, \sum_{i=0}^{d-1}  |\boldsymbol{a}_i|\big\}\quad\quad \text{Corollary of Gershgorin \citep[Theorem 1]{hirst1997bounding}} \\
&= 1 ~~~~~~~~~~~~~~~~~~~~~~~~~~~~~~~~~~~~ \text{using $\sum_i|\boldsymbol{a}_i| = 1$}
\end{aligned}
\]

is $1$, which implies convergence of the operator $\zC \zA^{s-k-1} \zB$. Thus, the gradients are bounded. 

\end{proof}

We use the proposition above to ensure gradient boundedness in \ourmethod{} layers by normalizing $\boldsymbol{a}$ every forward pass.

\section{Experiment Details} \label{app:exp_details}

\subsection{Informer Forecasting} \label{app:informer_details}

\textbf{Dataset details}. In Table~\ref{tab:informer-s-long}, we evaluate all methods with datasets and horizon tasks from the Informer benchmark~\citep{zhou2021informer}. We use the datasets and horizons evaluated on in recent works~\citep{wu2021autoformer, zhou2022fedformer, zhou2022film, zeng2022transformers}, which evaluate on electricity transformer temperature time series (ETTh1, ETTh2, ETTm1, ETTm2) with forecasting horizons \{96, 192, 336, 720\}. 
We extend this comparison in Appendix~\ref{appendix:informer_extended} to all datasets and forecasting horizons in \cite{zhou2021informer}, which also consider weather and electricity (ECL) time series data. 

\textbf{Training details.}
We train \ourmethod{} on all datasets for $50$ epochs using AdamW optimizer~\citep{loshchilov2017decoupled}, cosine scheduling, and early stopping based on best validation standardized MSE. We performed a grid search over number of SSMs \{64, 128\} and weight decay $\{0, 0.0001\}$. Like prior forecasting works, we treat the input lag sequence as a hyperparameter, and train to predict each forecasting horizon with either 336 or 720 time-step-long input sequences for all datasets and horizons. 
For all datasets, we use a 3-layer \ourmethod{} network with 128 SSMs per layer. We train with learning rate $0.01$, weight decay $0.0001$, batch size $32$, and dropout $0.25$. 

\textbf{Hardware details.}
All experiments were run on a single NVIDIA Tesla P100 GPU.

\subsection{Monash Forecasting} \label{app:monash_details} 

The Monash Time Series Forecasting Repository \citep{godahewa2021monash} provides an extensive benchmark suite for time series forecasting models, with over $30$ datasets (including various configurations) spanning finance, traffic, weather and medical domains. We compare \ourmethod{} against $13$ baselines provided by the Monash benchmark: simple exponential smoothing (SES)~\citep{gardner1985exponential}, Theta~\citep{assimakopoulos2000theta}, TBATS~\citep{de2011forecasting}, ETS~\citep{winters1960forecasting}, DHR--ARIMA~\citep{hyndman2018forecasting}, Pooled Regression (PR)~\citep{trapero2015identification}, CatBoost~\citep{dorogush2018catboost}, FFNN, DeepAR~\citep{salinas2020deepar}, N-BEATS~\cite{oreshkin2019n}, WaveNet~\citep{oord2016wavenet}, vanilla Transformer~\citep{vaswani2017attention}. A complete list of the datasets considered and baselines, including test results (average RMSE across $3$ seeded runs) is available in Table \ref{tab:monash}.

\textbf{Training details.}
We optimize \ourmethod{} on all datasets using Adam optimizer for $40$ epochs with a linear learning rate warmup phase of $20$ epochs and cosine decay. We initialize learning rate at $0.001$, reach $0.004$ after warmup, and decay to $0.0001$. We do not use weight decay or dropout.

We perform a grid search over number of layers \{3, 4, 5, 6\}, number of SSMs per layer \{8, 16, 32, 64, 128\} and number of channels (width of the model) \{1, 4, 8, 16\}. Hyperparameter tuning is performed for each dataset. We pick the model based on best validation RMSE performance.

\textbf{Hardware details.}
All experiments were run on a single NVIDIA GeForce RTX 3090 GPU.

\subsection{Time Series Classification} \label{app:classification_exps} 

\textbf{ECG classification (motivation and dataset description).} Electrocardiograms (ECG) are commonly used as one of the first examination tools for assessing and diagnosing cardiovascular diseases, which are a major cause of mortality around the world~\citep{amini2021trend}.
However, ECG interpretation remains a challenging task for cardiologists and general practitioners~\citep{jablonover2014ecgcompetency, cook2020accuracy}. 
Incorrect interpretation of ECG can result in misdiagnosis and delayed treatment, which can be potentially life-threatening in critical situations such as emergency rooms, where an accurate interpretation is needed quickly.


To mitigate these challenges, deep learning approaches are increasingly being applied to interpret ECGs. 
These approaches have been used for predicting the ECG rhythm class~\citep{hannun2019cardiologist}, detecting atrial fibrillation~\citep{attia2019artificial}, rare cardiac diseases like cardiac amyloidosis~\citep{goto2021artificial}, and a variety of other abnormalities~\citep{attia2019screening, siontis2021artificial}. 
Deep learning approaches have shown preliminary promise in matching the performance of cardiologists and emergency residents in triaging ECGs, which would permit accurate interpretations in settings where specialists may not be present~\citep{ribeiro2020automatic, hannun2019cardiologist}. 

We use the publicly available PTB-XL dataset \citep{Wagner:2020PTBXL, Wagner2020:ptbxlphysionet, Goldberger2020:physionet}, which contains $21{,}837$ $12$-lead ECG recordings of $10$ seconds each obtained from $18{,}885$ patients.
Each ECG recording is annotated by up to two cardiologists with one or more of the $71$ ECG statements (labels). 
These ECG statements conform to the SCP-ECG standard \citep{iso2009ecgstatements}. Each statement belongs to one or more of the following three categories -- diagnostic, form, and rhythm statements. The diagnostic statements are further organised in a hierarchy containing 5 superclasses and 24 subclasses.

This provides six sets of annotations for the ECG statements based on the different categories and granularities: all (all ECG statements), diagnostic (only diagnostic statements including both subclass and superclass statements), diagnostic subclass (only diagnostic subclass statements), diagnostic superclass (only diagnostic superclass statements), form (only form statements), and rhythm (only rhythm statements).
These six sets of annotations form different prediction tasks which are referred to as all, diag, sub-diag, super-diag, form, and rhythm respectively.
The diagnostic superclass task is multi-class classification, and the other tasks are multi-label classification.

\textbf{ECG classification training details.} To tune \ourmethod{} and S4, we performed a grid search over the learning rate $\{0.01, 0.001\}$, model dropout $\{0.1, 0.2\}$, number of SSMs per layer $\{128, 256\}$, and number of layers  $\{4,6\}$, and chose the parameters that resulted in highest validation AUROC. The SSM state dimension was fixed to $64$, with gated linear units as the non-linearity between stacked layers. We additionally apply layer normalization. We use a  cosine learning rate scheduler, with a warmup period of 5 epochs. We train all models for $100$ epochs.

\textbf{Speech Commands training details.} To train \ourmethod{}, we use the same hyperparameters used by S4: a learning rate of $0.01$ with a plateau scheduler with patience $20$, dropout of $0.1$, $128$ SSMs per layer, $6$ layers, batch normalization, trained for $200$ epochs.

\textbf{Hardware details.}
For both ECG and Speech Commands, all experiments were run on a single NVIDIA Tesla A100 Ampere 40 GB GPU.

\section{Extended experimental results} \label{app:exp_results}

\subsection{Expressivity on digital filters}


We experimentally verify whether \ourmethod{} can approximate the input--output map of digital filter admitting a state--space representation, with improved generalization over baseline models given test inputs of unseen frequencies.

We generate a dataset of $1028$ sinusoidal signals of length $200$
\[
x(t) = \sin{(2\pi \omega t})
\]
where $\omega \in [2, 40]~\bigcup~[50, 100]$ in the training set and $\omega \in (40, 50)$ in the test set. The outputs are obtained by filtering $x$, i.e., $y = \mathcal{F}(x)$  where $\mathcal{F}$ is in the family of digital filters. 

We introduce common various sequence-to-sequence layers or models as baselines: the original S4 diagonal plus low--rank \citep{gu2021efficiently}, a single-layer LSTM, a single 1d convolution (Conv1d), a dense linear layer (NLinear), a single self--attention layer. All models are trained for $800$ epochs with batch size $256$, learning rate $10^{-3}$ and Adam. We repeat this experiment for digital filters of different orders \citep{oppenheim1999discrete}. The results are shown in Figure \ref{fig:dsp_synthetic}. \ourmethod{} learns to match the frequency response of the target filter, producing the correct output for inputs at test frequencies.

\begin{table}[h]
\caption{Comparing sequence models on the task of approximating the input--output map defined by digital filters of different orders. Test RMSE on held-out inputs at unseen frequencies.}
    \label{tab:my_label}
    \centering
    \begin{tabular}{cc|ccccccc}
    \toprule
    Filter & Order & \ourmethod & S4 & Conv1D & LSTM & NLinear & Transformer \\
    \midrule
    Butterworth & $2$ & $0.0055$ & $0.0118$ & $0.0112$ & $0.0115$ & $1.8420$ & $0.5535$ \\
    & $3$ & $0.0057$ & $0.3499$ & $0.0449$ & $0.0231$ & $1.7085$ & $0.6639$ \\
    & $10$ & $0.0039$ & $0.8077$ & $0.4747$ & $0.2753$  & $1.5162$ & $0.7191$ \\
    \midrule
    Chebyshev $1$ & $2$ & $0.0187$ & $0.0480$ & $0.0558$ & $0.0285$ & $1.9313$ & $0.2452$ \\
    & $3$ & $0.0055$ & $0.0467$ & $0.0615$ & $0.0178$ & $1.8077$ & $0.4028$ \\
    & $10$ & $0.0620$ & $0.6670$ & $0.1961$ & $0.1463$ & $1.5069$ & $0.7925$ \\
    \midrule
    Chebyshev $2$ & $2$ & $0.0112$ & $0.0121$ & $0.0067$ & $0.0019$ & $0.4101$ & $0.0030$ \\
    & $3$ & $0.0201$ & $0.0110$ & $0.0771$ & $0.0102$ & $0.4261$ & $0.0088$\\
    & $10$ & $0.0063$ &  $0.6209$ & $0.3361$ & $0.1911$ & $1.5584$ & $0.7936$\\
    \midrule
    Elliptic & $2$ & $0.0001$ & $0.0300$ & $0.0565$ & $0.0236$ & $1.9150$ & $0.2445$ \\
    & $3$ & $0.0671$ & $0.0868$ & $0.0551$ & $0.0171$ & $1.8782$  & $0.4198$ \\
    & $10$ & $0.0622$ & $0.0909$ & $0.1352$ & $0.1344$ & $1.4901$ & $0.7368$ \\
    \bottomrule
    \end{tabular}
    
\end{table}
%

\subsection{Informer Forecasting}\label{appendix:informer_extended}

\textbf{Univariate long horizon forecasts with Informer splits.} Beyond the ETT datasets and horizons evaluated on in Table~\ref{tab:informer-s-long-original}, we also compare \ourmethod{} to alternative time series methods on the complete datasets and horizons used in the original Informer paper~\citep{zhou2021informer}. We compare against recent architectures which similarly evaluate on these settings, including ETSFormer~\citep{woo2022etsformer}, SCINet~\citep{liu2021time}, and Yformer~\citep{madhusudhanan2021yformer}, and other comparison methods found in the Informer paper, such as Reformer~\citep{kitaev2020reformer} and ARIMA.
\ourmethod{} obtains best results on 20 out of 25 settings, the most of any method.

\begin{table}[!t]
\caption{ \textbf{Univariate forecasting} results on Informer datasets. \textbf{Best} results in \textbf{bold}. \ourmethod{} obtains best MSE on 19 out of 25 and best MAE on 20 out of 25 dataset and horizon tasks.}
\resizebox{\linewidth}{!}{
\begin{tabular}{@{}c|cbcbcbcbcbcbcbcbcbcbcbcbc@{}}
\toprule
\multicolumn{2}{c}{Methods}        & \multicolumn{2}{c}{\textbf{\ourmethod{}}} & \multicolumn{2}{c}{ETSFormer}   & \multicolumn{2}{c}{SCINet} & \multicolumn{2}{c}{S4}          & \multicolumn{2}{c}{Yformer}     & \multicolumn{2}{c}{Informer} & \multicolumn{2}{c}{LogTrans} & \multicolumn{2}{c}{Reformer} & \multicolumn{2}{c}{N-BEATS} & \multicolumn{2}{c}{DeepAR} & \multicolumn{2}{c}{ARIMA} & \multicolumn{2}{c}{Prophet} \\ \midrule
\multicolumn{2}{c}{Metric}         & MSE                & MAE               & MSE            & MAE            & MSE          & MAE         & MSE            & MAE            & MSE            & MAE            & MSE           & MAE          & MSE           & MAE          & MSE           & MAE          & MSE          & MAE          & MSE          & MAE         & MSE         & MAE         & MSE          & MAE          \\ \midrule
\parbox[t]{2mm}{\multirow{5}{*}{\rotatebox[origin=c]{90}{ETTh1}}}   & 24  & \textbf{0.026}     & \textbf{0.124}    & 0.030          & 0.132          & 0.031        & 0.132       & 0.061          & 0.191          & 0.082          & 0.230          & 0.098         & 0.247        & 0.103         & 0.259        & 0.222         & 0.389        & 0.042        & 0.156        & 0.107        & 0.280       & 0.108       & 0.284       & 0.115        & 0.275        \\
\multicolumn{1}{c|}{}        & 48  & \textbf{0.038}     & \textbf{0.153}    & 0.041          & 0.154          & 0.051        & 0.173       & 0.079          & 0.220          & 0.139          & 0.308          & 0.158         & 0.319        & 0.167         & 0.328        & 0.284         & 0.445        & 0.065        & 0.200        & 0.162        & 0.327       & 0.175       & 0.424       & 0.168        & 0.330        \\
\multicolumn{1}{c|}{}        & 168 & 0.066              & 0.209             & \textbf{0.065} & \textbf{0.203} & 0.081        & 0.222       & 0.104          & 0.258          & 0.111          & 0.268          & 0.183         & 0.346        & 0.207         & 0.375        & 1.522         & 1.191        & 0.106        & 0.255        & 0.239        & 0.422       & 0.396       & 0.504       & 1.224        & 0.763        \\
\multicolumn{1}{c|}{}        & 336 & \textbf{0.069}     & \textbf{0.212}    & 0.071          & 0.215          & 0.094        & 0.242       & 0.080          & 0.229          & 0.195          & 0.365          & 0.222         & 0.387        & 0.230         & 0.398        & 1.860         & 1.124        & 0.127        & 0.284        & 0.445        & 0.552       & 0.468       & 0.593       & 1.549        & 1.820        \\
\multicolumn{1}{c|}{}        & 720 & \textbf{0.075}     & \textbf{0.226}    & 0.079          & 0.227          & 0.176        & 0.343       & 0.116          & 0.271          & 0.226          & 0.394          & 0.269         & 0.435        & 0.273         & 0.463        & 2.112         & 1.436        & 0.269        & 0.422        & 0.658        & 0.707       & 0.659       & 0.766       & 2.735        & 3.253        \\ \midrule
\parbox[t]{2mm}{\multirow{5}{*}{\rotatebox[origin=c]{90}{ETTh2}}}   & 24  & \textbf{0.064}     & \textbf{0.189}    & 0.087          & 0.232          & 0.070        & 0.194       & 0.095          & 0.234          & 0.082          & 0.221          & 0.093         & 0.240        & 0.102         & 0.255        & 0.263         & 0.437        & 0.078        & 0.210        & 0.098        & 0.263       & 3.554       & 0.445       & 0.199        & 0.381        \\
\multicolumn{1}{c|}{}        & 48  & \textbf{0.095}     & \textbf{0.230}    & 0.112          & 0.263          & 0.102        & 0.242       & 0.191          & 0.346          & 0.172          & 0.334          & 0.155         & 0.314        & 0.169         & 0.348        & 0.458         & 0.545        & 0.123        & 0.271        & 0.163        & 0.341       & 3.190       & 0.474       & 0.304        & 0.462        \\
\multicolumn{1}{c|}{}        & 168 & \textbf{0.144}     & \textbf{0.300}    & 0.169          & 0.325          & 0.157        & 0.311       & 0.167          & 0.333          & 0.174          & 0.337          & 0.232         & 0.389        & 0.246         & 0.422        & 1.029         & 0.879        & 0.244        & 0.393        & 0.255        & 0.414       & 2.800       & 0.595       & 2.145        & 1.068        \\
\multicolumn{1}{c|}{}        & 336 & \textbf{0.169}     & \textbf{0.333}    & 0.216          & 0.379          & 0.177        & 0.340       & 0.189          & 0.361          & 0.224          & 0.391          & 0.263         & 0.417        & 0.267         & 0.437        & 1.668         & 1.228        & 0.270        & 0.418        & 0.604        & 0.607       & 2.753       & 0.738       & 2.096        & 2.543        \\
\multicolumn{1}{c|}{}        & 720 & 0.188              & \textbf{0.352}    & 0.226          & 0.385          & 0.253        & 0.403       & \textbf{0.187} & 0.358          & 0.211          & 0.382          & 0.277         & 0.431        & 0.303         & 0.493        & 2.030         & 1.721        & 0.281        & 0.432        & 0.429        & 0.580       & 2.878       & 1.044       & 3.355        & 4.664        \\ \midrule
\parbox[t]{2mm}{\multirow{5}{*}{\rotatebox[origin=c]{90}{ETTm1}}}   & 24  & \textbf{0.010}     & \textbf{0.074}    & 0.013          & 0.084          & 0.019        & 0.088       & 0.024          & 0.117          & 0.024          & 0.118          & 0.030         & 0.137        & 0.065         & 0.202        & 0.095         & 0.228        & 0.031        & 0.117        & 0.091        & 0.243       & 0.090       & 0.206       & 0.120        & 0.290        \\
\multicolumn{1}{c|}{}        & 48  & \textbf{0.019}     & \textbf{0.101}    & 0.020          & 0.107          & 0.045        & 0.143       & 0.051          & 0.174          & 0.048          & 0.173          & 0.069         & 0.203        & 0.078         & 0.220        & 0.249         & 0.390        & 0.056        & 0.168        & 0.219        & 0.362       & 0.179       & 0.306       & 0.133        & 0.305        \\
\multicolumn{1}{c|}{}        & 96  & \textbf{0.026}     & \textbf{0.121}    & 0.030          & 0.132          & 0.072        & 0.198       & 0.086          & 0.229          & 0.143          & 0.311          & 0.194         & 0.372        & 0.199         & 0.386        & 0.920         & 0.767        & 0.095        & 0.234        & 0.364        & 0.496       & 0.272       & 0.399       & 0.194        & 0.396        \\
\multicolumn{1}{c|}{}        & 288 & \textbf{0.051}     & \textbf{0.176}    & 0.053          & 0.179          & 0.117        & 0.266       & 0.160          & 0.327          & 0.150          & 0.316          & 0.401         & 0.554        & 0.411         & 0.572        & 1.108         & 1.245        & 0.157        & 0.311        & 0.948        & 0.795       & 0.462       & 0.558       & 0.452        & 0.574        \\
\multicolumn{1}{c|}{}        & 672 & 0.078              & 0.220             & \textbf{0.075} & \textbf{0.214} & 0.180        & 0.328       & 0.292          & 0.466          & 0.305          & 0.476          & 0.512         & 0.644        & 0.598         & 0.702        & 1.793         & 1.528        & 0.207        & 0.370        & 2.437        & 1.352       & 0.639       & 0.697       & 2.747        & 1.174        \\ \midrule
\parbox[t]{2mm}{\multirow{5}{*}{\rotatebox[origin=c]{90}{Weather}}} & 24  & \textbf{0.088}     & \textbf{0.205}    & -              & -              & -            & -           & 0.125          & 0.254          & -              & -              & 0.117         & 0.251        & 0.136         & 0.279        & 0.231         & 0.401        & -            & -            & 0.128        & 0.274       & 0.219       & 0.355       & 0.302        & 0.433        \\
\multicolumn{1}{c|}{}        & 48  & \textbf{0.134}     & \textbf{0.258}    & -              & -              & -            & -           & 0.181          & 0.305          & -              & -              & 0.178         & 0.318        & 0.206         & 0.356        & 0.328         & 0.423        & -            & -            & 0.203        & 0.353       & 0.273       & 0.409       & 0.445        & 0.536        \\
\multicolumn{1}{c|}{}        & 168 & 0.221              & 0.349             & -              & -              & -            & -           & \textbf{0.198} & \textbf{0.333} & -              & -              & 0.266         & 0.398        & 0.309         & 0.439        & 0.654         & 0.634        & -            & -            & 0.293        & 0.451       & 0.503       & 0.599       & 2.441        & 1.142        \\
\multicolumn{1}{c|}{}        & 336 & \textbf{0.268}     & \textbf{0.380}    & -              & -              & -            & -           & 0.300          & 0.417          & -              & -              & 0.297         & 0.416        & 0.359         & 0.484        & 1.792         & 1.093        & -            & -            & 0.585        & 0.644       & 0.728       & 0.730       & 1.987        & 2.468        \\
\multicolumn{1}{c|}{}        & 720 & 0.345              & 0.451             & -              & -              & -            & -           & \textbf{0.245} & \textbf{0.375} & -              & -              & 0.359         & 0.466        & 0.388         & 0.499        & 2.087         & 1.534        & -            & -            & 0.499        & 0.596       & 1.062       & 0.943       & 3.859        & 1.144        \\ \midrule
\parbox[t]{2mm}{\multirow{5}{*}{\rotatebox[origin=c]{90}{ECL} }}    & 48  & \textbf{0.184}     & \textbf{0.306}    & -              & -              & -            & -           & 0.222          & 0.350          & 0.194          & 0.322          & 0.239         & 0.359        & 0.280         & 0.429        & 0.971         & 0.884        & -            & -            & 0.204        & 0.357       & 0.879       & 0.764       & 0.524        & 0.595        \\
\multicolumn{1}{c|}{}        & 168 & \textbf{0.250}     & \textbf{0.353}    & -              & -              & -            & -           & 0.331          & 0.421          & 0.260          & 0.361          & 0.447         & 0.503        & 0.454         & 0.529        & 1.671         & 1.587        & -            & -            & 0.315        & 0.436       & 1.032       & 0.833       & 2.725        & 1.273        \\
\multicolumn{1}{c|}{}        & 336 & 0.288              & 0.382             & -              & -              & -            & -           & 0.328          & 0.422          & \textbf{0.269} & \textbf{0.375} & 0.489         & 0.528        & 0.514         & 0.563        & 3.528         & 2.196        & -            & -            & 0.414        & 0.519       & 1.136       & 0.876       & 2.246        & 3.077        \\
\multicolumn{1}{c|}{}        & 720 & \textbf{0.355}     & \textbf{0.446}    & -              & -              & -            & -           & 0.428          & 0.494          & 0.427          & 0.479          & 0.540         & 0.571        & 0.558         & 0.609        & 4.891         & 4.047        & -            & -            & 0.563        & 0.595       & 1.251       & 0.933       & 4.243        & 1.415        \\
\multicolumn{1}{c|}{}        & 960 & \textbf{0.393}     & \textbf{0.478}    & -              & -              & -            & -           & 0.432          & 0.497          & 0.595          & 0.573          & 0.582         & 0.608        & 0.624         & 0.645        & 7.019         & 5.105        & -            & -            & 0.657        & 0.683       & 1.370       & 0.982       & 6.901        & 4.260        \\ \midrule
\multicolumn{2}{c}{Count}          & \textbf{19}        & \textbf{20}       & 2              & 2              & 0            & 0           & 3              & 2              & 1              & 1              & 0             & 0            & 0             & 0            & 0             & 0            & 0            & 0            & 0            & 0           & 0           & 0           & 0            & 0            \\ \bottomrule
\end{tabular}
}
\label{tab:informer-s-long-original}
\end{table}

\textbf{Multivariate signals.} We additionally compare the performance of \ourmethod{} to state-of-the-art comparison methods on ETT multivariate settings. We focus on horizon length $720$, the longest evaluated in prior works. In Table \ref{tab:informer-m-long}, we find \ourmethod{} is competitive with NLinear, which achieves best performance among compparison methods. \ourmethod{} also notably outperforming S4 by large margins, supporting the companion matrix representation once more.   

\begin{table}[!t]
\caption{\textbf{Multivariate forecasting} results on Informer datasets. \textbf{Best} results in \textbf{bold}. \ourmethod{} obtains MSE and MAE competitive with NLinear, the prior state-of-the-art.}
\resizebox{\linewidth}{!}{
\begin{tabular}{@{}ccbcbcbcbcbcbcbc@{}}
\toprule
\multicolumn{2}{c}{Methods}      & \multicolumn{2}{c}{\ourmethod{}}   & \multicolumn{2}{c}{NLinear}     & \multicolumn{2}{c}{FiLM} & \multicolumn{2}{c}{S4} & \multicolumn{2}{c}{FEDformer} & \multicolumn{2}{c}{Autoformer} & \multicolumn{2}{c}{Informer} \\ \midrule
\multicolumn{2}{c}{Metric}       & MSE            & MAE            & MSE            & MAE            & MSE         & MAE        & MSE        & MAE       & MSE           & MAE           & MSE            & MAE           & MSE           & MAE          \\ \midrule
\multicolumn{1}{c|}{ETTh1} & 720 & 0.499          & 0.480           & \textbf{0.440}  & \textbf{0.453} & 0.465       & 0.472      & 1.074      & 0.814     & 0.506         & 0.507         & 0.514          & 0.512         & 1.181         & 0.865        \\
\multicolumn{1}{c|}{ETTh2} & 720 & 0.402          & \textbf{0.434} & \textbf{0.394} & 0.436          & 0.439       & 0.456      & 2.973      & 1.333     & 0.463         & 0.474         & 0.515          & 0.511         & 3.647         & 1.625        \\
\multicolumn{1}{c|}{ETTm1} & 720 & \textbf{0.408} & \textbf{0.415} & 0.433          & 0.422          & 0.420       & 0.420      & 0.738      & 0.655     & 0.543         & 0.49          & 0.671          & 0.561         & 1.166         & 0.823        \\
\multicolumn{1}{c|}{ETTm2} & 720 & \textbf{0.358} & \textbf{0.378} & 0.368 & 0.384 & 0.393       & 0.422      & 2.074      & 1.074     & 0.421         & 0.415         & 0.433          & 0.432         & 3.379         & 1.338        \\ \bottomrule
\end{tabular}
}
\label{tab:informer-m-long}
\end{table}

\subsection{Monash Forecasting} \label{app:monash_exps}

We report the results across all datasets in Table \ref{tab:monash}. We also investigate the performance of models by aggregating datasets based on common characteristics. Concretely, we generate sets of tasks\footnote{A task can belong to multiple splits, resulting in overlapping splits. For example, a task can involve both long context as well as long forecasting horizon.} based on the following properties: 
\begin{itemize}
    \item \textit{Large dataset:} the dataset contains more than $2000$ effective training samples.
    \item \textit{Long context:} the models are provided a context of length greater than $20$ as input.
    \item \textit{Long horizon:} and the models are asked to forecast longer than $20$ steps in the future.
\end{itemize} 
Figure \ref{fig:monash_rankings} shows the average $x/13$ model ranking in terms of test RMSE across splits. We contextualize \ourmethod{} results with best classical and deep learning methods (TBATS and DeepAR). \ourmethod{} relative performance is noticeably higher when context and forecasting horizons are longer, and when a larger number of samples is provided during training.

\begin{figure}[ht]
    \centering
    \includegraphics[width=0.5\linewidth]{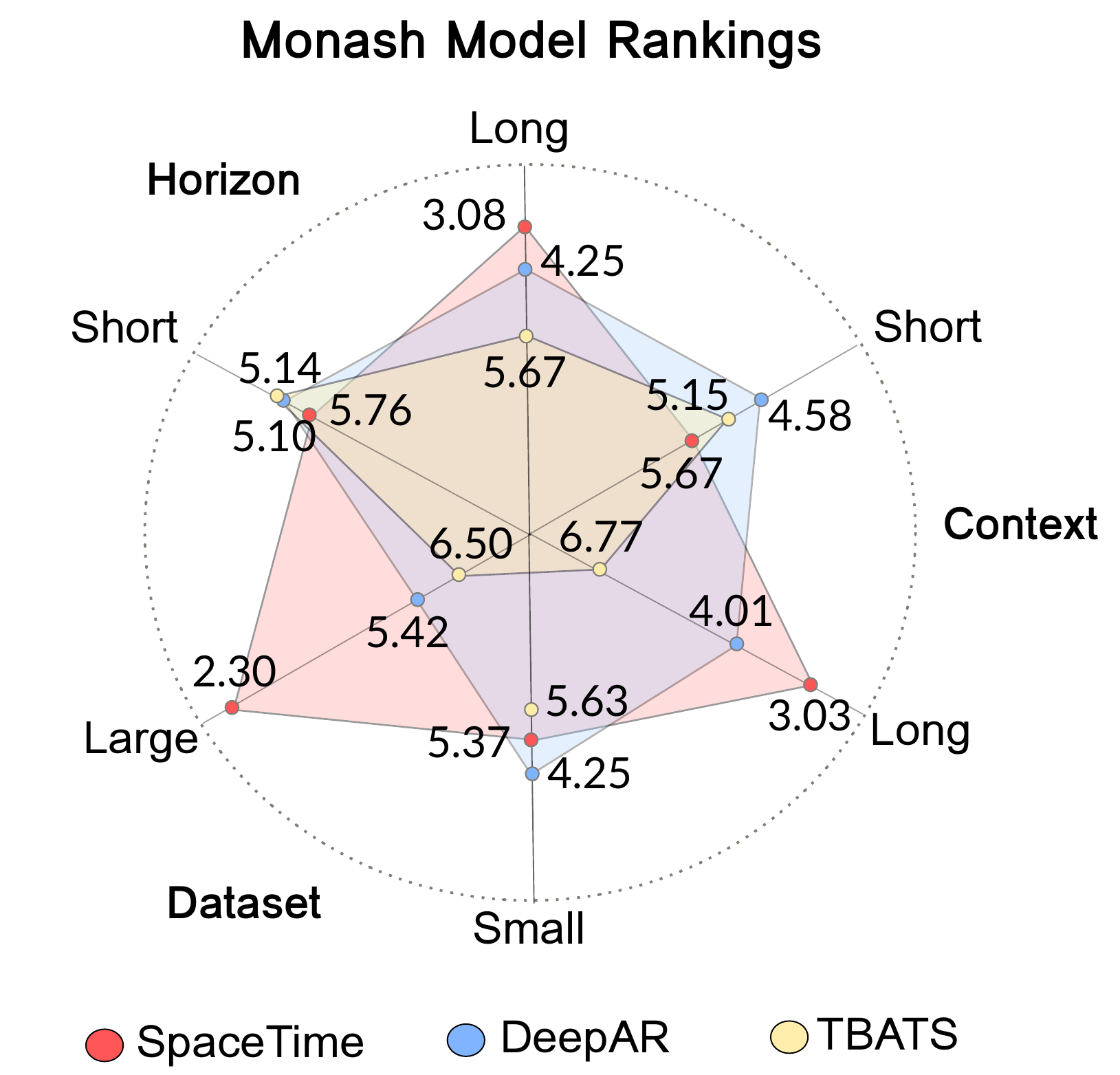}
    \caption{ Relative test RMSE rankings ($*/13$ models) across different slices of the $33$ datasets in the Monash repository \citep{godahewa2021monash}. \ourmethod{} sets best overall ranking across all tasks and is significantly more accurate on tasks involving long forecast horizon and larger number of training samples.}
    \label{fig:monash_rankings}
\end{figure}

\subsection{ECG Classification}\label{appendix:ecg_results}

In addition to our results table in the main paper, we also provide the mean and standard deviations of the two models we ran in house (\ourmethod{} and S4) in Table \ref{tab:ecg_stats}.

\begin{table}[H]
    \centering
    \caption{ \textbf{ECG statement classification} on PTB-XL (100 Hz version). We report the mean and standard deviation over three random seeds for the three methods we ran in house.}
    \label{tab:ecg_stats}
    \begin{tabular}{@{}lcccccc@{}}
\toprule
Task AUROC & All            & Diag           & Sub-diag       & Super-diag     & Form           & Rhythm         \\ \midrule
\ourmethod{}            & $93.6 (0.13)$   & $94.1 (0.12)$ & $93.3(0.34)$ & $92.9 (0.09)$ & $88.3(0.63)$          & $96.7 (0.05)$    \\
S4    & $93.8 (0.38)$ & $93.9 (0.15)$  & $92.9 (0.11)$ & $93.1 (0.07)$    & $89.5 (0.66)$  & $97.7 (0.04)$ \\
Transformer    & $85.7 (0.30)$ & $87.6 (0.41)$  & $88.2 (0.20)$ & $88.7 (0.28)$    & $77.1 (0.45)$  & $83.1 (0.72)$ \\
\bottomrule
\end{tabular}
    
\end{table}

\subsection{Efficiency Results}

We additionally empirically validate that \ourmethod{} trains in near-linear time with horizon sequence length. We also use synthetic data, scaling horizon from $1 - 1000$. 

\begin{figure}[H]
  \centering
    \includegraphics[width=0.5\textwidth]{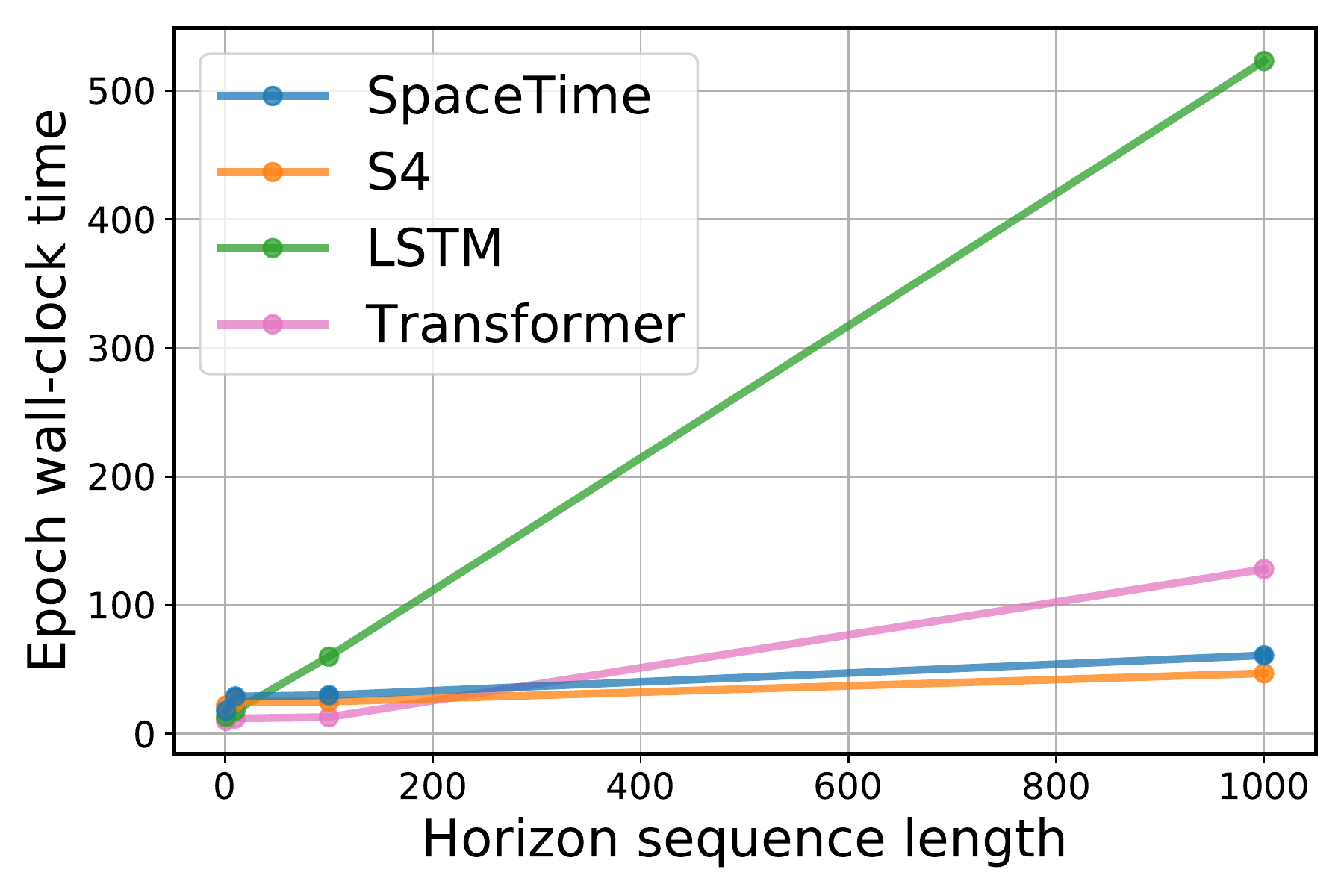}
  \caption{Training wall-clock time versus horizon length for \ourmethod{}, S4, LSTM, and Transformer. }
  \label{fig:horizon_efficiency} 
\end{figure}

\subsection{\ourmethod{} Ablations}\label{appendix:ablations}
To better understand how the proposed \ourmethod{} SSMs lead to the improved empirical performance, we include ablations on the individual closed-loop forecasting SSM (Section~\ref{sec:forecasting_ssm}) and preprocessing SSMs (Section~\ref{sec:preprocessing_ssms}). 

\subsubsection{Closed-loop Forecasting SSM}\label{appendix:ablations_closed_loop_ssm}
To study how the closed-loop SSM improves long horizon forecasting accuracy, we remove the closed-loop SSM component in our default \ourmethod{} forecasting architecture (\cf{} Appendix~\ref{appendix:architectures}, and compare the default \ourmethod{} with one without any closed-loop SSMs on Informer forecasting tasks. For models without closed-loop SSMs, we replace the last layer with the standard ``open-loop'' SSM framework in Section~\ref{sec:method_spacetime_layer}), and keep all other layers the same. Finally, for baseline comparison against another SSM without the closed-loop component, we compare against S4. 

In Table~\ref{tab:ablation_results_closed_loop_ssm}, we report standardized MSE on Informer ETT datasets. Adding the closed-loop SSM consistently improves forecasting accuracy, on average lowering relative MSE by 33.2\%. Meanwhile, even without the closed-loop SSM, \ourmethod{} outperforms S4, again suggesting that the companion matrix parameterization is beneficial for autoregressive time series forecasting. 

\begin{table}[H]
    \centering
    \caption{ \textbf{Closed-loop SSM Ablation} We ablate the closed-loop SSM component in \ourmethod{}, comparing against the prior S4 SSM on four Informer time series forecasting tasks. Removing the closed-loop SSM consistently hurts forecasting accuracy for \ourmethod{}. }
    \label{tab:ablation_results_closed_loop_ssm}
    \begin{tabular}{@{}lbcbcbcbc@{}}
\toprule
                            & \multicolumn{2}{c}{ETTh1 (720)} & \multicolumn{2}{c}{ETTh2 (720)} & \multicolumn{2}{c}{ETTm1 (720)} & \multicolumn{2}{c}{ETTm2 (720)} \\ \cmidrule(l){2-9} 
Method / Ablation                      & MSE            & MAE            & MSE            & MAE            & MSE            & MAE            & MSE            & MAE            \\ \midrule
\ourmethod{}                & 0.076          & 0.222          & 0.188          & 0.352          & 0.074          & 0.213          & 0.166          & 0.318          \\
\ourmethod{} No Closed-loop & 0.114          & 0.271          & 0.278          & 0.431          & 0.156          & 0.310          & 0.213          & 0.365          \\
S4 (No Closed-loop)         & 0.190          & 0.355          & 0.630          & 0.662          & 0.254          & 0.433          & 0.482          & 0.567          \\ \bottomrule
\end{tabular}
    
\end{table}

\subsubsection{Preprocessing SSM}\label{appendix:ablations_preprocessing_ssm}

To study how the preprocessing SSM improves long horizon forecasting accuracy, we next compare how \ourmethod{} performs with and without the weight-initializing preprocessing SSMs introduced in Section~\ref{sec:preprocessing_ssms}. We compare the default \ourmethod{} architecture (Table~\ref{tab:spacetime_forecasting_arch} with (1) replacing the preprocessing SSMs with randomly initialized default companion SSMs, and (2) removing the preprocessing SSMs altogether. For the former, we preserve the number of layers, but now train the first-layer SSM weights. For the latter, there is one-less layer, but the same number of trainable parameters (as we fix and freeze the weights for each preprocessing SSM).

In Table~\ref{tab:ablation_results_preprocessing_ssm}, we report standardized MSE on Informer ETT datasets. We find fixing the first layer SSMs of a \ourmethod{} network to preprocessing SSMs consistently improves forecasting performance, achieving 4.55\% lower MSE on average than the ablation with just trainable companion matrices. Including the preprocessing layer also improves MSE by 9.26\% on average compared to removing the layer altogether. These results suggest that preprocessing SSMs are beneficial for time series forecasting, \eg{} by performing classic time series modeling techniques on input data. Unlike other approaches, \ourmethod{} is able to flexibly and naturally incorporate these operations into its network layers via simple weight initializations of the same general companion SSM structure.

\begin{table}[H]
    \centering
    \caption{ \textbf{Preprocessing SSM Ablation} We ablate the preprocessing SSM layer in \ourmethod{}, comparing against either replacing the SSMs with companion SSMs (Companion) or removing the layer (Removed). Including preprocessing SSMs consistently improves forecasting accuracy. }
    \label{tab:ablation_results_preprocessing_ssm}
\resizebox{\linewidth}{!}{
\begin{tabular}{@{}lbcbcbcbc@{}}
\toprule
                                           & \multicolumn{2}{c}{ETTh1 (720)} & \multicolumn{2}{c}{ETTh2 (720)} & \multicolumn{2}{c}{ETTm1 (720)} & \multicolumn{2}{c}{ETTm2 (720)} \\ \cmidrule(l){2-9} 
Method / Ablation                          & MSE            & MAE            & MSE            & MAE            & MSE            & MAE            & MSE            & MAE            \\ \midrule
SpaceTime                                  & 0.076          & 0.222          & 0.188          & 0.352          & 0.074          & 0.213          & 0.166          & 0.318          \\
SpaceTime No Preprocessing (Companion)     & 0.076          & 0.224          & 0.194          & 0.358          & 0.079          & 0.218          & 0.182          & 0.336          \\
SpaceTime No Preprocessing (Removed) & 0.078          & 0.227          & 0.204          & 0.367          & 0.087          & 0.232          & 0.188          & 0.326          \\ \bottomrule
\end{tabular}
}    
\end{table}

\subsection{\ourmethod{} Architectures}\label{appendix:architectures}

We provide the specific \ourmethod{} architecture configurations used for forecasting and classification tasks. Each configuration follows the general architecture presented in Section~\ref{sec:expressive_ssm_layer} and Figure~\ref{fig:arch_overview}, and consists of repeated Multi-SSM \ourmethod{} layers. We first provide additional details on specific instantiations of the companion SSMs we use in our models, \eg{} how we instantiate preprocessing SSMs to recover specific techniques (Section~\ref{sec:preprocessing_ssms}). We then include the layer-specific details of the number and type of SSM used in each network. 

\subsubsection{Specific SSM parameterizations}\label{appendix:specific_ssm_parameterizations}
In Section~\ref{sec:expressive_ssm_with_companion}, we described the general form of the companion SSM used in this work. By default, for any individual SSM we learn the $a$ column in $\zA$ and the vectors $\zB, \zC$ as trainable parameters in a neural net module. We refer to these SSMs specifically as \textbf{companion SSMs}. 

In addition, as discussed in Sections~\ref{sec:expressive_ssm_with_companion} and ~\ref{sec:preprocessing_ssms}, we can also fix $a$, $\zB$, or $\zC$ to specific values to recover useful operations when computing the SSM outputs. We describe specific instantiations of the companion SSM used in our models below (with dimensionality referring to one SSM). 

\header{Shift SSM}
We fix the $\boldsymbol{a}$ vector in the companion state matrix $\zA \in \mathbb{R}^{d \times d}$ to the $\boldsymbol{0}$ vector $\in \mathbb{R}^d$, such that $\zA$ is the shift matrix (see Eq.~\ref{eq:shift_matrix_example} for an example). This is a generalization of a 1-D ``sliding window'' convolution with fixed kernel size equal to SSM state dimension $d$. To see how, note that if $\zB$ is also fixed to the first basis vector $\boldsymbol{e_1} \in \mathbb{R}^{d \times 1}$, then this exactly recovers a 1-D convolution with kernel determined by $\zC$.

\header{Differencing SSM}
As a specific version of the preprocessing SSM discussed in Section~\ref{sec:preprocessing_ssms}, we fix $\boldsymbol{a} = \boldsymbol{0}$, $\zB = \boldsymbol{e_1}$, and set $\zC$ to recover various order differencing when computing the SSM, \ie{}
\begin{align}
    \zC &= 
    \begin{bmatrix}
    1 & \phantom{-}0 & 0 & \phantom{-}0 & 0 & \ldots & 0 \\
    \end{bmatrix}
    \;\;\;\;\;\;
    \begin{matrix}
    \hfill\text{(0-order differencing, \ie{} an identity function)} \\
    \end{matrix} \\
    \zC &= 
    \begin{bmatrix}
    1 & -1 & 0 & \phantom{-}0 & 0 & \ldots & 0 \\
    \end{bmatrix}
    \;\;\;\;\;\;
    \begin{matrix}
    \hfill\text{(1st-order differencing)} \\
    \end{matrix} \\
    \zC &= 
    \begin{bmatrix}
    1 & -2 & 1 & \phantom{-}0 & 0 & \ldots & 0 \\
    \end{bmatrix}
    \;\;\;\;\;\;
    \begin{matrix}
    \hfill\text{ (2nd-order differencing)} \\
    \end{matrix} \\
    \zC &= 
    \begin{bmatrix}
    1 & -3 & 3 & -1 & 0 & \ldots & 0 \\
    \end{bmatrix}
    \;\;\;\;\;\;
    \begin{matrix}
    \hfill\text{ (3rd-order differencing)} \\
    \end{matrix}
\end{align}
 In this work, we only use the above 0, 1st, 2nd, or 3rd-order differencing instantiations. With multiple differencing SSMs in a multi-SSM \ourmethod{} layer, we initialize differencing SSMs by running through the orders repeatedly in sequence. For example, given five differencing SSMs, the first four SSMs perform 0, 1st, 2nd, and 3rd-order differencing respectively, while the fifth performs 0-order differencing again.

\header{Moving Average Residual (MA residual) SSM}
As another version of the preprocessing SSM, we can fix $\boldsymbol{a} = \boldsymbol{0}$, $\zB = \boldsymbol{e_1}$, and set $\zC$ such that the SSM outputs sample residuals from a moving average applied over the input sequence. For an $n$-order moving average, we compute outputs with $\zC$ specified as
\begin{align}
    \zC &= 
    \begin{bmatrix}
    1 - 1/n, & -1/n, & \ldots & -1/n, & 0 & \ldots & 0 \\
    \end{bmatrix}
    \;\;\;\;\;\;
    \begin{matrix}
    \hfill\text{($n$-order moving average residual)} \\
    \end{matrix}
\end{align}
For each MA residual SSM, we randomly initialize the order by uniform-randomly sampling an integer in the range $[4, d]$, where $d$ is again the state-space dimension size (recall $\zC \in \mathbb{R}^{1 \times d}$). We pick $4$ as a heuristic which was not finetuned; we leave additional optimization here for further work.

\subsubsection{Task-specific \ourmethod{} Architectures}\label{appendix:specific_spacetime_architectures}

Here we provide layer-level details on the \ourmethod{} networks used in this work. For each task, we describe number of layers, number of SSMs per layer, state-space dimension (fixed for all SSMs in a network), and which SSMs are used in each layer. 

Expanding on this last detail, as previously discussed in Section~\ref{sec:method_spacetime_layer}, in each \ourmethod{} layer we can specify multiple SSMs in each layer, computing their outputs in parallel to produce a multidimensional output that is fed as the input to the next \ourmethod{} layer. The ``types'' of SSMs do not all have to be the same per layer, and we list the type (companion, shift, differencing, MA residual) and closed-loop designation (standard, closed-loop) of the SSMs in each layer below.

For an additional visual overview of a \ourmethod{} network, please refer back to Figure~\ref{fig:arch_overview}.

\header{Forecasting: Informer and Monash}
We describe the architecture in Table~\ref{tab:spacetime_forecasting_arch}. We treat the first \ourmethod{} layer as ``preprocessing'' layer, which performs differencing and moving average residual operations on the input sequence. We treat the last \ourmethod{} layer as a ``forecasting'' layer, which autoregressively outputs future horizon predictions given the second-to-last layer's outputs as an input sequence.

\header{Classification: ECG}
We describe the architectures for each ECG classification task in Tables~\ref{tab:spacetime_ecg_superdiag}--\ref{tab:spacetime_ecg_all}. For all models, we use state-space dimension $d = 64$. As described in the experiments, for classification we compute logits with a mean pooling over the output sequence, where pooling is computed over the sequence length. 

\header{Classification: Speech Audio}
We describe the architecture for the Speech Audio task in Table~\ref{tab:spacetime_speech}. We use state-space dimension $d = 1024$. As described in the experiments, for classification we compute logits with a mean pooling over the output sequence, where pooling is computed over the sequence length.

\begin{table}[]
\centering
\caption{\ourmethod{} forecasting architecture. For all SSMs, we keep state-space dimension $d = 128$. Repeated Identity denotes repeating the input to match the number of SSMs in the next layer, \ie{} 128 SSMs in this case. For each forecasting task, $d'$ denotes time series samples' number of features, $\ell$ denotes the lag size (number of past samples given as input), and $h$ denotes the horizon size (number of future samples to be predicted).}
\label{tab:spacetime_forecasting_arch}
\begin{tabular}{@{}c|c|c|c@{}}
Layer                             & Details                                                                                                                                                                                                                                                                                                                                                                                                                                                                                     & Input Size        & Output Size       \\ \midrule
\multicolumn{1}{c|}{Decoder}     & Linear                                                                                                                                                                                                                                                                                                                                                                                                                                                                                     & $128 \times \ell$ &  $d' \times h$   \\ \midrule
\multicolumn{1}{c|}{SSM Layer 3} & \begin{math}\begin{aligned}    \begin{matrix}    \text{\phantom{-}} \\    \end{matrix}    \\    \begin{bmatrix}    \text{\phantom{-} Companion \phantom{-} } \\ \text{(closed-loop)} \\    \end{bmatrix}    \times 128     \\    \begin{matrix}    \text{\phantom{-}} \\    \end{matrix}    \\\end{aligned}\end{math}                                                                                                                                                                        & $128 \times \ell$ & $128 \times \ell$ \\ \midrule
\multicolumn{1}{c|}{SSM Layer 2} & \begin{math}\begin{aligned}    \begin{matrix}    \text{\phantom{-}} \\    \end{matrix}    \\    \begin{bmatrix}    \text{\phantom{-} Companion \phantom{-}}  \\ \text{(standard)} \\    \end{bmatrix}    \times 128     \\    \begin{matrix}    \text{\phantom{-}} \\    \end{matrix}    \\\end{aligned}\end{math}                                                                                                                                                                             & $128 \times \ell$ & $128 \times \ell$ \\ \midrule
\multicolumn{1}{c|}{\text{SSM Layer 1}} & \begin{math}\begin{aligned}    \begin{matrix}    \text{\phantom{-}} \\    \end{matrix}    \\    &    \begin{bmatrix}    \text{\phantom{-} Differencing \phantom{-}}  \\ \text{(standard)} \\    \end{bmatrix}    \times 64     \\    \begin{matrix}    \text{\phantom{-}} \\    \end{matrix}    \\    &    \begin{bmatrix}    \text{\phantom{-} MA Residual\phantom{-} }  \\ \text{(standard)} \\    \end{bmatrix}     \times 64      \\     \begin{matrix}    \text{\phantom{-}} \\    \end{matrix}    \\\end{aligned}\end{math} & $128 \times \ell$ & $128 \times \ell$ \\ \midrule
\multicolumn{1}{c|}{Encoder}     & Repeated Identity                                                                                                                                                                                                                                                                                                                                                                                                                                                                                   & $d' \times \ell$   & $128 \times \ell$ \\ \bottomrule
\end{tabular}
\end{table}

\begin{table}[]
\centering
\caption{\ourmethod{} architecture for ECG SuperDiagnostic classification. For all SSMs, we keep state-space dimension $d = 64$. Input samples have $d' = 12$ features and are length $\ell = 1000$ time-steps long. The number of classes $c = 5$.}
\label{tab:spacetime_ecg_superdiag}
\begin{tabular}{@{}c|c|c|c@{}}
Layer       & Details                                                                                                                                                                                                                                                                                                                 & Input Size        & Output Size       \\ \midrule
Classifier  & Mean Pooling                                                                                                                                                                                                                                                                                                            & $c \times \ell$   & $c \times 1$      \\ \midrule
Decoder     & Linear                                                                                                                                                                                                                                                                                                                  & $256 \times \ell$ & $c \times \ell$   \\ \midrule
SSM Layer 5 & \begin{math}\begin{aligned}    \begin{matrix}    \text{\phantom{-}} \\    \end{matrix}    \\    \begin{bmatrix}    \text{\phantom{-} Companion \phantom{-}}  \\ \text{(standard)} \\    \end{bmatrix}    \times 256     \\    \begin{matrix}    \text{\phantom{-}} \\    \end{matrix}    \\\end{aligned}\end{math}          & $256 \times \ell$ & $256 \times \ell$ \\ \midrule
SSM Layer 4 & \begin{math}\begin{aligned}    \begin{matrix}    \text{\phantom{-}} \\    \end{matrix}    \\    \begin{bmatrix}    \text{\phantom{-} Companion \phantom{-}} \\ \text{(standard)} \\    \end{bmatrix}    \times 256     \\    \begin{matrix}    \text{\phantom{-}} \\    \end{matrix}    \\\end{aligned}\end{math}          & $256 \times \ell$ & $256 \times \ell$ \\ \midrule
SSM Layer 3 & \begin{math}\begin{aligned}    \begin{matrix}    \text{\phantom{-}} \\    \end{matrix}    \\    \begin{bmatrix}    \text{\phantom{-} Companion \phantom{-}} \\ \text{(standard)} \\    \end{bmatrix}    \times 256     \\    \begin{matrix}    \text{\phantom{-}} \\    \end{matrix}    \\\end{aligned}\end{math}          & $256 \times \ell$ & $256 \times \ell$ \\ \midrule
SSM Layer 2 & \begin{math}\begin{aligned}    \begin{matrix}    \text{\phantom{-}} \\    \end{matrix}    \\    \begin{bmatrix}    \text{\phantom{-} Companion \phantom{-}} \\ \text{(standard)} \\    \end{bmatrix}    \times 256     \\    \begin{matrix}    \text{\phantom{-}} \\    \end{matrix}    \\\end{aligned}\end{math}          & $256 \times \ell$ & $256 \times \ell$ \\ \midrule
SSM Layer 1 & \begin{math}\begin{aligned}    \begin{matrix}    \text{\phantom{-}} \\    \end{matrix}    \\    &    \begin{bmatrix}    \text{\phantom{-} Differencing \phantom{-}} \\ \text{(standard)} \\    \end{bmatrix}    \times 256     \\     \begin{matrix}    \text{\phantom{-}} \\    \end{matrix}    \\\end{aligned}\end{math} & 256 $\times \ell$ & 256 $\times \ell$ \\ \midrule
Encoder     & Linear & $d' \times \ell$  & $256 \times \ell$ \\ \bottomrule
\end{tabular}
\end{table}

\begin{table}[]
\centering
\caption{\ourmethod{} architecture for ECG SubDiagnostic classification. For all SSMs, we keep state-space dimension $d = 64$. Input samples have $d' = 12$ features and are length $\ell = 1000$ time-steps long. The number of classes $c = 23$.}
\label{tab:spacetime_ecg_subdiag}
\begin{tabular}{@{}c|c|c|c@{}}
Layer       & Details                                                                                                                                                                                                                                                                                                                 & Input Size        & Output Size       \\ \midrule
Classifier  & Mean Pooling                                                                                                                                                                                                                                                                                                            & $c \times \ell$   & $c \times 1$      \\ \midrule
Decoder     & Linear                                                                                                                                                                                                                                                                                                                  & $256 \times \ell$ & $c \times \ell$   \\ \midrule
SSM Layer 5 & \begin{math}\begin{aligned}    \begin{matrix}    \text{\phantom{-}} \\    \end{matrix}    \\    \begin{bmatrix}    \text{\phantom{-} Shift \phantom{-}} \\ \text{(standard)} \\    \end{bmatrix}    \times 256     \\    \begin{matrix}    \text{\phantom{-}} \\    \end{matrix}    \\\end{aligned}\end{math}              & $256 \times \ell$ & $256 \times \ell$ \\ \midrule
SSM Layer 4 & \begin{math}\begin{aligned}    \begin{matrix}    \text{\phantom{-}} \\    \end{matrix}    \\    \begin{bmatrix}    \text{\phantom{-} Shift \phantom{-}}  \\ \text{(standard)} \\    \end{bmatrix}    \times 256     \\    \begin{matrix}    \text{\phantom{-}} \\    \end{matrix}    \\\end{aligned}\end{math}              & $256 \times \ell$ & $256 \times \ell$ \\ \midrule
SSM Layer 3 & \begin{math}\begin{aligned}    \begin{matrix}    \text{\phantom{-}} \\    \end{matrix}    \\    \begin{bmatrix}    \text{\phantom{-} Shift \phantom{-}} \\ \text{(standard)} \\    \end{bmatrix}    \times 256     \\    \begin{matrix}    \text{\phantom{-}} \\    \end{matrix}    \\\end{aligned}\end{math}              & $256 \times \ell$ & $256 \times \ell$ \\ \midrule
SSM Layer 2 & \begin{math}\begin{aligned}    \begin{matrix}    \text{\phantom{-}} \\    \end{matrix}    \\    \begin{bmatrix}    \text{\phantom{-} Shift \phantom{-}} \\ \text{(standard)} \\    \end{bmatrix}    \times 256     \\    \begin{matrix}    \text{\phantom{-}} \\    \end{matrix}    \\\end{aligned}\end{math}              & $256 \times \ell$ & $256 \times \ell$ \\ \midrule
SSM Layer 1 & \begin{math}\begin{aligned}    \begin{matrix}    \text{\phantom{-}} \\    \end{matrix}    \\    &    \begin{bmatrix}    \text{\phantom{-} Differencing \phantom{-}} \\ \text{(standard)} \\    \end{bmatrix}    \times 256     \\     \begin{matrix}    \text{\phantom{-}} \\    \end{matrix}    \\\end{aligned}\end{math} & $256 \times \ell$ & $256 \times \ell$ \\ \midrule
Encoder     & Linear                                                                                                                                                                                                                                                                                                                  & $d' \times \ell$  & $256 \times \ell$ \\ \bottomrule
\end{tabular}
\end{table}

\begin{table}[]
\centering
\caption{\ourmethod{} architecture for ECG Diagnostic classification. For all SSMs, we keep state-space dimension $d = 64$. Input samples have $d' = 12$ features and are length $\ell = 1000$ time-steps long. The number of classes $c = 44$.}
\label{tab:spacetime_ecg_diag}
\begin{tabular}{@{}c|c|c|c@{}}
Layer       & Details                                                                                                                                                                                                                                                                                                                 & Input Size        & Output Size       \\ \midrule
Classifier  & Mean Pooling                                                                                                                                                                                                                                                                                                            & $c \times \ell$   & $c \times 1$      \\ \midrule
Decoder     & Linear                                                                                                                                                                                                                                                                                                                  & $256 \times \ell$ & $c \times \ell$   \\ \midrule
SSM Layer 5 & \begin{math}\begin{aligned}    \begin{matrix}    \text{\phantom{-}} \\    \end{matrix}    \\    \begin{bmatrix}    \text{\phantom{-} Shift \phantom{-}} \\ \text{(standard)} \\    \end{bmatrix}    \times 256     \\    \begin{matrix}    \text{\phantom{-}} \\    \end{matrix}    \\\end{aligned}\end{math}              & $256 \times \ell$ & $256 \times \ell$ \\ \midrule
SSM Layer 4 & \begin{math}\begin{aligned}    \begin{matrix}    \text{\phantom{-}} \\    \end{matrix}    \\    \begin{bmatrix}    \text{\phantom{-} Shift \phantom{-}}  \\ \text{(standard)} \\    \end{bmatrix}    \times 256     \\    \begin{matrix}    \text{\phantom{-}} \\    \end{matrix}    \\\end{aligned}\end{math}              & $256 \times \ell$ & $256 \times \ell$ \\ \midrule
SSM Layer 3 & \begin{math}\begin{aligned}    \begin{matrix}    \text{\phantom{-}} \\    \end{matrix}    \\    \begin{bmatrix}    \text{\phantom{-} Shift \phantom{-}}  \\ \text{(standard)} \\    \end{bmatrix}    \times 256     \\    \begin{matrix}    \text{\phantom{-}} \\    \end{matrix}    \\\end{aligned}\end{math}              & $256 \times \ell$ & $256 \times \ell$ \\ \midrule
SSM Layer 2 & \begin{math}\begin{aligned}    \begin{matrix}    \text{\phantom{-}} \\    \end{matrix}    \\    \begin{bmatrix}    \text{\phantom{-} Shift \phantom{-}} \\ \text{(standard)} \\    \end{bmatrix}    \times 256     \\    \begin{matrix}    \text{\phantom{-}} \\    \end{matrix}    \\\end{aligned}\end{math}              & $256 \times \ell$ & $256 \times \ell$ \\ \midrule
SSM Layer 1 & \begin{math}\begin{aligned}    \begin{matrix}    \text{\phantom{-}} \\    \end{matrix}    \\    &    \begin{bmatrix}    \text{\phantom{-} Differencing \phantom{-}} \\ \text{(standard)} \\    \end{bmatrix}    \times 256     \\     \begin{matrix}    \text{\phantom{-}} \\    \end{matrix}    \\\end{aligned}\end{math} & $256 \times \ell$ & $256 \times \ell$ \\ \midrule
Encoder     & Linear                                                                                                                                                                                                                                                                                                                  & $d' \times \ell$  & $256 \times \ell$ \\ \bottomrule
\end{tabular}
\end{table}

\begin{table}[]
\centering
\caption{\ourmethod{} architecture for ECG Form classification. For all SSMs, we keep state-space dimension $d = 64$. Input samples have $d' = 12$ features and are length $\ell = 1000$ time-steps long. The number of classes $c = 19$.}
\label{tab:spacetime_ecg_form}
\begin{tabular}{@{}c|c|c|c@{}}
Layer       & Details                                                                                                                                                                                                                                                                                                                                                                                                                                                                                                                                      & Input Size        & Output Size       \\ \midrule
Classifier  & Mean Pooling                                                                                                                                                                                                                                                                                                                                                                                                                                                                                                                                 & $c \times \ell$   & $c \times 1$      \\ \midrule
Decoder     & Linear                                                                                                                                                                                                                                                                                                                                                                                                                                                                                                                                       & $256 \times \ell$ & $c \times \ell$   \\ \midrule
SSM Layer 5 & \begin{math}\begin{aligned}    \begin{matrix}    \text{\phantom{-}} \\    \end{matrix}    \\    \begin{bmatrix}    \text{\phantom{-} Companion \phantom{-}}  \\ \text{(standard)} \\    \end{bmatrix}    \times 256     \\    \begin{matrix}    \text{\phantom{-}} \\    \end{matrix}    \\\end{aligned}\end{math}                                                                                                                                                                                                                               & $256 \times \ell$ & $256 \times \ell$ \\ \midrule
SSM Layer 4 & \begin{math}\begin{aligned}    \begin{matrix}    \text{\phantom{-}} \\    \end{matrix}    \\    \begin{bmatrix}    \text{\phantom{-} Companion \phantom{-}}   \\ \text{(standard)} \\    \end{bmatrix}    \times 256     \\    \begin{matrix}    \text{\phantom{-}} \\    \end{matrix}    \\\end{aligned}\end{math}                                                                                                                                                                                                                               & $256 \times \ell$ & $256 \times \ell$ \\ \midrule
SSM Layer 3 & \begin{math}\begin{aligned}    \begin{matrix}    \text{\phantom{-}} \\    \end{matrix}    \\    \begin{bmatrix}    \text{\phantom{-} Companion \phantom{-}}   \\ \text{(standard)} \\    \end{bmatrix}    \times 256     \\    \begin{matrix}    \text{\phantom{-}} \\    \end{matrix}    \\\end{aligned}\end{math}                                                                                                                                                                                                                               & $256 \times \ell$ & $256 \times \ell$ \\ \midrule
SSM Layer 2 & \begin{math}\begin{aligned}    \begin{matrix}    \text{\phantom{-}} \\    \end{matrix}    \\    \begin{bmatrix}    \text{\phantom{-} Companion \phantom{-}}   \\ \text{(standard)} \\    \end{bmatrix}    \times 256     \\    \begin{matrix}    \text{\phantom{-}} \\    \end{matrix}    \\\end{aligned}\end{math}                                                                                                                                                                                                                               & $256 \times \ell$ & $256 \times \ell$ \\ \midrule
SSM Layer 1 & \begin{math}\begin{aligned}    \begin{matrix}    \text{\phantom{-}} \\    \end{matrix}    \\    &    \begin{bmatrix}    \text{\phantom{-} Differencing \phantom{-}}   \\ \text{(standard)} \\    \end{bmatrix}    \times 192     \\    \begin{matrix}    \text{\phantom{-}} \\    \end{matrix}    \\    &    \begin{bmatrix}    \text{\phantom{-} MA Residual\phantom{-} }   \\ \text{(standard)} \\    \end{bmatrix}     \times 64      \\     \begin{matrix}    \text{\phantom{-}} \\    \end{matrix}    \\\end{aligned}\end{math} & $256 \times \ell$ & $256 \times \ell$ \\ \midrule
Encoder     & Linear                                                                                                                                                                                                                                                                                                                                                                                                                                                                                                                                       & $d' \times \ell$  & $256 \times \ell$ \\ \bottomrule
\end{tabular}
\end{table}

\begin{table}[]
\centering
\caption{\ourmethod{} architecture for ECG Rhythm classification. For all SSMs, we keep state-space dimension $d = 64$. Input samples have $d' = 12$ features and are length $\ell = 1000$ time-steps long. The number of classes $c = 12$.}
\label{tab:spacetime_ecg_rhythm}
\begin{tabular}{@{}c|c|c|c@{}}
Layer       & Details                                                                                                                                                                                                                                                                                                                                                                                                                                                                                                                                & Input Size        & Output Size       \\ \midrule
Classifier  & Mean Pooling                                                                                                                                                                                                                                                                                                                                                                                                                                                                                                                           & $c \times \ell$   & $c \times 1$      \\ \midrule
Decoder     & Linear                                                                                                                                                                                                                                                                                                                                                                                                                                                                                                                                 & $256 \times \ell$ & $c \times \ell$   \\ \midrule
SSM Layer 5 & \begin{math}\begin{aligned}    \begin{matrix}    \text{\phantom{-}} \\    \end{matrix}    \\    &    \begin{bmatrix}    \text{\phantom{-} Companion \phantom{-}}   \\ \text{(standard)} \\    \end{bmatrix}    \times 128     \\    \begin{matrix}    \text{\phantom{-}} \\    \end{matrix}    \\    &    \begin{bmatrix}    \text{\phantom{-} Shift \phantom{-} }   \\ \text{(standard)} \\    \end{bmatrix}     \times 128       \\     \begin{matrix}    \text{\phantom{-}} \\    \end{matrix}    \\\end{aligned}\end{math} & $256 \times \ell$ & $256 \times \ell$ \\ \midrule
SSM Layer 4 & \begin{math}\begin{aligned}    \begin{matrix}    \text{\phantom{-}} \\    \end{matrix}    \\    &    \begin{bmatrix}    \text{\phantom{-} Companion \phantom{-}}   \\ \text{(standard)} \\    \end{bmatrix}    \times 128     \\    \begin{matrix}    \text{\phantom{-}} \\    \end{matrix}    \\    &    \begin{bmatrix}    \text{\phantom{-} Shift \phantom{-} }   \\ \text{(standard)} \\    \end{bmatrix}     \times 128       \\     \begin{matrix}    \text{\phantom{-}} \\    \end{matrix}    \\\end{aligned}\end{math} & $256 \times \ell$ & $256 \times \ell$ \\ \midrule
SSM Layer 3 & \begin{math}\begin{aligned}    \begin{matrix}    \text{\phantom{-}} \\    \end{matrix}    \\    &    \begin{bmatrix}    \text{\phantom{-} Companion \phantom{-}}   \\ \text{(standard)} \\    \end{bmatrix}    \times 128     \\    \begin{matrix}    \text{\phantom{-}} \\    \end{matrix}    \\    &    \begin{bmatrix}    \text{\phantom{-} Shift \phantom{-} }   \\ \text{(standard)} \\    \end{bmatrix}     \times 128       \\     \begin{matrix}    \text{\phantom{-}} \\    \end{matrix}    \\\end{aligned}\end{math} & $256 \times \ell$ & $256 \times \ell$ \\ \midrule
SSM Layer 2 & \begin{math}\begin{aligned}    \begin{matrix}    \text{\phantom{-}} \\    \end{matrix}    \\    &    \begin{bmatrix}    \text{\phantom{-} Companion \phantom{-}}   \\ \text{(standard)} \\    \end{bmatrix}    \times 128     \\    \begin{matrix}    \text{\phantom{-}} \\    \end{matrix}    \\    &    \begin{bmatrix}    \text{\phantom{-} Shift \phantom{-} }   \\ \text{(standard)} \\    \end{bmatrix}     \times 128       \\     \begin{matrix}    \text{\phantom{-}} \\    \end{matrix}    \\\end{aligned}\end{math} & $256 \times \ell$ & $256 \times \ell$ \\ \midrule
SSM Layer 1 & \begin{math}\begin{aligned}    \begin{matrix}    \text{\phantom{-}} \\    \end{matrix}    \\    &    \begin{bmatrix}    \text{\phantom{-} Differencing \phantom{-}}   \\ \text{(standard)} \\    \end{bmatrix}    \times 256     \\     \begin{matrix}    \text{\phantom{-}} \\    \end{matrix}    \\\end{aligned}\end{math}                                                                                                                                                                                                                & $256 \times \ell$ & $256 \times \ell$ \\ \midrule
Encoder     & Linear                                                                                                                                                                                                                                                                                                                                                                                                                                                                                                                                 & $d' \times \ell$  & $256 \times \ell$ \\ \bottomrule
\end{tabular}
\end{table}

\begin{table}[]
\centering
\caption{\ourmethod{} architecture for ECG All classification. For all SSMs, we keep state-space dimension $d = 64$. Input samples have $d' = 12$ features and are length $\ell = 1000$ time-steps long. The number of classes $c = 71$.}
\label{tab:spacetime_ecg_all}
\begin{tabular}{@{}c|c|c|c@{}}
Layer       & Details                                                                                                                                                                                                                                                                                                                                                                                                                                                                                                                                      & Input Size        & Output Size       \\ \midrule
Classifier  & Mean Pooling                                                                                                                                                                                                                                                                                                                                                                                                                                                                                                                                 & $c \times \ell$   & $c \times 1$      \\ \midrule
Decoder     & Linear                                                                                                                                                                                                                                                                                                                                                                                                                                                                                                                                       & $256 \times \ell$ & $c \times \ell$   \\ \midrule
SSM Layer 5 & \begin{math}\begin{aligned} \begin{matrix} \text{\phantom{-}} \\ \end{matrix} \\ \begin{bmatrix} \text{\phantom{-} Shift \phantom{-}}   \\ \text{(standard)} \\ \end{bmatrix} \times 256 \\ \begin{matrix} \text{\phantom{-}} \\ \end{matrix} \\\end{aligned}\end{math}                                                                                                                                                                                                                                                                           & $256 \times \ell$ & $256 \times \ell$ \\ \midrule
SSM Layer 4 & \begin{math}\begin{aligned} \begin{matrix} \text{\phantom{-}} \\ \end{matrix} \\ \begin{bmatrix} \text{\phantom{-} Shift \phantom{-}}   \\ \text{(standard)} \\ \end{bmatrix} \times 256 \\ \begin{matrix} \text{\phantom{-}} \\ \end{matrix} \\\end{aligned}\end{math}                                                                                                                                                                                                                                                                           & $256 \times \ell$ & $256 \times \ell$ \\ \midrule
SSM Layer 3 & \begin{math}\begin{aligned} \begin{matrix} \text{\phantom{-}} \\ \end{matrix} \\ \begin{bmatrix} \text{\phantom{-} Shift \phantom{-}}   \\ \text{(standard)} \\ \end{bmatrix} \times 256 \\ \begin{matrix} \text{\phantom{-}} \\ \end{matrix} \\\end{aligned}\end{math}                                                                                                                                                                                                                                                                           & $256 \times \ell$ & $256 \times \ell$ \\ \midrule
SSM Layer 2 & \begin{math}\begin{aligned}    \begin{matrix}    \text{\phantom{-}} \\    \end{matrix}    \\    \begin{bmatrix}    \text{\phantom{-} Shift \phantom{-}}   \\ \text{(standard)} \\    \end{bmatrix}    \times 256     \\    \begin{matrix}    \text{\phantom{-}} \\    \end{matrix}    \\\end{aligned}\end{math}                                                                                                                                                                                                                                   & $256 \times \ell$ & $256 \times \ell$ \\ \midrule
SSM Layer 1 & \begin{math}\begin{aligned}    \begin{matrix}    \text{\phantom{-}} \\    \end{matrix}    \\    &    \begin{bmatrix}    \text{\phantom{-} Differencing \phantom{-}}   \\ \text{(standard)} \\    \end{bmatrix}    \times 192     \\    \begin{matrix}    \text{\phantom{-}} \\    \end{matrix}    \\    &    \begin{bmatrix}    \text{\phantom{-} MA Residual\phantom{-} }   \\ \text{(standard)} \\    \end{bmatrix}     \times 64      \\     \begin{matrix}    \text{\phantom{-}} \\    \end{matrix}    \\\end{aligned}\end{math} & $256 \times \ell$ & $256 \times \ell$ \\ \midrule
Encoder     & Linear                                                                                                                                                                                                                                                                                                                                                                                                                                                                                                                                       & $d' \times \ell$  & $256 \times \ell$ \\ \bottomrule
\end{tabular}
\end{table}

\begin{table}[]
\centering
\caption{\ourmethod{} architecture for Speech Audio classification. For all SSMs, we keep state-space dimension $d = 1024$. Input samples have $d' = 1$ features and are length $\ell = 16000$ time-steps long. The number of classes $c = 10$.}
\label{tab:spacetime_speech}
\begin{tabular}{@{}c|c|c|c@{}}
Layer       & Details                                                                                                                                                                                                                                                                                                        & Input Size                              & Output Size                             \\ \midrule
Classifier  & Mean Pooling                                                                                                                                                                                                                                                                                                   & \cellcolor[HTML]{FFFFFF}$c \times \ell$ & $c \times 1$                            \\ \midrule
Decoder     & Linear                                                                                                                                                                                                                                                                                                         & $256 \times \ell$                       & \cellcolor[HTML]{FFFFFF}$c \times \ell$ \\ \midrule
SSM Layer 6 & \begin{math}\begin{aligned}    \begin{matrix}    \text{\phantom{-}} \\    \end{matrix}    \\    \begin{bmatrix}    \text{\phantom{-} Companion \phantom{-}}   \\ \text{(standard)} \\    \end{bmatrix}    \times 256     \\    \begin{matrix}    \text{\phantom{-}} \\    \end{matrix}    \\\end{aligned}\end{math} & $256 \times \ell$                       & $256 \times \ell$                       \\ \midrule
SSM Layer 5 & \begin{math}\begin{aligned}    \begin{matrix}    \text{\phantom{-}} \\    \end{matrix}    \\    \begin{bmatrix}    \text{\phantom{-} Companion \phantom{-}}   \\ \text{(standard)} \\    \end{bmatrix}    \times 256     \\    \begin{matrix}    \text{\phantom{-}} \\    \end{matrix}    \\\end{aligned}\end{math} & $256 \times \ell$                       & $256 \times \ell$                       \\ \midrule
SSM Layer 4 & \begin{math}\begin{aligned}    \begin{matrix}    \text{\phantom{-}} \\    \end{matrix}    \\    \begin{bmatrix}    \text{\phantom{-} Companion \phantom{-}}   \\ \text{(standard)} \\    \end{bmatrix}    \times 256     \\    \begin{matrix}    \text{\phantom{-}} \\    \end{matrix}    \\\end{aligned}\end{math} & $256 \times \ell$                       & $256 \times \ell$                       \\ \midrule
SSM Layer 3 & \begin{math}\begin{aligned}    \begin{matrix}    \text{\phantom{-}} \\    \end{matrix}    \\    \begin{bmatrix}    \text{\phantom{-} Companion \phantom{-}}   \\ \text{(standard)} \\    \end{bmatrix}    \times 256     \\    \begin{matrix}    \text{\phantom{-}} \\    \end{matrix}    \\\end{aligned}\end{math} & $256 \times \ell$                       & $256 \times \ell$                       \\ \midrule
SSM Layer 2 & \begin{math}\begin{aligned}    \begin{matrix}    \text{\phantom{-}} \\    \end{matrix}    \\    \begin{bmatrix}    \text{\phantom{-} Companion \phantom{-}}   \\ \text{(standard)} \\    \end{bmatrix}    \times 256     \\    \begin{matrix}    \text{\phantom{-}} \\    \end{matrix}    \\\end{aligned}\end{math} & $256 \times \ell$                       & $256 \times \ell$                       \\ \midrule
SSM Layer 1 & \begin{math}\begin{aligned}    \begin{matrix}    \text{\phantom{-}} \\    \end{matrix}    \\    \begin{bmatrix}    \text{\phantom{-} Companion \phantom{-}}   \\ \text{(standard)} \\    \end{bmatrix}    \times 256     \\    \begin{matrix}    \text{\phantom{-}} \\    \end{matrix}    \\\end{aligned}\end{math} & $256 \times \ell$                       & $256 \times \ell$                       \\ \midrule
Encoder     & Linear                                                                                                                                                                                                                                                                                                         & $d' \times \ell$                        & $256 \times \ell$                       \\ \bottomrule
\end{tabular}
\end{table}

\begin{figure}[h]
    \centering
    \includegraphics[scale=0.35]{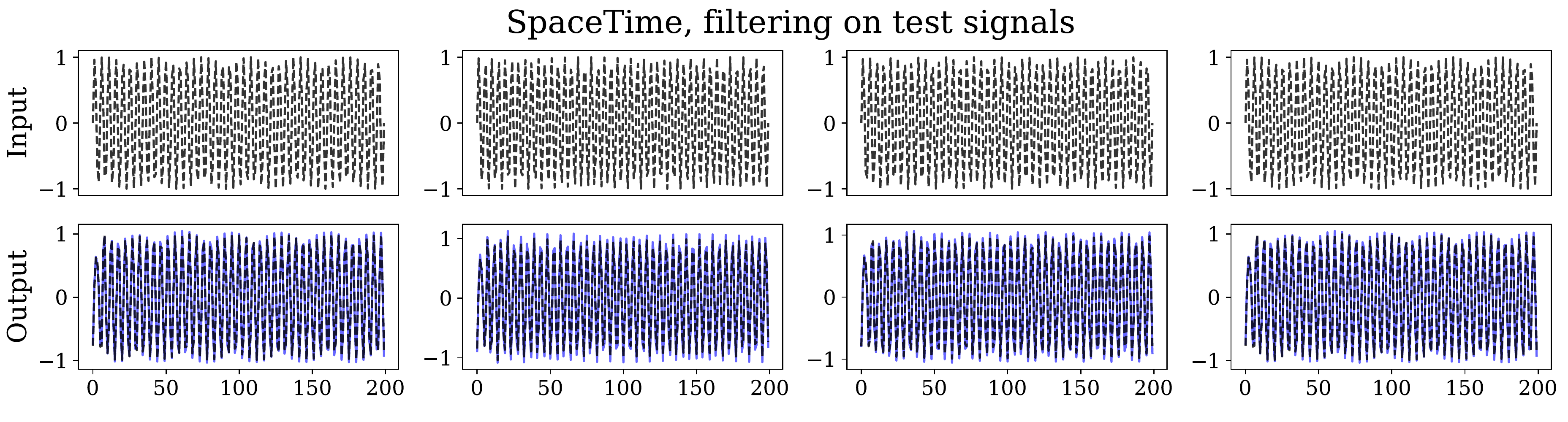}
    \includegraphics[scale=0.35]{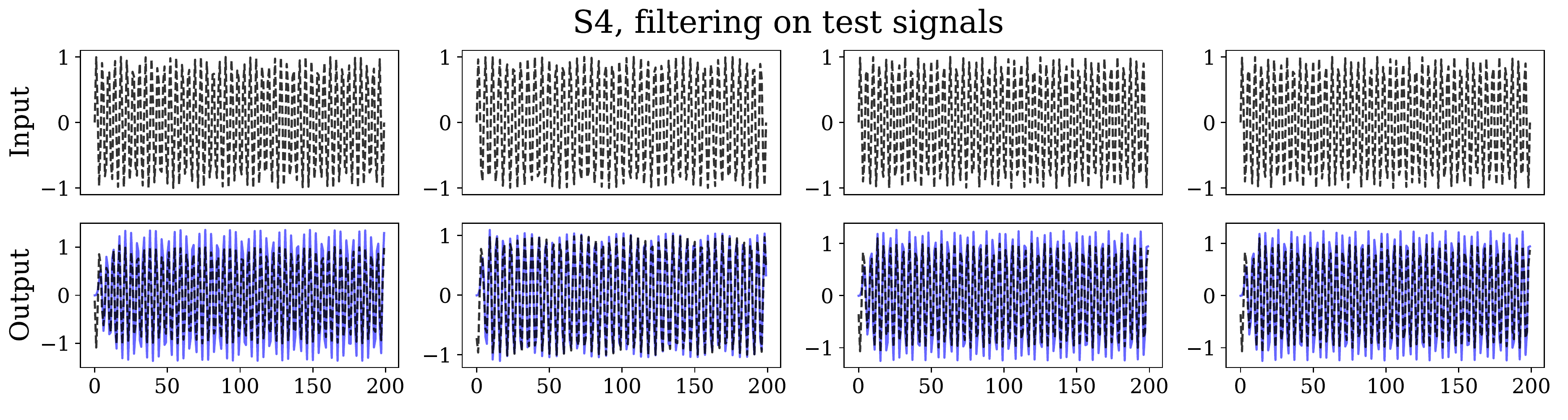}
    \includegraphics[scale=0.35]{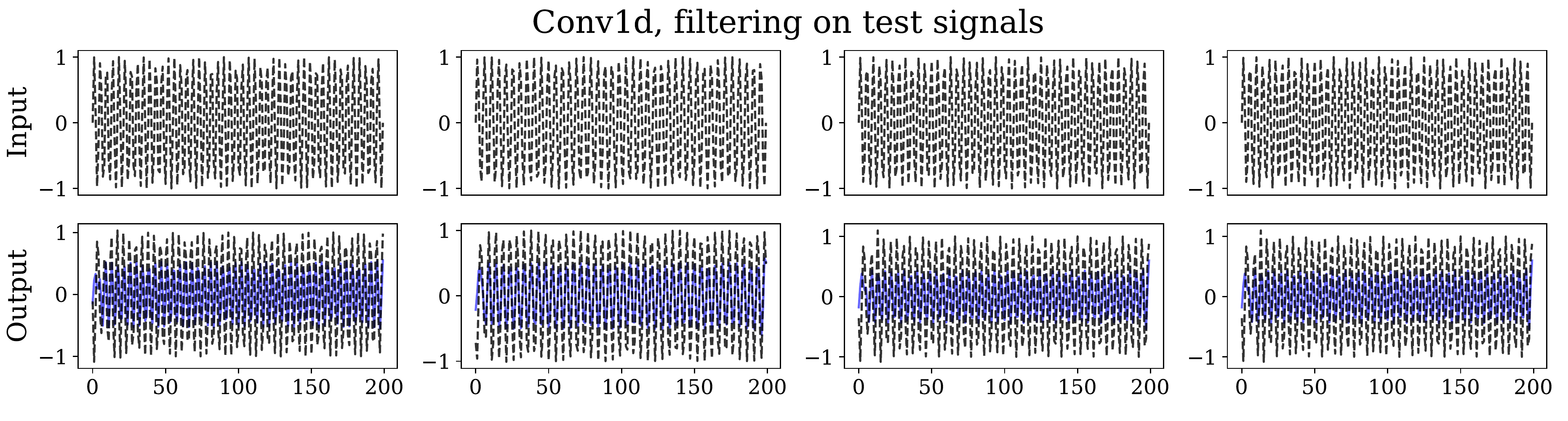}
    \includegraphics[scale=0.35]{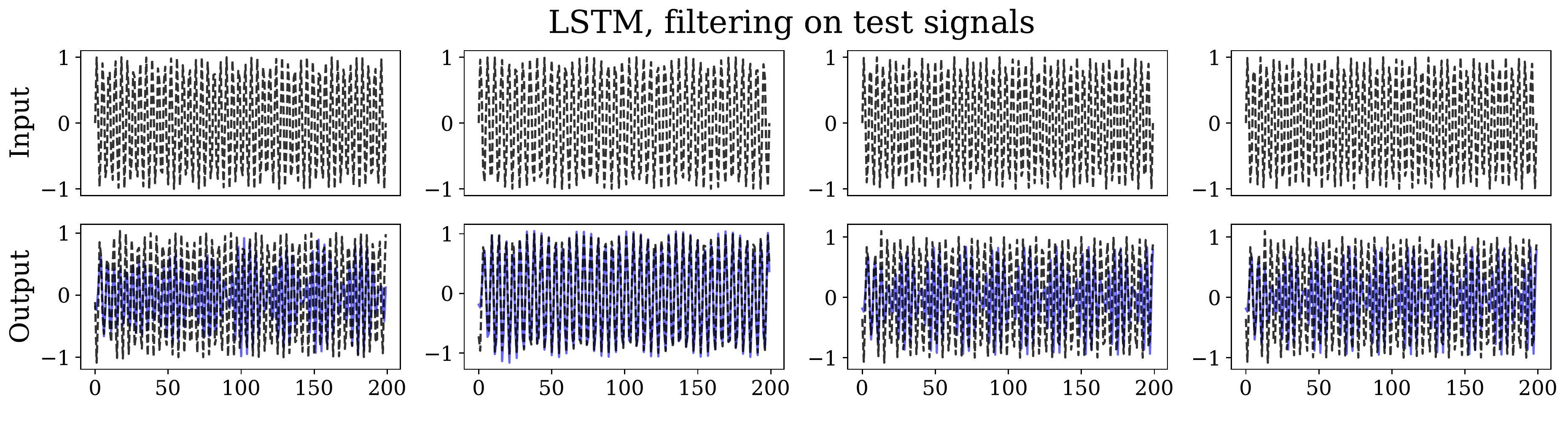}
    \includegraphics[scale=0.35]{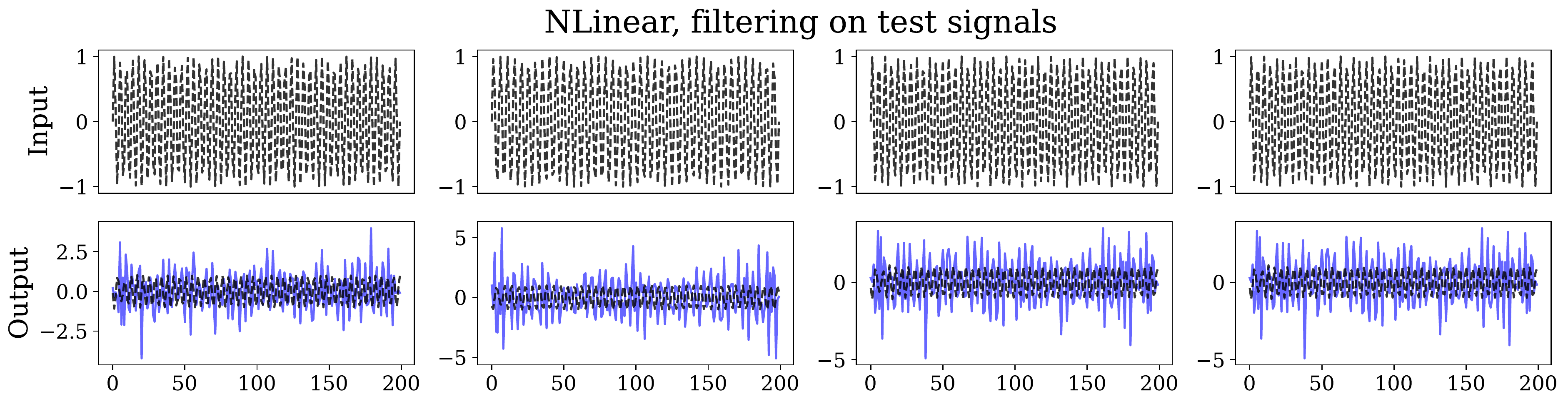}
    \includegraphics[scale=0.35]{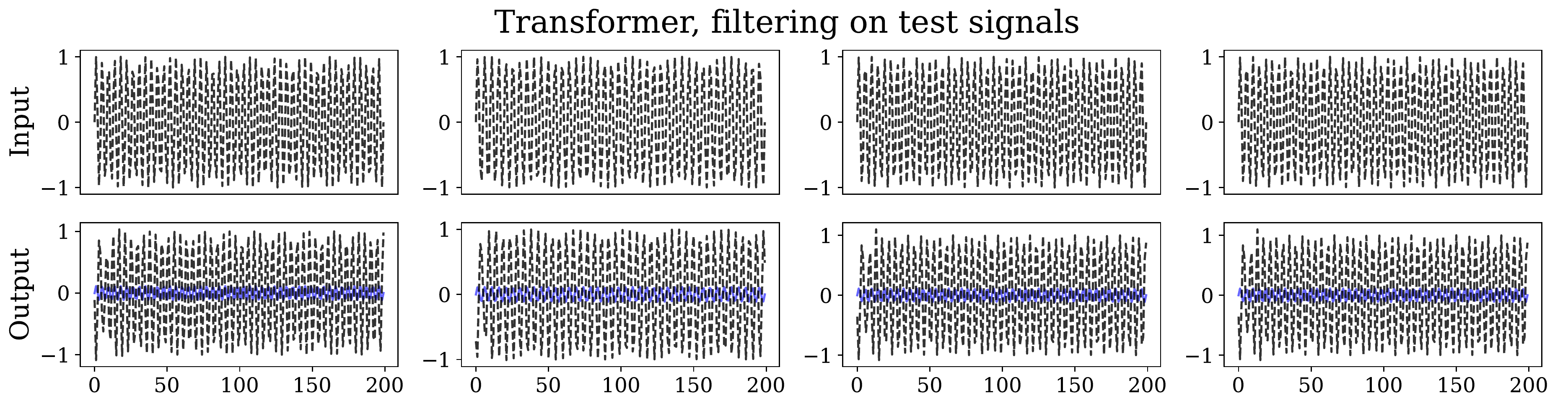}
    \caption{Testing the capability of different sequence--to--sequence models to approximate the input--output map of digital filters. In blue, we show the output signal filtered by each model. The ground--truth digital filter is a Butterworth of order $10$.}
    \label{fig:dsp_synthetic}
\end{figure}

\begin{sidewaystable}[h]
    \tiny
    \centering
    \caption{Monash forecasting. Test RMSE of \ourmethod for each dataset (best result selected via validation RMSE, average of $3$ runs).}
    \label{tab:monash}
    \begin{tabular}{c|c|cccccc|ccccc}
    \toprule
    Dataset  & \ourmethod & SES & Theta & TBATS & ETS & (DHR-)ARIMA & PR & CatBoost & DeepAR & N-BEATS & WaveNet & Transformer \\
    \midrule
    M1 Yearly & \underline{135508.3} & 193829.5 & 171458.1 & \textbf{116850.9} & 167739.0 & 175343.8 & 152038.7 & 237644.5 & 173075.1 & 192489.8 & 312821.8 & 182850.6 \\
    
    M1 Quarterly & \underline{2200.3} & 2545.7&	2282.7	&2673.9&	2408.5&	2538.5&	1909.3&	2161.0& 2313.3&	2267.3&	2271.7&	2231.5  \\    
    
    M1 Monthly & 2601.1 & 2725.8 & 2564.9 & 2594.5 & 2264.0 & 2450.6 & 2478.8 & 2461.7 & 2202.2 & \textbf{2183.4} & 2578.9 & 3129.8  \\

    M3 Yearly & 1412.4 & 1172.9&	1106.1&	1386.3	&1189.2	&1662.2	&1181.8&	1341.7&		1157.9&	1117.4&	1147.6&	1084.8 \\
    
    M3 Quarterly & 676.1 & 670.6&	567.7&	653.6&	598.7&	650.8&	605.5&	698.0&	606.6&	582.8&	606.8&	819.2\\ 
    
    M3 Monthly & 897.12 & 893.9	&754.0&	765.2	&755.3&	790.8&	830.0&	874.2&	873.7&	796.9&	845.3&	948.4 \\
    
    M3 Other & 265.56 & 309.7	&242.1&	217.0&	224.1	&220.8	&262.3&	349.9&	277.7&	248.5&	277.0	&271.0 \\
    
    M4 Quarterly & 718.2 & 732.8&	673.2&	672.7&	674.3&	710.0&	711.9&	714.2&	700.3&	684.7&	697.0&	739.1 \\
    
    M4 Monthly & 1092.2 & 755.5	&683.7&	743.4&	705.7&	702.1&	720.5&	734.8&	740.3&	705.2&	787.9&	902.4\\ 
    
    M4 Weekly & \textbf{348.3}& 412.6	&405.2&	356.7&	408.5	&386.3&	350.3&	420.8&	422.2&	330.8&	437.3&	456.9\\ 
    
    M4 Daily &\textbf{183.2}& 209.8&	210.4&	\underline{208.4}&	230.0	&212.6&	213.0&	263.1	&343.5&	221.7&	220.5&	233.6\\ 
    
    M4 Hourly & \textbf{255.2} & 1476.8	&1483.7	&469.9&	3830.4&	1563.1&	\underline{313.0} &	344.6&	1095.1&	501.2&	468.1&	391.2\\
    
    Tourism Yearly & \textbf{74799.2}& 106665.2&	99914.2&	105799.4&	104700.5&	106082.6&	89645.6	&87489.0&	78470.7&	78241.7	&77581.3&	80089.3\\
    
    Tourism Quarterly & 11608.32& 15000.0&	9254.6&	12001.5	&10812.3	&12564.8	&11746.9	&12788.0	&11762.0&	11306.0	&11546.6	&11724.1 \\
    
    Tourism Monthly & 3181.2& 7039.4&	2702.0	&3661.5	&2543.0	&3132.4&	2739.4&	3102.8&	2359.9&	2596.2&	2694.2&	2660.1\\
    
    Pedestrian & 69.6& 228.1&	228.2	&261.3&	278.3&	820.3&	61.8&	60.8&	65.8&	99.3&	68.0&	70.2\\
    
    Weather & \textbf{2.7}& 2.9	&3.3&	2.9&	3.0&	3.1	&9.1&	3.1&	\textbf{2.7}	&3.1&	3.0	&2.8\\
    
    NN5 Weekly & \textbf{16.9}& 18.8&	18.7&	18.5&	18.8&	18.6&	18.6&	18.7&	18.5&	17.4&	24.2&	24.0 \\
    
    Solar $10$ min &7.4& 7.2&	7.2	&10.7&	7.2	&5.6&	7.2	&8.7&	7.2	&6.6&	8.0&	7.2\\
    
    Solar Weekly &1423.7& 1331.3&	1341.6&	1049.0&	1264.4&	967.9&	1168.2&	1754.2&	873.6&	1307.8&	2569.3&	693.8\\
    
    Electricity Hourly & \textbf{475.1} & 1026.3&	1026.4&	743.4&	1524.9&	1082.4&	689.9&	582.7&	\underline{478.0} &	510.9&	489.9&	514.7\\
    
    Electricity Weekly &37802.2& 77067.9&	76935.6&	28039.7	&70369.0&	32594.8&	47802.1&	37289.7	&53100.3&	35576.8	&63916.9&	78894.7\\
    
    Fred-MD &3743.6& 3103.0&	3898.7&	2295.7&	2341.7	&3312.5&	9736.9&	2679.4&	4638.7&	2813.0&	2779.5&	5098.9\\
    
    Traffic Hourly &0.03& 0.04&	0.04	&0.05&	0.04	&0.04&	0.03	&0.03&	0.02&	0.02	&0.03&	0.02\\
    
    Traffic Weekly &\textbf{1.3}& 1.5&	1.5	&1.5&	1.5	&1.5&	1.5&	1.5	&1.5&	1.4	&1.6&	1.9\\
    
    Hospital &40.1& 26.6&	22.6&	21.3&	22.0	&23.7&	23.5&	23.5&	22.0&	24.2&	23.4&	40.5\\
    
    Covid &490.1& 403.4&	370.1&	113.0&	102.1&	100.5&	394.1&	607.9&	230.5&	186.5&	1135.4&	480.0\\
    
    Saugeen & \textbf{24.0} & 39.8	&39.8	&42.6&	50.4&	43.2&	47.7&	\underline{39.3} &	45.3&	48.9&	43.0&	49.1\\
    
    US Births &630.2& 1369.5&	735.5&	606.5&	607.2&	705.5&	732.1&	618.4&	684.0&	627.7&	768.8&	686.5\\
    
    Sunspot &3.1& 5.0	&5.0&	3.0	&5.0&	3.0	&4.0&	2.4	&1.1&	14.5&	0.7	&0.5\\
    
    Car Parts & 0.64 & 0.71&	0.65	&0.71&	0.71&	0.71&	\underline{0.58}&	0.71&	\textbf{0.50}&	1.0&	\underline{0.58}&	0.5\\
    
    Vehicle Trips &30.4& 36.5&	37.4&	25.7&	37.6&	35.0&	31.7&	27.3&	26.5&	33.6&	29.0&	33.0\\
    \bottomrule
    \end{tabular}
\end{sidewaystable}

\end{document}